\newcommand{\nosemic}{\renewcommand{\@endalgocfline}{\relax}}
\newcommand{\dosemic}{\renewcommand{\@endalgocfline}{\algocf@endline}}
\newtheorem{theorem}{Theorem}
\newtheorem{lemma}{Lemma}
\newtheorem{assumption}{Assumption}
\theoremstyle{definition}
\begin{document}

\title{Enhance Diffusion to Improve Robust Generalization}

\author{Jianhui Sun}
\affiliation{
\institution{
University of Virginia, js9gu@virginia.edu
}
\country{}
}

\author{Sanchit Sinha}
\affiliation{
\institution{
University of Virginia, ss7mu@virginia.edu
}
\country{}
}

\author{Aidong Zhang}
\affiliation{
\institution{
University of Virginia, aidong@virginia.edu
}
\country{}
}

\renewcommand{\shortauthors}{Jianhui Sun et al.}

\begin{abstract}
 Deep neural networks are susceptible to human imperceptible adversarial perturbations. One of the strongest defense mechanisms is \emph{Adversarial Training} (AT). In this paper, we aim to address two predominant problems in AT. First, there is still little consensus on how to set hyperparameters with a performance guarantee for AT research, and customized settings impede a fair comparison between different model designs in AT research. Second, the robustly trained neural networks struggle to generalize well and suffer from tremendous overfitting. This paper focuses on the primary AT framework - Projected Gradient Descent Adversarial Training (PGD-AT). We approximate the dynamic of PGD-AT by a continuous-time Stochastic Differential Equation (SDE), and show that the diffusion term of this SDE determines the robust generalization. An immediate implication of this theoretical finding is that robust generalization is positively correlated with the ratio between learning rate and batch size. We further propose a novel approach, \emph{Diffusion Enhanced Adversarial Training} (DEAT), to manipulate the diffusion term to improve robust generalization with virtually no extra computational burden. We theoretically show that DEAT obtains a tighter generalization bound than PGD-AT. Our empirical investigation is extensive and firmly attests that DEAT universally outperforms PGD-AT by a significant margin.
\end{abstract}

\begin{CCSXML}
<ccs2012>
<concept>
<concept_id>10003752.10010070.10010071.10010072</concept_id>
<concept_desc>Theory of computation~Sample complexity and generalization bounds</concept_desc>
<concept_significance>500</concept_significance>
</concept>
<concept>
<concept_id>10010147.10010257.10010258.10010261.10010276</concept_id>
<concept_desc>Computing methodologies~Adversarial learning</concept_desc>
<concept_significance>500</concept_significance>
</concept>
<concept>
<concept_id>10002950.10003648.10003649.10003656</concept_id>
<concept_desc>Mathematics of computing~Stochastic differential equations</concept_desc>
<concept_significance>300</concept_significance>
</concept>
</ccs2012>
\end{CCSXML}

\ccsdesc[500]{Theory of computation~Sample complexity and generalization bounds}
\ccsdesc[500]{Computing methodologies~Adversarial learning}
\ccsdesc[300]{Mathematics of computing~Stochastic differential equations}

\keywords{Adversarial Training (AT), Projected Gradient Descent Adversarial Training (PGD-AT), Robust Generalization, Stochastic Differential Equation (SDE), Diffusion Enhanced Adversarial Training (DEAT)}

\maketitle

\section{Introduction}

\label{sec:intro}

Despite achieving surprisingly good performance in a wide range of tasks, deep learning models have been shown to be vulnerable to adversarial attacks which add human imperceptible perturbations that could significantly jeopardize the model performance \cite{Goodfellow2015ExplainingAH}. Adversarial training (AT), which trains deep neural networks on adversarially perturbed inputs instead of on clean data, is one of the strongest defense strategies against such adversarial attacks.

This paper mainly focuses on the primary AT framework - Projected Gradient Descent Adversarial Training (PGD-AT) \cite{madry2018towards}. Though many new improvements \footnote{e.g., training tricks including early stopping w.r.t. the training epoch \cite{Rice2020OverfittingIA}, and label smoothing \cite{Shafahi2019AdversarialTF} prove to be useful to improve robustness.} have been proposed on top of PGD-AT to be better tailored to the needs of different domains, or to alleviate the heavy computational burden \cite{tramer2018ensemble, wang2018a, Mao2019MetricLF, Carmon2019UnlabeledDI, hendrycks2019pretraining, Shafahi2019AdversarialTF, zhang2019you, Wong2020Fast}, PGD-AT at its vanilla version is still the default choice in most scenarios due to its compelling performances in several adversarial competitions \cite{Kurakin2018AdversarialAA, adversarial_vision_challenge}.

\subsection{Motivation}

\begin{figure}[t]

\centering
\includegraphics[width=7.5cm,height=6.3cm]{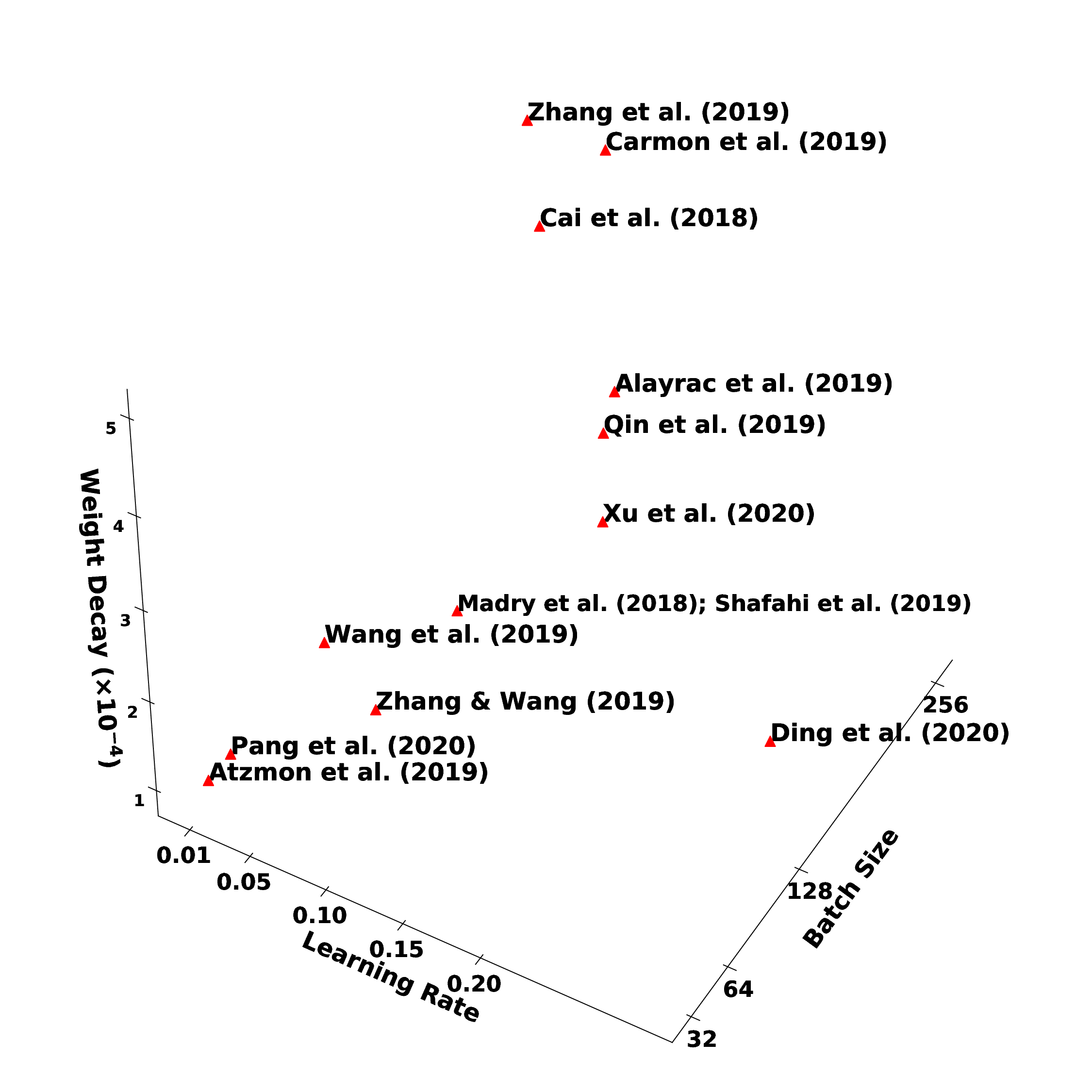}

\caption{\label{literature_fig} We summarize three key hyperparameters (learning rate, batch size, weight decay) used in a list of recent papers \cite{madry2018towards,Shafahi2019AdversarialTF,zhang2019you,Carmon2019UnlabeledDI,Cai2018Curriculum,Uesato2019AreLR,Qin2019AdversarialRT,Xu2020ExploringMR,pmlr-v97-wang19i,Zhang2019DefenseAA,Ding2020MMA,Pang2020Rethinking,atzmon2019controlling}. The hyperparameter specifications are highly inconsistent and a fair comparison is difficult in such condition as we will demonstrate in our empirical experiments, these hyperparameters make a huge difference in robust generalization.}

\end{figure}

This paper aims to address the following problems in AT:

\begin{enumerate}[leftmargin=*]
    \item[I.] \textbf{Inconsistent hyperparameter specifications impede a fair comparison between different model designs.} 
    
    Though the configuration of hyperparameters is known to play an essential role in the performance of AT, there is little consensus on how to set hyperparameters with a performance guarantee. For example in Figure \ref{literature_fig}, we plot a list of recent AT papers on the (learning rate, batch size, weight decay) space according to each paper's specification and we could observe that the hyperparameters of each paper are relatively different from each other with little consensus. Moreover, the completely customized settings make it extremely difficult to understand which approach really works, as the misspecification of hyperparameters would potentially cancel out the improvements from the methods themselves. Most importantly, the lack of theoretical understanding also exhausts practitioners with time-consuming tuning efforts.

    \item[II.] \textbf{The robust generalization gap in AT is surprisingly large.} 
    
    Overfitting is a dominant problem in adversarially trained deep networks \cite{Rice2020OverfittingIA}. To demonstrate that, we run both standard training (non-adversarial) and adversarial training on CIFAR10 with VGG \cite{Simonyan14VGG} and SENet \cite{CVPR18SENet}. Training curves are reported in Figure \ref{large_gap_fig}.
    We could observe the robust test accuracy is much lower than the standard test accuracy. Further training will continue to improve the robust training loss of the classifier, to the extent that robust training loss could closely track standard training loss \cite{Schmidt18MoreData}, but fail to further improve robust testing loss. Early stopping is advocated to partially alleviate overfitting \cite{Zhang2019TheoreticallyPT,Rice2020OverfittingIA}, but there is still huge room for improvement.

\end{enumerate}

\subsection{Contribution}

In this paper, to address the aforementioned problems, we consider PGD-AT as an alternating stochastic gradient descent. Motivated by the theoretical attempts which approximate the discrete-time dynamic of stochastic gradient algorithm with continuous-time Stochastic Differential Equation (SDE) \cite{Mandt17SGDApproximateBayesian,Li17SME,zhu2019anisotropic,He19ControlBatch}, we derive the continuous-time SDE dynamic for PGD-AT. The SDE contains a drift term and a diffusion term, and we further prove the diffusion term determines the robust generalization performance.

\begin{figure}[h]
\centering

\includegraphics[width=6.0cm]{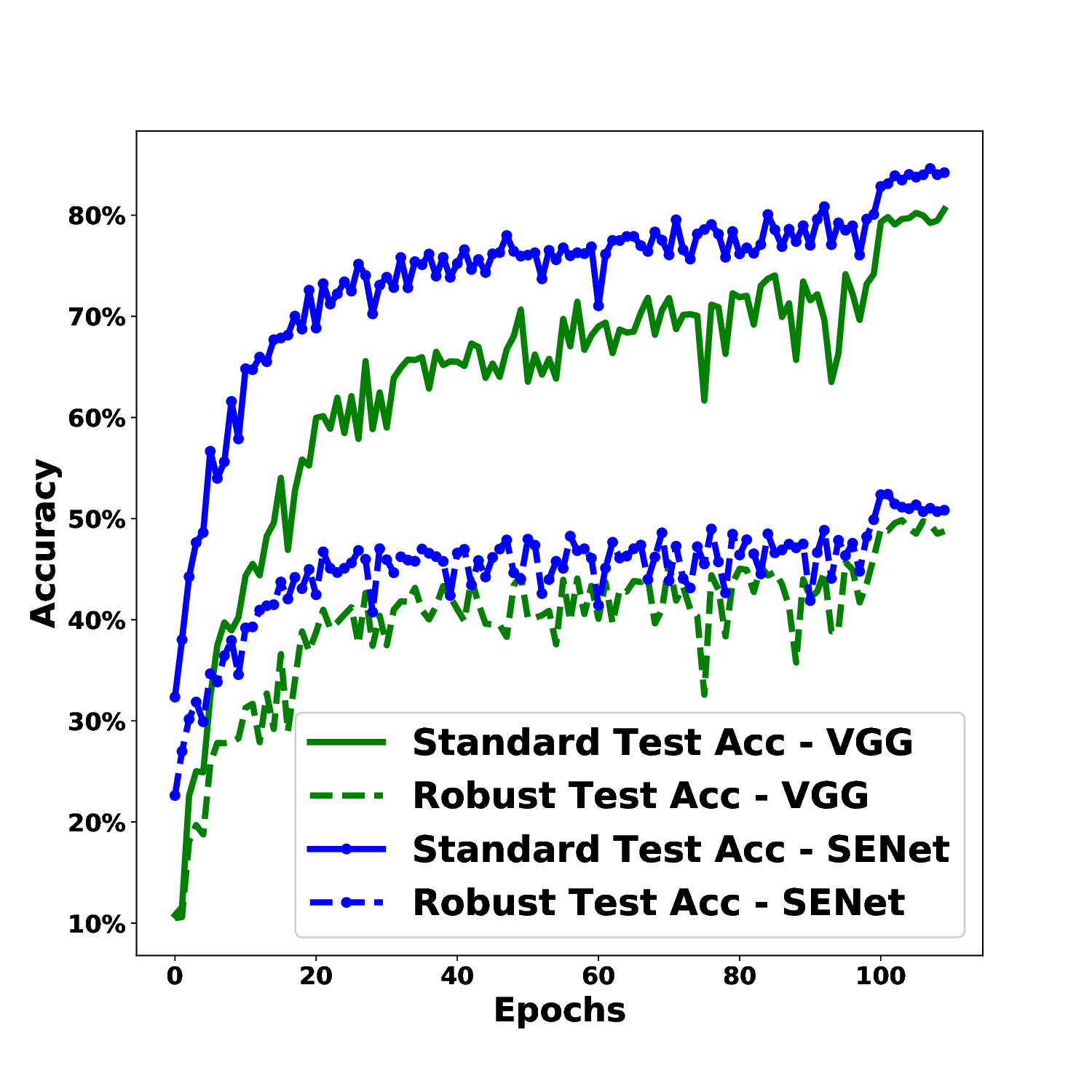}

\caption{\label{large_gap_fig} Classification accuracy for standard training (non-adversarial) and adversarial training on CIFAR10 with VGG and SENet. Table \ref{default_setting_table} summarizes the experimental setting. The adversarial test accuracy (dashed line) is far from the non-adversarial test accuracy (solid line) in both architectures. The generalization gap for the robust accuracy is significant and much larger than normal training. 
}

\end{figure}

As the diffusion term is determined by (A) ratio of learning rate $\alpha$ and batch size $b$ and (B) gradient noise, an immediate implication of our theorem is that the robust generalization has a positive correlation with the size of both (A) and (B). In other words,  we could improve robust generalization via scaling up (A) and (B).
Although it is fairly simple to scale up (A) by increasing $\alpha$ and decreasing $b$,  adjusting $\alpha$ and $b$ could be a double-edged sword. One reason is that small batch improves generalization while significantly increases training time. Considering the computational cost of adversarial training is already extremely expensive (e.g., the PGD-10 training of ResNet on CIFAR-10 takes several days on a single GPU), large batch training is apparently more desirable. $\alpha$ is allowed to increase only within a very small range to ensure convergence of AT algorithm.

To overcome the aforementioned limitations, we propose a novel algorithm, DEAT (\emph{Diffusion Enhanced Adversarial Training}), to instead adjust (B) to improve robust generalization (see Algorithm \ref{Diffusion_PGD_AT_algorithm}). Our approach adds virtually no extra computational burden, and universally achieves better robust testing accuracy over vanilla PGD-AT by a large margin. We theoretically prove DEAT achieves a tighter robust generalization gap. Our extensive experimental investigation strongly supports our theoretical findings and attests the effectiveness of DEAT. 

We summarize our contributions as follows:

\begin{enumerate}[leftmargin=*]
    \item[I.] Theoretically, we approximate PGD-AT with a continuous-time SDE, and prove the diffusion term of this SDE determines the robust generalization. The theorem guides us how to tune $\alpha$ and $b$ in PGD-AT. To our best knowledge, this is the first study that rigorously proves the role of hyperparameters in AT.
    
    \item[II.] Algorithmically, we propose a novel approach, DEAT (Diffusion Enhanced Adversarial Training), to manipulate the diffusion term with virtually no additional computational cost, and manage to universally improve over vanilla PGD-AT by a significant margin. We also theoretically show DEAT is guaranteed to generalize better than PGD-AT. Interestingly, DEAT also improves the generalization performance in non-adversarial tasks, which further verifies our theoretical findings.
    
\end{enumerate}

\textbf{Organization} In Section \ref{backgroun_section}, we formally introduce adversarial training and PGD-AT, which are pertinent to this work. In Section \ref{generalization_section}, we present our main theorem that derives the robust generalization bound of PGD-AT. In Section \ref{algorithm_section}, motivated by the theoretical findings and in recognition of the drawbacks in adjusting $\alpha$ and $b$, we present our novel DEAT (Diffusion Enhanced Adversarial Training). We theoretically show DEAT has a tighter generalization bound. In Section \ref{experimental_evidence}, we conduct extensive experiments to verify our theoretical findings and the effectiveness of DEAT. Related works are discussed in Section \ref{related_work}. Proofs of all our theorems and corollaries are presented in Appendix.

\section{Background: PGD-AT}
In this section, we formally introduce PGD-AT which is the main focus of this work.

\label{backgroun_section}

\textbf{Notation:} This paragraph summarizes the notation used throughout the paper. Let $\theta$, $\mathcal{D}$, and $l_\theta(x_i,y_i)$ be the trainable model parameter, data distribution, and loss function, respectively. Let $\{z_i=(x_i,y_i)\}_{i=1}^N$ denote the training set, and $\{x_i\}_{i=1}^N\subset\mathbb{R}^d$. Expected risk function is defined as $\mathcal{R}(\theta)\triangleq\mathbb{E}_{z\sim\mathcal{D}} l_\theta(z)$. Empirical risk $\mathcal{R}_\zeta(\theta)$ is an unbiased estimator of the expected risk function, and is defined as $\mathcal{R}_\zeta(\theta)\triangleq \frac{1}{b}\sum_{j\in\zeta} \mathcal{R}_j(\theta)$, where $\mathcal{R}_j(\theta)\triangleq l_\theta(z_j)$ is the contribution to risk from $j$-th data point. $\zeta$ represents a mini-batch of random samples and $b\triangleq\lvert\zeta\rvert$ represents the batch size. Similarly, we define $\nabla_{\theta}\mathcal{R}$, $\nabla_{\theta}\mathcal{R}_j$, and $\nabla_{\theta}\mathcal{R}_{\zeta}$ as their gradients, respectively. We denote the empirical gradient as $\hat{g}(\theta)\triangleq\nabla_{\theta}\mathcal{R}_{\zeta}$ and exact gradient as $g(\theta)\triangleq\nabla_{\theta}\mathcal{R}$ for the simplicity of notation. 

In standard training, most learning tasks could be formulated as the following optimization problem:
\begin{equation}
\label{normal_training_task}
    \begin{gathered}
    \min_{\theta}\mathcal{R}(\theta)=\min_{\theta}\mathbb{E}_{(x_i,y_i)\sim\mathcal{D}} l_\theta(x_i,y_i),
    \end{gathered}
\end{equation}

Stochastic Gradient Descent (SGD) and its variants are most widely used to optimize \eqref{normal_training_task}. SGD updates with the following rule:
\begin{equation}
\label{sgd_normal}
    \begin{gathered}
    \theta_{t+1}=\theta_t-\alpha_t s_t,
    \end{gathered}
\end{equation}
where $\alpha_t$ and $s_t$ are the learning rate and search direction at $t$-th step, respectively. SGD uses $\hat{g}_t\triangleq\hat{g}(\theta_t)$ as $s_t$. 

The performance of learning models, depends heavily on whether SGD is able to reliably find a solution of \eqref{normal_training_task} that could generalize well to unseen test instances.


An adversarial attacker aims to add a human imperceptible perturbation to each sample, i.e., transform $\{z_i=(x_i,y_i)\}_{i=1}^N$ to $\{\Tilde{z}_i=(\Tilde{x}_i=x_i+\delta_i,y_i)\}_{i=1}^N$, where perturbations $\{\delta_i\}_{i=1}^N$ are constrained by a pre-specified budget $\Delta$ ($\delta_i\in\Delta$), such that the loss $l_\theta(\Tilde{x}_i,y_i)$ is large. The choice of budget is flexible. A typical formulation is $\{\delta\in\mathbb{R}^d:\lVert\delta\rVert_p\leq\epsilon\}$ for $p=1,2,\infty$. In order to defend such attack, we resort to solving the following objective function:

\begin{equation}
\label{adversarial_training_task}
    \begin{gathered}
    \min_{\theta\in\mathbb{R}_\theta}\rho(\theta),~ \text{where}~\rho(\theta)=\mathbb{E}_{(x_i,y_i)\sim\mathcal{D}}[\max_{\delta_i\in\Delta}l_\theta(x_i+\delta_i,y_i)] 
    \end{gathered}
\end{equation}

Objective function \eqref{adversarial_training_task} is a composition of an inner maximization problem and an outer minimization problem. The inner maximization problem simulates an attacker who aims to find an adversarial version of a given data point $x_i$ that achieves a high loss, while the outer minimization problem is to find model parameters so that the “adversarial loss” given by the inner attacker is minimized. Projected Gradient Descent Adversarial Training (PGD-AT) \cite{madry2018towards} solves this min-max game by gradient ascent on the perturbation parameter $\delta$ before applying gradient descent on the model parameter $\theta$.

The detailed pseudocode of PGD-AT
is in Algorithm \ref{PGD_AT_algorithm}. Basically, projected gradient descent (PGD) is applied $K$ steps on the negative loss function to produce strong adversarial examples in the inner loop, which can be viewed as a multi-step variant of Fast Gradient Sign Method (FGSM) \cite{Goodfellow2015ExplainingAH}, while every training example is replaced with its PGD-perturbed counterpart in the outer loop to produce a model that an adversary could not find adversarial examples easily.

\begin{algorithm2e}
\SetAlgoVlined
\KwIn{Loss function $l_\theta(z_i)$, initialization $\theta_0$, total training steps $T$, PGD steps $K$, inner/outer learning rates $\alpha_{I}$/$\alpha_{O}$, batch size $b$, perturbation budget set $\Delta$;}
\SetAlgoLined
\For{$t\in\{1,2,...,T\}$}
{
    Sample a mini-batch of random examples $\zeta=\{(x_{i_j},y_{i_j})\}_{j=1}^{b}$\;
    Set $\delta_0=0,\hat{x}_j=x_{i_j}$\;
    \For{$k\in\{1,...,K\}$}
    {
    $\delta_k=\Pi_{\Delta}(\delta_{k-1}+\frac{\alpha_{I}}{b}\sum_{j=1}^b\nabla_x l_{\theta_{t-1}}(\hat{x}_j+\delta_{k-1},y_{i_j}))$\;
    }
    $\theta_t=\theta_{t-1}-\frac{\alpha_{O}}{b}\sum_{j=1}^b\nabla_\theta l_{\theta_{t-1}}(\hat{x}_j+\delta_{K},y_{i_j}))$\;
}
return $\theta_T$
\caption{PGD-AT (Projected Gradient Descent Adversarial Training) \cite{madry2018towards}}
\label{PGD_AT_algorithm}
\end{algorithm2e}

\section{Theory: Robust Generalization Bound of PGD-AT}
\label{generalization_section}

In this section, we describe our logical framework of deriving the robust generalization gap of PGD-AT, and then identify the main factors that determine the generalization. 

To summarize the entire section before we dive into details, we consider PGD-AT as an alternating stochastic gradient descent and approximate the discrete-time dynamic of PGD-AT with continuous-time Stochastic Differential Equation (SDE), which contains a drift term and a diffusion term, and we would show the diffusion term determines the robust generalization. Our theorem immediately points out the robust generalization has a positive correlation with the ratio between learning rate $\alpha$ and batch size $b$.

Let us first introduce our logical framework in Section \ref{roadmap_subsec} before we present main theorem in Section \ref{main_theorem_sec}.

\subsection{Roadmap to robust generalization bound}
\label{roadmap_subsec}

\vspace*{6pt}
\noindent\textbf{Continuous-time dynamics of gradient based methods}
\vspace*{6pt}

A powerful analysis tool for stochastic gradient based methods is to model the continuous-time dynamics with stochastic differential equations and then study its limit behavior \cite{Mandt17SGDApproximateBayesian,Li17SME,hu2018diffusion,He19ControlBatch,Sun21KDDHyperparameter,xie2021positive}.
\cite{Mandt17SGDApproximateBayesian} characterizes the continuous-time dynamics of using a constant step size SGD \eqref{sgd_normal} to optimize normal training task \eqref{normal_training_task}.

\begin{lemma}[\cite{Mandt17SGDApproximateBayesian}]
\label{sgd_sde_lemma}
Assume the risk function \footnote{Without loss of generality, we assume the minimum of the risk function is at $\theta = 0$, as we could always translate the minimum to 0.} is locally quadratic, and gradient noise is Gaussian with mean 0 and covariance $\frac{1}{b}H$, and $H=BB^T$ for some $B$. The following two statements hold,
\begin{enumerate}[leftmargin=*]

\item[I.] Constant-step size SGD \eqref{sgd_normal} could be recast as a discretization of the following continuous-time dynamics:

\begin{equation}
\label{sgd_sde_formation}
    \begin{gathered}
     d\theta=-\alpha g(\theta)dt+\frac{\alpha}{\sqrt{b}}BdW_t
     \end{gathered}
\end{equation}
where $dW_t=\mathcal{N}(0,Idt)$ is a Wiener process.

\item[II.] The stationary distribution of stochastic process \eqref{sgd_sde_formation} is Gaussian and its covariance matrix $Q$ is explicit.
\end{enumerate}

\end{lemma}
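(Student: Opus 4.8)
The plan is to prove Part I by a weak (Euler--Maruyama) approximation argument, and Part II by recognizing the linearized SDE as a multivariate Ornstein--Uhlenbeck process and solving the associated Lyapunov equation.

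\textbf{Part I.} First I would rewrite one step of constant-step-size SGD \eqref{sgd_normal} by splitting the stochastic gradient into its mean and fluctuation: $\theta_{t+1}-\theta_t = -\alpha g(\theta_t) - \alpha\bigl(\hat g(\theta_t)-g(\theta_t)\bigr)$. Set $\xi_t := \hat g(\theta_t)-g(\theta_t)$. Since $\mathcal{R}_\zeta$ is an unbiased estimator of $\mathcal{R}$ we have $\mathbb{E}[\xi_t]=0$, and by the stated assumption $\mathrm{Cov}(\xi_t)=\tfrac1b H$; a central-limit heuristic over the $b$ i.i.d.\ per-sample gradients in the mini-batch then makes $\xi_t$ approximately $\mathcal{N}(0,\tfrac1b H)$, so $\xi_t \approx \tfrac{1}{\sqrt b}B\eta_t$ with $\eta_t\sim\mathcal{N}(0,I)$ and $H=BB^{T}$. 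Writing the recursion with a nominal time step $\Delta t=1$, $\theta_{t+1}-\theta_t = -\alpha g(\theta_t)\,\Delta t - \tfrac{\alpha}{\sqrt b}B\sqrt{\Delta t}\,\eta_t$, which is exactly the Euler--Maruyama discretization of \eqref{sgd_sde_formation}; matching coefficients identifies the drift as $-\alpha g(\theta)$ and the diffusion coefficient as $\tfrac{\alpha}{\sqrt b}B$. At the level of this lemma it suffices to exhibit this discretization; a more careful statement would track its weak order of accuracy in $\alpha$.

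\textbf{Part II.} Next I would linearize around the minimizer. By the local-quadratic assumption together with the normalization that the minimum lies at $\theta=0$, write $\mathcal{R}(\theta)\approx \tfrac12\theta^{T}A\theta$ with $A:=\nabla^2\mathcal{R}(0)\succ 0$, so $g(\theta)\approx A\theta$ and \eqref{sgd_sde_formation} becomes the linear SDE
\begin{equation}
 d\theta = -\alpha A\theta\,dt + \frac{\alpha}{\sqrt b}B\,dW_t ,
\end{equation}
a multivariate Ornstein--Uhlenbeck process whose drift matrix $\alpha A$ is positive definite, hence it is ergodic with a unique Gaussian stationary law $\mathcal{N}(0,Q)$. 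Applying It\^o's formula to $\theta\theta^{T}$, taking expectations, and setting the time derivative of $\mathbb{E}[\theta\theta^{T}]$ to zero at stationarity yields the continuous Lyapunov equation
\begin{equation}
 \alpha A Q + \alpha Q A^{T} = \frac{\alpha^{2}}{b}BB^{T} = \frac{\alpha^{2}}{b}H ,
\end{equation}
i.e.\ $AQ+QA=\tfrac{\alpha}{b}H$, which, since $A\succ0$, has the unique positive-definite solution $Q = \int_0^{\infty} e^{-\alpha A s}\,\tfrac{\alpha^{2}}{b}H\,e^{-\alpha A s}\,ds$ (reducing to $Q=\tfrac{\alpha}{2b}A^{-1}H$ when $A$ and $H$ commute). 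This is the explicit stationary covariance claimed in Part II.

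\textbf{Main obstacle.} The delicate step is Part I: passing from the discrete recursion to \eqref{sgd_sde_formation} is only a \emph{weak} approximation, and the Gaussianity of the gradient noise is not exact for finite $b$ --- it is justified either by a mini-batch CLT or, as in the lemma's hypotheses, simply posited. Once that modeling step is granted, Part II is routine linear-SDE theory; the only points needing care are the existence and uniqueness of the positive-definite solution to the Lyapunov equation (guaranteed by $A\succ0$) and the validity of the local quadratic approximation near the optimum.
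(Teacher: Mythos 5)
Your proposal is correct and follows essentially the same route the paper relies on: the paper imports this lemma from \cite{Mandt17SGDApproximateBayesian} without a dedicated proof, but the appendix proof of Theorem \ref{pgd_at_generalization_theorem} uses exactly your machinery (match the SGD recursion to an Euler--Maruyama discretization via the moments of the one-step difference, linearize to an Ornstein--Uhlenbeck process, and obtain the stationary covariance from the Lyapunov equation $AQ+QA=\tfrac{\alpha}{b}H$). Your derivation of the Lyapunov equation via It\^o's formula on $\theta\theta^{T}$ is an equivalent alternative to the paper's verification through the explicit OU solution, and your closed form $Q=\tfrac{\alpha}{2b}A^{-1}H$ in the commuting case agrees with the cited result.
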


$\alpha g(\theta)$ and $\frac{\alpha}{\sqrt{b}}B$ are referred to as drift and diffusion, respectively. Many variants of SGD (e.g. heavy ball momentum \cite{POLYAK1964HeavyBall} and Nesterov's accelerated gradient \cite{Nesterov1983NAG}) can also be cast as a modified version of \eqref{sgd_sde_formation}, and we could explicitly write out its stationary distribution as well \cite{Gitman19Momentum}. 

Next we will discuss PAC-Bayesian generalization bound and how it connects to the SDE approximation.

\vspace*{6pt}
\noindent\textbf{PAC-Bayesian generalization bound}
\vspace*{6pt}

Bayesian learning paradigm studies a distribution of every possible setting of model parameter $\theta$ instead of betting on one single setting of parameters to manage model uncertainty, and has proven increasingly powerful in many applications. In Bayesian framework, $\theta$ is assumed to follow some prior distribution $P$ (reflects the prior knowledge of model parameters), and at each iteration of SGD \eqref{sgd_normal} (or any other stochastic gradient based algorithm), the $\theta$ distribution shifts to $\{Q_t\}_{t\ge0}$, and converges to posterior distribution $Q$ (reflects knowledge of model parameters after learning with $\mathcal{D}$). 

Bayesian risk function is defined as $\mathcal{R}(Q)\triangleq\mathbb{E}_{\theta\sim Q}\mathbb{E}_{(x,y)\sim\mathcal{D}} l(f_\theta(x),y)$, and $\hat{\mathcal{R}}(Q)\triangleq\mathbb{E}_{\theta\sim Q}\frac{1}{N}\sum_{j=1}^N l(f_\theta(x_j),y_j)$. $\mathcal{R}(Q)$ is the population risk, while $\hat{\mathcal{R}}(Q)$ is the risk evaluated on the training set and $N\triangleq\lvert\mathcal{D}\rvert$ is the sample size. The generalization bound could therefore be defined as follows:
\begin{gather}
\label{simple_gen_bound_def}
    \mathcal{E}\triangleq\lvert\mathcal{R}(Q)-\hat{\mathcal{R}}(Q)\rvert.
\end{gather}

The following lemma connects generalization bound to the stationary distribution of a stochastic gradient algorithm.

\begin{lemma}[PAC-Bayesian Generalization Bound \cite{Williamson97PAC-Bayes,McAllester98PAC-Bayes}]
\label{PAC_Bayes_theorem}
Let $\text{KL}(Q||P)$ be the Kullback-Leibler divergence between distributions $Q$ and $P$. For any positive real $\varepsilon\in(0,1)$, with probability at least $1-\varepsilon$ over a sample of size $N$, we have the following inequality for all distributions $Q$:
\begin{gather}
    \lvert\mathcal{R}(Q)-\hat{\mathcal{R}}(Q)\rvert\le\sqrt{\frac{\text{KL}(Q||P)+\log\frac{1}{\varepsilon}+\log N+2}{2N-1}}
\end{gather}
\end{lemma}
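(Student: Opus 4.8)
The plan is to reproduce the classical McAllester-style argument behind this bound in three moves: a change-of-measure (Donsker--Varadhan) step that transfers control from the fixed prior $P$ to an arbitrary posterior $Q$, a per-hypothesis concentration estimate from Hoeffding's inequality that bounds the resulting prior-averaged moment generating function, and a final Jensen step that converts a $Q$-average of squared deviations into the squared deviation of the $Q$-averaged risk. It is convenient to write $\mathcal{R}(\theta)\triangleq\mathbb{E}_{(x,y)\sim\mathcal{D}}\,l(f_\theta(x),y)$ and $\hat{\mathcal{R}}(\theta)\triangleq\frac1N\sum_{j=1}^N l(f_\theta(x_j),y_j)$, so that $\mathcal{R}(Q)$ and $\hat{\mathcal{R}}(Q)$ are exactly the $Q$-averages of $\mathcal{R}(\theta)$ and $\hat{\mathcal{R}}(\theta)$. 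Throughout I would use the standard hypothesis that the loss takes values in $[0,1]$; this is what makes Hoeffding applicable and is implicit in a bound of this form.

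First I would invoke the variational representation of the KL divergence: for a fixed training sample of size $N$ and any measurable function $\phi$ of $\theta$, every $Q$ absolutely continuous with respect to $P$ satisfies
\[
  \mathbb{E}_{\theta\sim Q}[\phi(\theta)] \;\le\; \text{KL}(Q\|P) + \log\mathbb{E}_{\theta\sim P}\!\left[e^{\phi(\theta)}\right].
\]
I would apply this with $\phi(\theta) = (2N-1)\bigl(\mathcal{R}(\theta)-\hat{\mathcal{R}}(\theta)\bigr)^2$, which for a fixed sample is a deterministic function of $\theta$. The key structural point is that the right-hand side depends on the sample only through $\Phi \triangleq \mathbb{E}_{\theta\sim P}\!\left[e^{\phi(\theta)}\right]$ and not at all on $Q$; hence if I can bound $\Phi$ with high probability over the draw of the sample, the resulting inequality holds for all $Q$ simultaneously, which is exactly what the lemma asserts.

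To control $\Phi$, note that $\mathbb{E}_S[\Phi] = \mathbb{E}_{\theta\sim P}\,\mathbb{E}_S\bigl[e^{(2N-1)(\mathcal{R}(\theta)-\hat{\mathcal{R}}(\theta))^2}\bigr]$ by Tonelli, since $P$ is fixed a priori. For each fixed $\theta$, $\hat{\mathcal{R}}(\theta)$ is an average of $N$ i.i.d.\ $[0,1]$-valued terms with mean $\mathcal{R}(\theta)$, so Hoeffding gives $\Pr_S[(\mathcal{R}(\theta)-\hat{\mathcal{R}}(\theta))^2\ge s]\le 2e^{-2Ns}$; writing the moment generating function as $1+\lambda\int_0^\infty e^{\lambda s}\Pr_S[(\mathcal{R}(\theta)-\hat{\mathcal{R}}(\theta))^2\ge s]\,ds$ and taking $\lambda = 2N-1$ makes the exponent in the integrand collapse to $-s$, so the integral is $1 + 2(2N-1) = 4N-1 \le N e^2$. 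Thus $\mathbb{E}_S[\Phi]\le N e^2$, and since $\Phi\ge 0$, Markov's inequality gives $\log\Phi \le \log N + 2 + \log\tfrac1\varepsilon$ with probability at least $1-\varepsilon$ over the sample. On that event the change-of-measure inequality yields, simultaneously for all $Q$,
\[
  \mathbb{E}_{\theta\sim Q}\bigl[(2N-1)(\mathcal{R}(\theta)-\hat{\mathcal{R}}(\theta))^2\bigr] \;\le\; \text{KL}(Q\|P) + \log\tfrac1\varepsilon + \log N + 2.
\]
Convexity of $x\mapsto x^2$ then gives $(\mathcal{R}(Q)-\hat{\mathcal{R}}(Q))^2 = \bigl(\mathbb{E}_{\theta\sim Q}[\mathcal{R}(\theta)-\hat{\mathcal{R}}(\theta)]\bigr)^2 \le \mathbb{E}_{\theta\sim Q}[(\mathcal{R}(\theta)-\hat{\mathcal{R}}(\theta))^2]$; dividing by $2N-1$ and taking square roots gives the claimed bound, with the absolute value automatic since a square root is nonnegative.

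I expect the only delicate step to be the moment-generating-function estimate, because it is precisely what produces the $\log N + 2$ in the numerator, so the constants matter. In particular, the coefficient $2N-1$ is not cosmetic: it is exactly the choice that makes the geometric-type integral $\tfrac{2\lambda}{2N-\lambda}$ both finite and small, and it is why the denominator of the final bound is $2N-1$ rather than $2N$. A slightly sharper route, which is the one underlying the cited references, replaces $(\mathcal{R}(\theta)-\hat{\mathcal{R}}(\theta))^2$ by the binary relative entropy $\mathrm{kl}(\hat{\mathcal{R}}(\theta)\|\mathcal{R}(\theta))$, uses the refined estimate $\mathbb{E}_S\bigl[e^{(2N-1)\,\mathrm{kl}(\hat{\mathcal{R}}(\theta)\|\mathcal{R}(\theta))}\bigr]\le 2\sqrt N$, and finishes with Pinsker's inequality $\mathrm{kl}(q\|p)\ge 2(q-p)^2$; but the plain Hoeffding version above already delivers the stated inequality.
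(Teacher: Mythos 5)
Your derivation is correct, and in fact the paper offers no proof of this lemma at all --- it is quoted as a known result from the cited PAC-Bayes literature, so there is nothing in the paper to compare against. Your reconstruction is the standard McAllester-style argument (Donsker--Varadhan change of measure with $\phi(\theta)=(2N-1)(\mathcal{R}(\theta)-\hat{\mathcal{R}}(\theta))^2$, a Hoeffding-based bound $\mathbb{E}_S[\Phi]\le 4N-1\le Ne^2$ via the tail-integral representation of the moment generating function, Markov's inequality, and Jensen), and each step checks out, including the constant bookkeeping that produces $\log N+2$ over $2N-1$. The only caveat worth recording is the one you already flag: the bound requires the loss to take values in $[0,1]$ (or a fixed bounded range), an assumption the lemma statement leaves implicit.
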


Let $\mathcal{G}$ denote $\sqrt{\frac{\text{KL}(Q||P)+\log\frac{1}{\varepsilon}+\log N+2}{2N-1}}$, which is an upper bound of generalization error, i.e. $\mathcal{G}$ is an upper bound of $\mathcal{E}$ in \eqref{simple_gen_bound_def}. The prior $P$ is typically assumed to be a Gaussian prior $\mathcal{N}(\theta_0,\lambda_0 I_d)$, reflecting the common practice of Gaussian initialization \cite{Du2019Optim} and $L_2$ regularization, and posterior distribution $Q$ is the stationary distribution of the stochastic gradient algorithm under study.

The importance of Lemma \ref{PAC_Bayes_theorem} is that, we could easily get an upper bound of generalization bound if we could explicitly represent $\text{KL}(Q||P)$. Recall Lemma \ref{sgd_sde_lemma} gives the exact form of $Q$ for SGD, and therefore, naturally results in a generalization bound.

\subsection{Robust generalization of PGD-AT}
\label{main_theorem_sec}

SGD \eqref{sgd_normal} can be viewed as a special example (by setting the total steps of PGD attack $K=0$) of PGD-AT (Algorithm \ref{PGD_AT_algorithm_revised}) \footnote{As a matter of fact, all our theoretical findings in this paper also apply for non-AT settings, i.e., ordinary learning tasks without adversarial attacks. Interestingly, our proposed approach DEAT is not only capable of improving robust generalization in AT, but also improving generalization performance in ordinary learning tasks by setting $K=0$ compared to SGD. Please refer to Section \ref{subsec:deat_non_at} for more details.}. PGD-AT is also a stochastic gradient based iterative updating process. Therefore, a natural question arises:

\begin{center}
    \textit{Can we approximate the continuous-time dynamics of PGD-AT by a stochastic differential equation?}
\end{center}

We provide a positive answer to this question in Theorem \ref{pgd_at_generalization_theorem}. However, general SDEs do not possess closed-form stationary distributions, which makes the downstream tasks extremely difficult to proceed. The following question requires answering:

\begin{center}
    \textit{Can we explicitly represent the stationary distribution of this SDE and subsequently calculate $\text{KL}(Q||P)$ required in Lemma \ref{PAC_Bayes_theorem}?}
\end{center}

The answer also has a positive answer with mild assumptions. With the stationary distribution, we will leverage Lemma \ref{PAC_Bayes_theorem} to derive a generalization bound of PGD-AT, which would be a powerful analytic tool to identify the main factors that determine the robust generalization.

We are now ready to give our main theorem.

\begin{theorem}
\label{pgd_at_generalization_theorem}
Assume the risk function is locally quadratic, and gradient noise is Gaussian \footnote{Please refer to Section \ref{subsec:analysis_assump} for more details}. Suppose inner learning rate equals outer learning rate, and they are both fixed, i.e., $\alpha=\alpha_I=\alpha_O$. Let the Hessian matrix of risk function be $A$, and covariance matrix of Gaussian noise be $H=BB^T$. Let $b$ denote the batch size and $\mathcal{G}$ be the upper bound of generalization error.

The following statements hold,

1. The continuous-time dynamics of PGD-AT can be described by the following stochastic process:

\begin{equation}
\label{adversarial_training_sde}
    \begin{gathered}
    d\theta = fdt + \sigma dW_t, \quad \text{where} \quad dW_t=\mathcal{N}(0,Idt) \quad \text{is Wiener process},\\
    f = -(A+(K+\frac{1}{2})\alpha A^2)\theta \qquad \text{and} \quad \sigma = \sqrt{\frac{\alpha}{b}}AB
    \end{gathered}
\end{equation}
$f$ and $\sigma$ are referred to as drift term and diffusion term, respectively.

2. This stochastic process \eqref{adversarial_training_sde} is known as an Ornstein-Uhlenbeck process. The stationary distribution of this stochastic process is a Gaussian distribution with explicit covariance $\Sigma$. The norm of $\Sigma$ is positively correlated with $\frac{\alpha}{b}$ and norm of $B$.

3. Larger $\frac{\alpha}{b}$ and/or norm of $B$ results in smaller $\mathcal{G}$, i.e., induces tighter robust generalization bound.
\end{theorem}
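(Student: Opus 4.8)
The plan is to establish the three statements sequentially, since each builds on the previous one. First I would derive the SDE approximation in statement~1. The key idea is to treat PGD-AT as an alternating stochastic gradient procedure: the inner loop runs $K$ projected-gradient-ascent steps on $\delta$, and because the risk is locally quadratic around its minimum, each inner step applies a linear map to the gradient. Unrolling the $K$ inner steps and then composing with the outer descent step on $\theta$ produces an effective update of the form $\theta_t = \theta_{t-1} - \alpha(A + (K+\tfrac12)\alpha A^2)\theta_{t-1} + \text{noise}$, where the $(K+\tfrac12)\alpha A^2$ correction collects the leading-order contribution of the unrolled inner dynamics and the extra half-step comes from the final gradient evaluation. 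Matching the first two moments of this discrete update to an It\^o SDE (exactly as in Lemma~\ref{sgd_sde_lemma}) yields drift $f = -(A+(K+\tfrac12)\alpha A^2)\theta$ and diffusion $\sigma = \sqrt{\alpha/b}\,AB$, where the extra factor of $A$ in the diffusion arises because the noisy gradient is itself propagated through the (approximately linear) inner-outer composition.

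Next, for statement~2, I would observe that \eqref{adversarial_training_sde} has affine drift $f = -M\theta$ with $M = A + (K+\tfrac12)\alpha A^2$ and constant diffusion, which is precisely the multivariate Ornstein--Uhlenbeck form. Its stationary distribution is the centered Gaussian $\mathcal{N}(0,\Sigma)$ whose covariance solves the continuous Lyapunov equation $M\Sigma + \Sigma M^T = \sigma\sigma^T = \frac{\alpha}{b} A B B^T A$. Since $A$ is symmetric (Hessian of a locally quadratic risk) and $M$ is a polynomial in $A$, I can diagonalize everything simultaneously: in the eigenbasis of $A$ with eigenvalues $a_i > 0$, the solution is diagonal with entries $\Sigma_{ii} = \frac{\alpha}{b}\cdot\frac{a_i^2 (BB^T)_{ii}}{2(a_i + (K+\tfrac12)\alpha a_i^2)} = \frac{\alpha}{b}\cdot\frac{a_i (BB^T)_{ii}}{2(1+(K+\tfrac12)\alpha a_i)}$. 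This closed form makes the monotone dependence transparent: $\|\Sigma\|$ scales linearly with $\alpha/b$ and increases with the magnitude of $B$ (equivalently, with $H = BB^T$).

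For statement~3, I would plug $Q = \mathcal{N}(0,\Sigma)$ and the Gaussian prior $P = \mathcal{N}(\theta_0,\lambda_0 I_d)$ into the PAC-Bayesian bound of Lemma~\ref{PAC_Bayes_theorem}. The KL divergence between two Gaussians has the standard closed form $\text{KL}(Q\|P) = \tfrac12\bigl(\tfrac{1}{\lambda_0}\operatorname{tr}\Sigma + \tfrac{1}{\lambda_0}\|\theta_0\|^2 - d + d\log\lambda_0 - \log\det\Sigma\bigr)$. The only $\Sigma$-dependent terms are $\tfrac{1}{2\lambda_0}\operatorname{tr}\Sigma - \tfrac12\log\det\Sigma = \tfrac{1}{2\lambda_0}\sum_i \Sigma_{ii} - \tfrac12\sum_i \log\Sigma_{ii}$. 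Using the eigen-decomposition above, each $\Sigma_{ii}$ is an increasing function of $\alpha/b$ and of $(BB^T)_{ii}$; I would then argue that when these quantities are in the regime where $\Sigma_{ii}$ is small relative to $\lambda_0$ (the realistic regime, since the posterior concentrates), the dominant term is $-\tfrac12\log\Sigma_{ii}$, which \emph{decreases} as $\Sigma_{ii}$ grows. Hence $\text{KL}(Q\|P)$ decreases, and therefore $\mathcal{G} = \sqrt{(\text{KL}(Q\|P)+\log\frac1\varepsilon+\log N+2)/(2N-1)}$ decreases, giving a tighter robust generalization bound.

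The main obstacle I anticipate is making statement~3 fully rigorous rather than merely directionally correct: the trace term $\tfrac{1}{2\lambda_0}\operatorname{tr}\Sigma$ grows with $\alpha/b$ while $-\tfrac12\log\det\Sigma$ shrinks, so the net monotonicity of KL requires controlling the relative sizes — one needs either an explicit smallness assumption on $\operatorname{tr}\Sigma/\lambda_0$ (plausible because $\lambda_0$, the prior/weight-decay scale, is typically not tiny and the stationary covariance of a well-trained model is small) or a more careful argument that the logarithmic term dominates in the operative parameter range. A secondary subtlety is justifying the linearization of the inner PGD loop and the neglect of higher-order-in-$\alpha$ terms when deriving the drift correction $(K+\tfrac12)\alpha A^2$; this is the same order of approximation used throughout the SDE-for-SGD literature cited earlier, so I would invoke that precedent rather than re-derive error bounds. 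Everything else — the Lyapunov/OU solution and the Gaussian KL formula — is routine once the setup is in place.
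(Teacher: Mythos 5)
Your proposal follows essentially the same route as the paper's proof: Taylor-expand and unroll the inner PGD loop to get the effective update, match first and second moments to obtain the SDE with drift $-(A+(K+\tfrac12)\alpha A^2)\theta$ and diffusion $\sqrt{\alpha/b}\,AB$, solve the Ornstein--Uhlenbeck Lyapunov equation for $\Sigma$, and plug the Gaussian posterior into the PAC-Bayesian KL bound, where the $-\tfrac12\log\det\Sigma$ term drives the decrease. The obstacle you flag in statement~3 (trace term growing while the log-determinant term shrinks) is resolved in the paper exactly as you anticipate, by requiring $d$ sufficiently large, i.e.\ $d>\frac{r\,\mathrm{tr}(E)}{2\lambda_0}$ with $r=\alpha/b$, which is equivalent to your smallness condition on $\mathrm{tr}(\Sigma)/\lambda_0$.
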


\begin{proof}
Please refer to Appendix for proof.
\end{proof}

Theorem \ref{pgd_at_generalization_theorem} implies the following important statements,

(A) Diffusion term $\sqrt{\frac{\alpha}{b}}AB$ is impactful in the robust generalization, and we could manipulate diffusion to improve robust generalization.

(B) We could effectively boost robust generalization via increasing $\alpha$ and decreasing $b$. We provide extensive empirical evidence to support this claim (See e.g. Figure \ref{main_exp_fig} and Table \ref{main_exp_table}).

\subsection{Analysis of Assumptions}
\label{subsec:analysis_assump}

We first present the following standard assumptions from existing studies that are used in Theorem \ref{pgd_at_generalization_theorem} and discuss why these assumptions stand.

\begin{assumption}[\cite{Mandt17SGDApproximateBayesian,He19ControlBatch,xie2021positive} The second-order Taylor approximation]
\label{local_quadratic_assumption}
Suppose the risk function is approximately convex and 2-order differentiable, in the region close to minimum, i.e., there exists a $\delta_0>0$, such that $\mathcal{R}(\theta)= \frac{1}{2}(\theta-\theta^\ast)^TA(\theta-\theta^\ast)$ if $\lVert\theta-\theta^\ast\rVert\leq\delta_0$, where $\theta^\ast$ is a minimizer of $\mathcal{R}(\theta)$. Here $A$ is the Hessian matrix $\nabla_{\theta}^2\mathcal{R}$ around minimizer and is positive definite. Without loss of generality, we assume a minimizer of the risk is zero, i.e., $\theta^\ast=0$.
\end{assumption}

Though here we assume locally quadratic form of risk function, all our results from this study apply to locally smooth and strongly convex objectives. Note that the assumption on locally quadratic structure of loss function, even for extremely nonconvex objectives, could be justified empirically. \cite{Li2018Visualization} visualized the loss surfaces for deep structures like ResNet \cite{He16Res} and DenseNet \cite{Huang2017DenseNet}, observing quadratic geometry around local minimum in both cases. And certain network architecture designs (e.g., skip connections) could further make neural loss geometry show no noticeable nonconvexity, see e.g. Figure \ref{loss_func_vis}.

\begin{figure*}[t!]
	\centering%
 
	\begin{subfigure}{0.2\textwidth}%
		\centering%
		\includegraphics[width=3.2cm,height=3.2cm]{./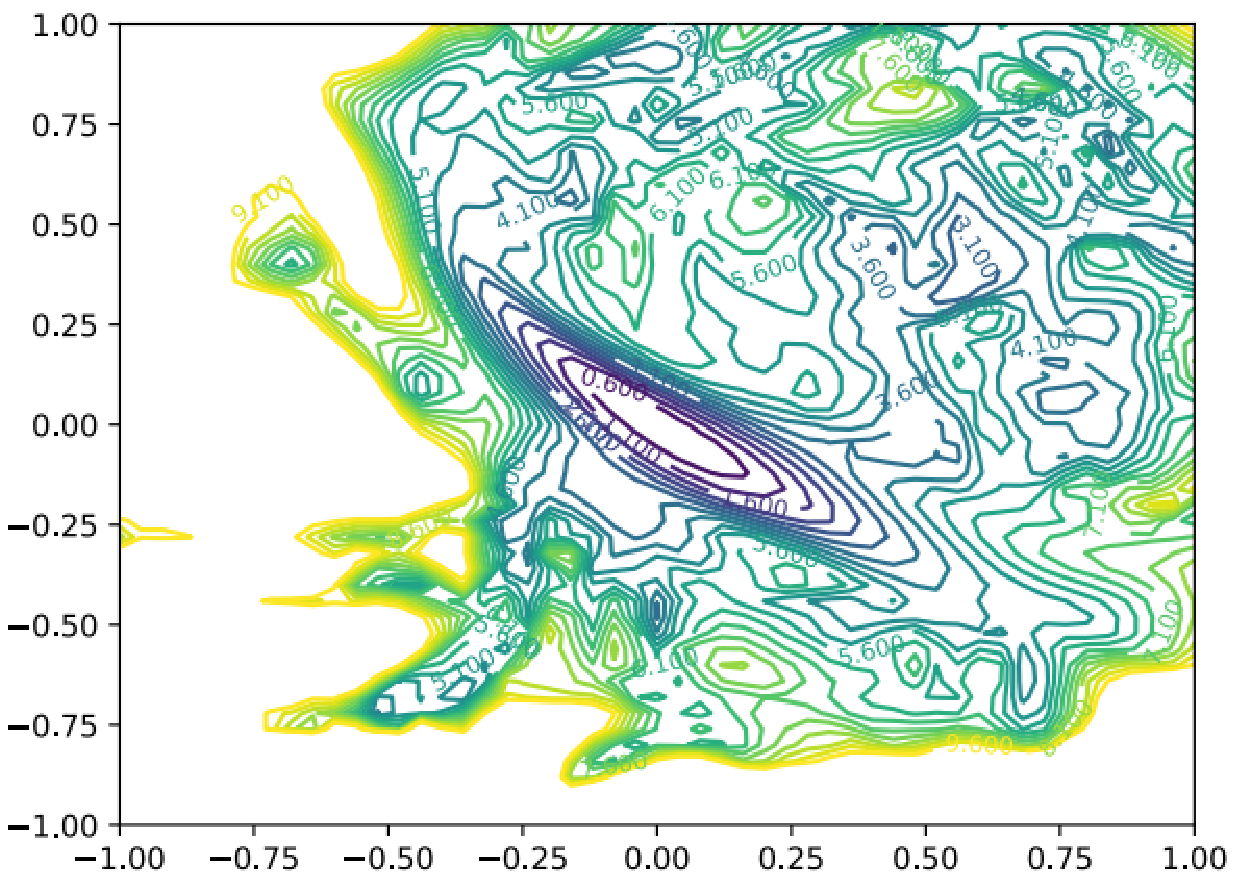}%
 
   \caption{\label{WRN56-no-shortcut-connection} WRN56-no-shortcut-connection}
	\end{subfigure}%
	\hspace{8mm}
	\begin{subfigure}{0.20\textwidth}%
		\centering%
		\includegraphics[width=3.2cm,height=3.2cm]{./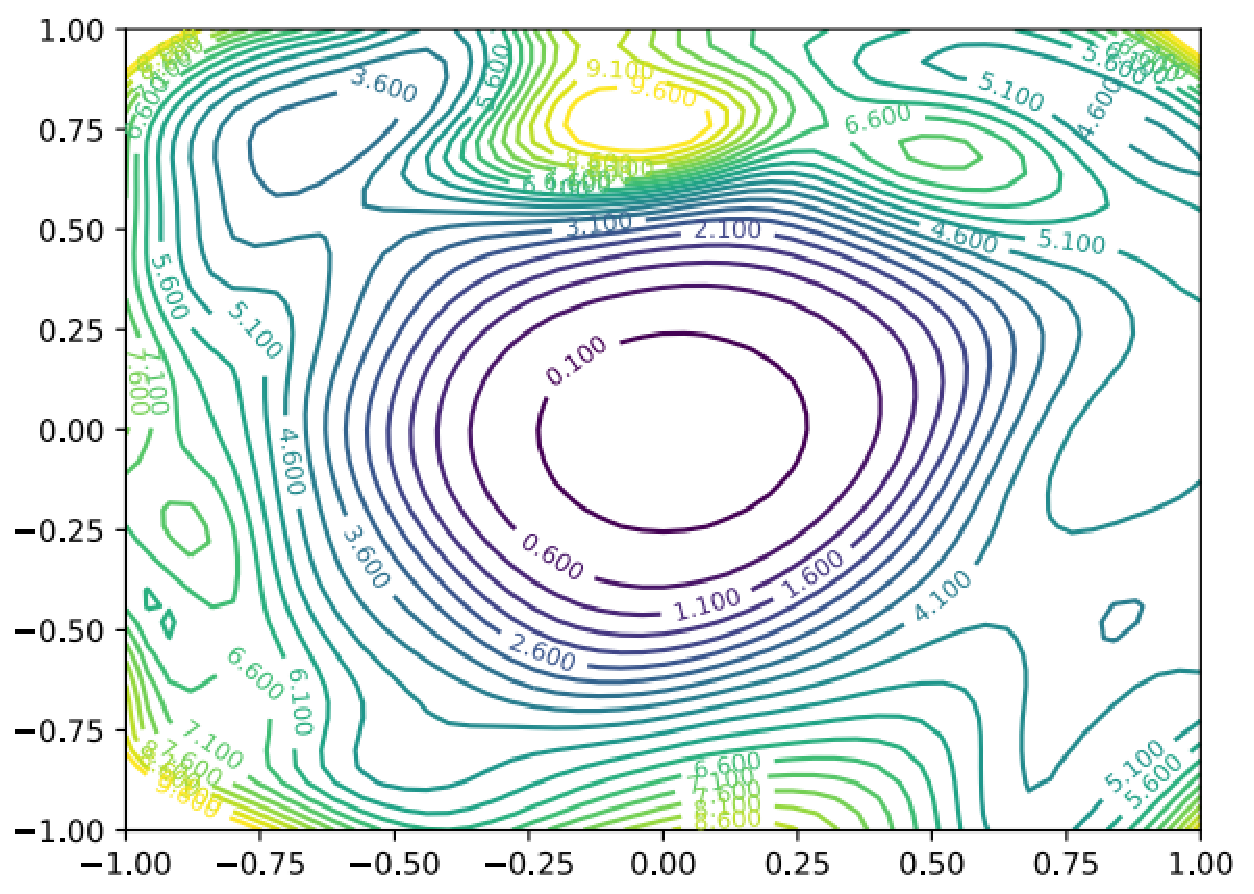}%
 
    \caption{\label{WRN56-shortcut-connection} WRN56-shortcut-connection}
	\end{subfigure}%
		\hspace{8mm}
	\begin{subfigure}{0.20\textwidth}%
		\centering%
		\includegraphics[width=3.2cm,height=3.2cm]{./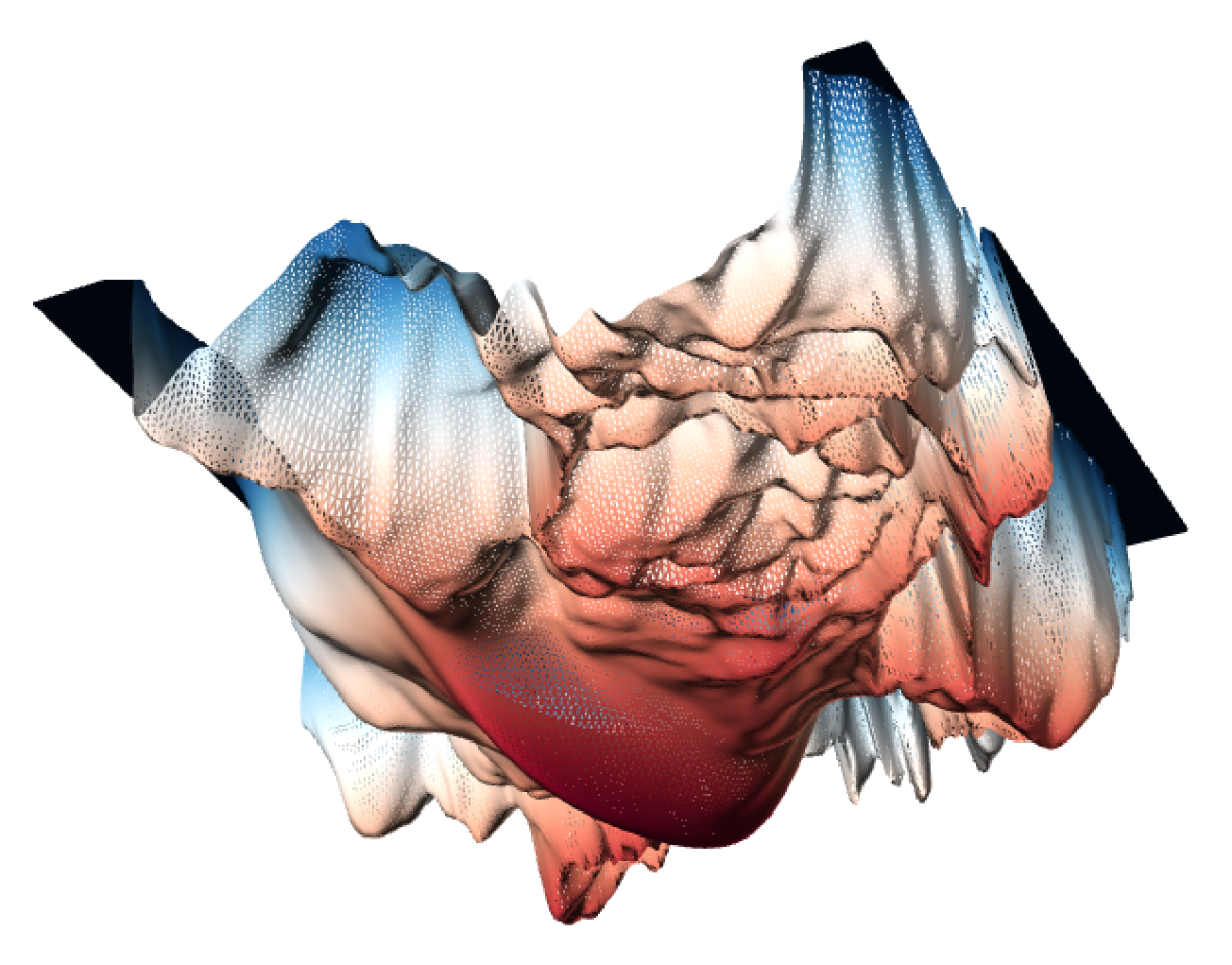}%
	 
    \caption{\label{Resnet-56-no-shortcut} Resnet-56-no-shortcut}
	\end{subfigure}%
	\hspace{8mm}
	\begin{subfigure}{0.20\textwidth}%
		\centering%
		\includegraphics[width=3.2cm,height=3.2cm]{./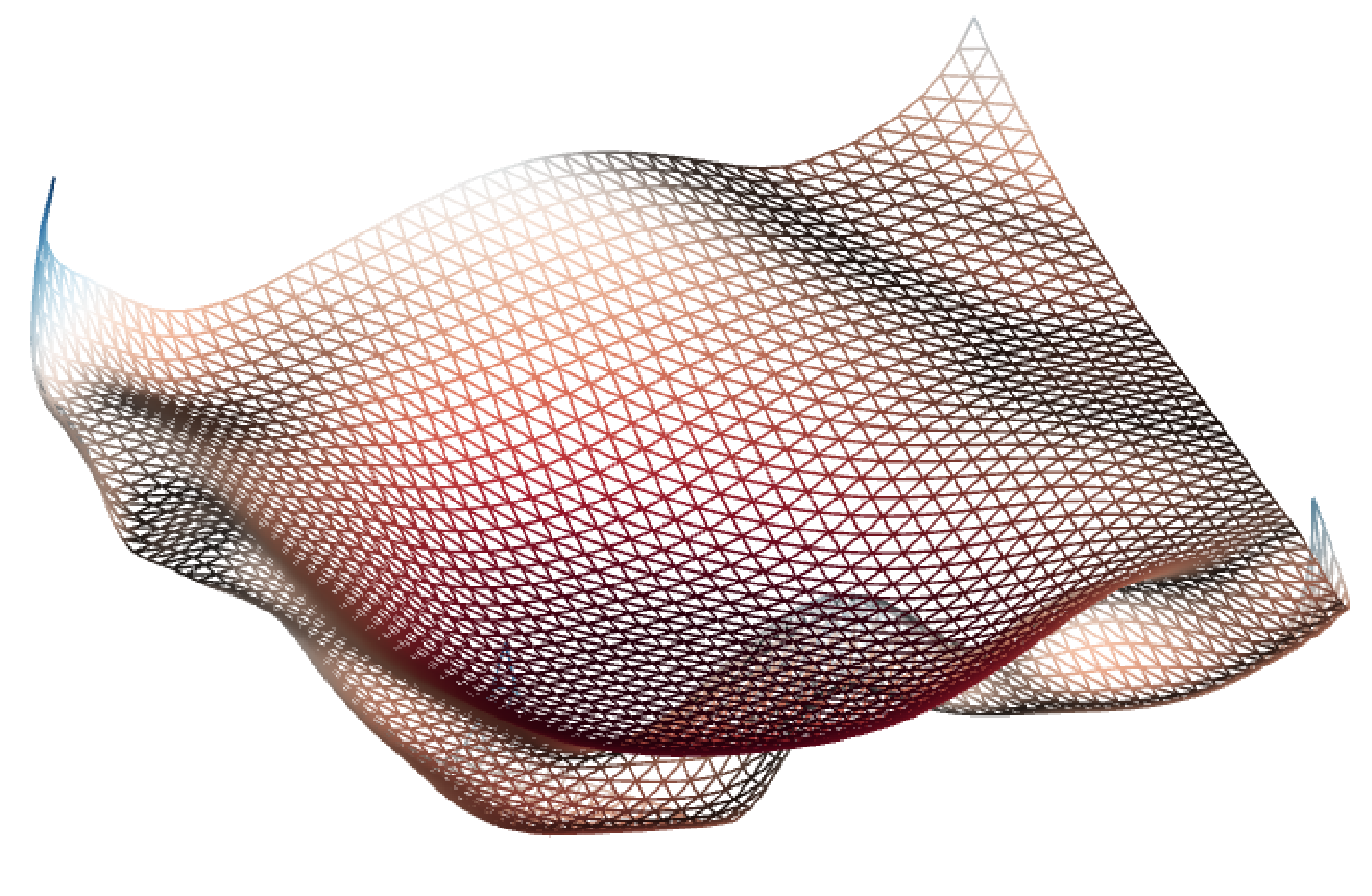}%
	  
        \caption{\label{Resnet-56-short} Resnet-56-short}
	\end{subfigure}
 
  \centering\caption{\label{loss_func_vis} 2D visualization of the loss surface of Wide-ResNet-56 on CIFAR-10 both without shortcut connections in Figure \ref{WRN56-no-shortcut-connection} and with shortcut connections in Figure \ref{WRN56-shortcut-connection} (Figure 6 in \cite{Li2018Visualization}). 3D visualization of the loss surface of ResNet-56 on CIFAR-10 both with shortcut connections in Figure \ref{Resnet-56-short} and without shortcut connections in Figure \ref{Resnet-56-no-shortcut} (from \url{http://www.telesens.co/loss-landscape-viz/viewer.html}).}

\end{figure*}


\begin{assumption}[\cite{Mandt17SGDApproximateBayesian,He19ControlBatch,xie2021positive} Unbiased Gradients with Bounded Noise Variance]
\label{gaussian_noise_assumption}
Suppose at each step $t$, gradient noise is Gaussian with mean 0 and covariance $\frac{1}{b}\Sigma(\theta_t)$, i.e., $$ \hat{g}(\theta_t)\approx g(\theta_t)+\frac{1}{\sqrt{b}}\Delta g(\theta_t), \quad \Delta g(\theta_t)\sim\mathcal{N}(0,\Sigma(\theta_t))$$ 
We further assume that the noise covariance matrix $\Sigma(\theta_t)$ is approximately constant with respect to $\theta$, i.e., $\Sigma(\theta_t)\approx\Sigma=CC^T$. And noises from different iterates $\{\Delta g(\theta_t)\}_{t\ge1}$ are mutually statistically independent.
\end{assumption}

Gaussian gradient noise is natural to assume as the stochastic gradient is a sum of $b$ independent, uniformly sampled contributions. Invoking the central limit theorem, the noise structure could be approximately Gaussian. Assumption \ref{gaussian_noise_assumption} is standard when approximating a stochastic algorithm with a continuous-time stochastic process (see e.g. \cite{Mandt17SGDApproximateBayesian}) and is justified when the iterates are confined to a restricted region around the minimizer. 

\section{Algorithm: DEAT - A 'Free' Booster to PGD-AT}
\label{algorithm_section}

Theorem \ref{pgd_at_generalization_theorem} indicates that the key factor which impacts the robust generalization is diffusion $\sqrt{\frac{\alpha}{b}}AB$. And the definitive relationship is, large diffusion level positively benefits the generalization performance of PGD-AT. 

Though increasing $\frac{\alpha}{b}$ is straightforward, there are two main drawbacks. First, decreasing batch size is impractical as it significantly lengthens training time. Adversarial training already takes notoriously lengthy time compared to standard supervised learning (as the inner maximization is essentially several steps of gradient ascent). Thus, small batch size is simply not an economical option, Second, the room to increase $\alpha$ is very limited as $\alpha$ has to be relatively small to ensure convergence.

Furthermore, we also desire an approach that could universally improve the robust generalization independent of specifications of $\alpha$ and $b$, as they could potentially complement each other to achieve a even better performance. Thus, we propose to manipulate the remaining factor in the diffusion,

\begin{center}
    \textit{Can we manipulate the gradient noise level $B$ in PGD-AT dynamic to improve its generalization?}
\end{center}

Our proposed Diffusion Enhanced AT (DEAT) (i.e. Algorithm \ref{Diffusion_PGD_AT_algorithm}) provides a positive answer to this question. The basic idea of DEAT is simple. Inspired by the idea from \cite{xie2021positive}, instead of using one single gradient estimator $\hat{g}$, Algorithm \ref{Diffusion_PGD_AT_algorithm} maintains two gradient estimators $h_t$ and $h_{t-1}$ at each iteration. A linear interpolation of these two gradient estimators is still a legitimate gradient estimator, while the noise (variance) of this new estimator is larger than any one of the base estimators. $k_1$ and $k_2$ are two hyperparameters.

We would like to emphasize when $h_t$ and $h_{t-1}$ are two unbiased and independent gradient estimators, the linear interpolation is apparently unbiased (due to linearity of expectation) and the noise of this new estimator increases. However, DEAT (and the following Theorem \ref{variance_enhanced_theorem}) does not require $h_t$ and $h_{t-1}$ to be unbiased or independent. In fact, DEAT showcases a general idea of linear combination of two estimators which goes far beyond our current design. We could certainly devise other formulation of $h_t$ or $h_{t-1}$, which may be unbiased or biased as in our current design.

It may be natural to think that why not directly inject some random noise to the gradient to improve generalization. However, existing works point out random noise does not have such appealing effect, only noise with carefully designed covariance structure and distribution class works \cite{Wu2019OnTN}. For example, \cite{zhu2019anisotropic} and \cite{daneshmand18saddle} point out, if noise covariance aligns with the Hessian of the loss surface to some extent, the noise would help generalize. Thus, \cite{Wen2019InterplayBO} proposes to inject noise using the (scaled) Fisher as covariance and \cite{zhu2019anisotropic} proposes to inject noise employing the gradient covariance of SGD as covariance, both requiring access and storage of second order Hessian which is very computationally and memory expensive.

DEAT, compared with existing literature, is the first algorithm on adversarial training, we inject noise that does not require second order information and is “free” in memory and computation.

\begin{algorithm2e}
\SetAlgoVlined
\KwIn{Loss function $J(\theta,x,\delta)=l(\theta,x+\delta,y)-\lambda R(\delta)$, initialization $\theta_0$, total training steps $T$, PGD steps $K$, inner/outer learning rates $\alpha_{I}$/$\alpha_{O}$, batch size $b$;}
\SetAlgoLined
\For{$t\in\{1,2,...,T\}$}
{
    Sample a mini-batch of random examples $\zeta=\{(x_{i_j},y_{i_j})\}_{j=1}^{b}$\;
    Set $\delta_0=0,\hat{x}_j=x_{i_j}$\;
    \For{$k\in\{1,...,K\}$}
    {
    $\delta_k=\delta_{k-1}+\frac{\alpha_{I}}{b}\sum_{j=1}^b\nabla_\delta J(\theta_{t-1},\hat{x}_j,\delta_{k-1})$\;
    }
    $h_{t}=k_2h_{t-2}+(1-k_2)\hat{g}_t$,
    $\theta_{t+1}=\theta_{t}-\alpha^\prime_{O}[(1+k_1)h_{t}-k_1h_{t-1}]$, \\where $\alpha_O^\prime=\frac{\alpha_O}{\sqrt{(1+k_1)^2+k_1^2}}$\;
    
}
return $\theta_T$
\caption{Diffusion Enhanced AT (DEAT)}
\label{Diffusion_PGD_AT_algorithm}
\end{algorithm2e}

Theorem \ref{variance_enhanced_theorem} provides a theoretical guarantee that DEAT obtains a tighter generalization bound than PGD-AT.

\begin{theorem}
\label{variance_enhanced_theorem}
Let $H_1$ and $H_2$ be the covariance matrix of gradient noise from PGD-AT and DEAT, respectively. Let $\mathcal{G}_1$ and $\mathcal{G}_2$ be the upper bounds of generalization error of PGD-AT (Algorithm \ref{PGD_AT_algorithm_revised}) and DEAT (Algorithm \ref{Diffusion_PGD_AT_algorithm}), respectively. The following statement holds,

\begin{equation}
\label{diffusion_enhanced}
    \begin{gathered}
    H_2 = kH_1, \quad \text{where} \quad k>1,\\
    \mathcal{G}_1 \ge \mathcal{G}_2
    \end{gathered}
\end{equation}

i.e., Algorithm \ref{Diffusion_PGD_AT_algorithm} generates larger gradient noise than Algorithm \ref{PGD_AT_algorithm}, and such gradient noise boosts robust generalization.
\end{theorem}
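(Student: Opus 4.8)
The plan is to prove the two lines of \eqref{diffusion_enhanced} in turn and then chain them through Theorem~\ref{pgd_at_generalization_theorem}. For the first line, I would start from Assumption~\ref{gaussian_noise_assumption} and write $\hat g_t = g_t + \xi_t$ with $\xi_t \sim \mathcal N(0,\tfrac1b H_1)$, the $\xi_t$ mutually independent. Unrolling the buffer recursion $h_t = k_2 h_{t-2} + (1-k_2)\hat g_t$ as a geometric series expresses $h_t$ as a fixed linear combination of the gradient estimates at indices $t, t-2, t-4, \dots$, so at stationarity $\mathrm{Cov}(h_t)$ is a scalar multiple of $\tfrac1b H_1$ that is an explicit function of $k_2$. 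The key structural observation is that $h_t$ and $h_{t-1}$ are assembled from disjoint index sets (even versus odd offsets relative to $t$), hence their stochastic parts are independent; therefore the interpolated direction $(1+k_1)h_t - k_1 h_{t-1}$ has covariance exactly $[(1+k_1)^2+k_1^2]$ times that of a single buffer. Putting this together with the prescribed rescaling $\alpha_O' = \alpha_O / \sqrt{(1+k_1)^2+k_1^2}$ --- which is chosen precisely so that DEAT and PGD-AT take expected steps of equal size (i.e., matched drift) while the two independent buffers add in variance --- one obtains $H_2 = k H_1$; it then remains to write $k$ in closed form in terms of $k_1,k_2$ and to identify the (mild) hyperparameter regime in which $k>1$.

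For the second line, I would argue that DEAT is still an alternating stochastic-gradient iteration driven by a locally quadratic risk (Assumption~\ref{local_quadratic_assumption}) and by Gaussian, approximately stationary gradient noise, the only change relative to PGD-AT being that the one-step noise covariance is $\tfrac1b H_2$ rather than $\tfrac1b H_1$. Consequently the entire derivation behind Theorem~\ref{pgd_at_generalization_theorem} goes through verbatim with $B$ replaced by a $B_2$ satisfying $H_2 = B_2 B_2^{T}$, so that $\lVert B_2\rVert = \sqrt k\,\lVert B_1\rVert > \lVert B_1\rVert$. Part~3 of Theorem~\ref{pgd_at_generalization_theorem} --- larger $\lVert B\rVert$ gives a smaller $\mathcal G$ --- then yields $\mathcal G_2 \le \mathcal G_1$ directly, completing the argument.

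The main obstacle I anticipate is justifying that the SDE and stationary-distribution machinery of Theorem~\ref{pgd_at_generalization_theorem} still applies once the iteration carries the extra memory state $h_t$: one must show either that the augmented process $(\theta_t,h_t)$ is itself (approximately) a linear diffusion whose $\theta$-marginal stationary covariance is governed by the amplified diffusion, or that once the buffer reaches its own stationary regime the $\theta$-update is well approximated by a plain stochastic-gradient step with covariance $\tfrac1b H_2$. Closely tied to this is the bookkeeping around the learning-rate rescaling: because the interpolation changes the effective step size to $\alpha_O'$, the comparison between PGD-AT and DEAT must be carried out at matched drift, otherwise a change in the diffusion would be confounded with a change in effective learning rate; verifying that the factor $\sqrt{(1+k_1)^2+k_1^2}$ is exactly what makes this comparison clean is the technical heart of the first line.
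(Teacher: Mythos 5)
Your proposal is correct and follows essentially the same route as the paper: both arguments rest on observing that the recursion $h_t=k_2h_{t-2}+(1-k_2)\hat g_t$ builds $h_t$ and $h_{t-1}$ from disjoint even/odd gradient subsequences, so the interpolation $(1+k_1)h_t-k_1h_{t-1}$ amplifies the noise covariance by $(1+k_1)^2+k_1^2$ while preserving the drift, after which $\mathcal G_2\le\mathcal G_1$ follows from part~3 of Theorem~\ref{pgd_at_generalization_theorem}. The only cosmetic difference is that the paper exhibits the two-buffer structure via the change of variables $x_t=\theta_t-k_1h_{t-1}$ (reducing DEAT to a momentum recursion and citing \cite{xie2021positive} for the variance formula) rather than by directly unrolling the geometric series, and your explicit attention to drift matching under the rescaling $\alpha_O'$ is, if anything, more careful than the paper's treatment.
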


\begin{proof} We only keep primary proof procedures and omit most of the algebraic transformations. Recall the updating rule for conventional heavy ball momentum,

\begin{equation}
\label{SHB_formulation}
    \begin{gathered}
    d_{t+1}=(1-\beta)\hat{g}_t+\beta d_t\\
    \theta_{t+1}=\theta_t-\alpha d_t
\end{gathered}
\end{equation}
where $\beta$ is the momentum factor.

By some straightforward algebraic transformations, we know the momentum can be written as $d_t=(1-\beta)\sum_{i=1}^t \beta^{k-i}\hat{g}_i$.

Suppose $H$ is the noise covariance of $\hat{g}_i$ and $\nu^2$ is the scale of $H$, i.e., $\lVert H\rVert\le\nu^2$. The noise level of $d_t$ is $\approx (1-\beta)\frac{1-\beta^{t+1}}{1-\beta}\nu^2\approx\nu^2$.

Momentum $d$ does not alter the gradient noise level. We would resort to maintain two momentum terms $d^{(1)}$  $d^{(2)}$, and use the linear interpolation $(1+p)d^{(1)}-pd^{(2)}$ as our iterate.

The advantage is though the noise levels of $d^{(1)}$ and $d^{(2)}$ are both $\nu^2$, the noise level of $(1+p)d^{(1)}-pd^{(2)}$ is $\approx ((1+p)^2+p^2)\nu^2$ \cite{xie2021positive}. 

Thus, if we could show our proposed DEAT is indeed maintaining two momentum terms, we complete the proof of the statement $H_2=kH_1$ and $k>1$ in Theorem \ref{variance_enhanced_theorem}.

Recall line 7-8 in Algorithm \ref{Diffusion_PGD_AT_algorithm},

\begin{equation}
    \begin{gathered}
    h_{t}=k_2h_{t-2}+(1-k_2)\hat{g}_t,\\
    \theta_{t+1}=\theta_{t}-\alpha^\prime_{O}[(1+k_1)h_{t}-k_1h_{t-1}],
    \end{gathered}
\end{equation}

We could transform it into,

\begin{equation}
    \begin{gathered}
    h_{t}=k_2h_{t-2}+(1-k_2)\hat{g}_t,\\
    (\theta_{t+1}-\alpha^\prime_{O}k_1h_t)=(\theta_{t}-\alpha^\prime_{O}k_1h_{t-1})-\alpha^\prime_{O}h_t,
    \end{gathered}
\end{equation}

We could further write it into,

\begin{equation}
\label{pnm_rule}
    \begin{gathered}
    x_{t}=\theta_t-k_1h_{t-1},\\
    x_{t+1}=x_t-\xi \hat{g}_t+k_2(x_{t-1}-x_{t-2}),
    \end{gathered}
\end{equation}
where $\xi=\alpha^\prime_{O}(1-k_2)$. We know a conventional momentum can be written as,
\begin{equation}
\label{momentum_rewrite}
    \begin{gathered}
    \theta_{t+1}=\theta_t-\alpha\hat{g}_t+\beta(\theta_t-\theta_{t-1})
    \end{gathered}
\end{equation}
where $\alpha$ and $\beta$ are learning rate and momentum factor, respectively. Note in Equation \eqref{pnm_rule}, the second line is exactly same as in Equation \eqref{momentum_rewrite}, indicating $x_{t}$ has the same behavior as momentum. Further note $x_{t+1}-x_t=\xi h_t$, i.e., we maintain two momentum terms by alternatively using odd-number-step and even-number-step gradients. Combining everything together, we complete the proof of Theorem \ref{variance_enhanced_theorem}.
\end{proof}

One advantage of DEAT is that it adds virtually no extra parameters or computation. Though it introduces two more hyperparameters $k_1$ and $k_2$, they are highly insensitive according to our experimental investigation. 

Our experimental results in Figure \ref{main_exp_fig} and Table \ref{main_exp_table} firmly attest that DEAT outperforms PGD-AT by a significant 1.5\% to 2.0\% margin with nearly no extra burden. We would like to emphasize that 1.5\% to 2.0\% improvement with virtually no extra cost is non trivial in robust accuracy. To put 1.5\% to 2.0\% in perspective, the difference among the robust accuracy of all popular architectures is only about 2.5\% (see \cite{pang2021bag}). Our approach is nearly "free" in cost while modifying architectures includes tremendous extra parameters and model design. 2.0\% is also on par with some other techniques, e.g., label smoothing, weight decay, that are already overwhelmingly used to improve robust generalization.

Training curves in Figure \ref{adv_training_curve_fig} reveal that DEAT can beat PGD-AT in adversarial testing accuracy even when PGD-AT has better adversarial training accuracy, which shows DEAT does alleviate overfitting.

\section{Experiments}
\label{experimental_evidence}

We conduct extensive experiments to verify our theoretical findings and proposed approach. We include different architectures, and sweep across a wide range of hyperparameters, to ensure the robustness of our findings. All experiments are run on 4 NVIDIA Quadro RTX 8000 GPUs, and the total computation time for the experiments exceeds 10K GPU hours. Our code is available at \url{https://github.com/jsycsjh/DEAT}. 

We aim to answer the following two questions:

\begin{enumerate}[leftmargin=*]
    \item \textit{Do hyperparameters impact robust generalization in the same pattern as Theorem \ref{pgd_at_generalization_theorem} indicates?}
    \item \textit{Does DEAT provide a 'free' booster to robust generalization?}
\end{enumerate}

\textbf{Setup} We test on CIFAR-10 under the $l_\infty$ threat model of perturbation budget $\frac{8}{255}$, without additional data. Both the vanilla PGD-AT framework and DEAT is used to produce adversarially robust model. The model is evaluated under 10-steps PGD attack (PGD-10) \cite{madry2018towards}. Note that this paper mainly focuses on PGD attack instead of other attacks like AutoAttack \cite{Francesco20autoattack} / RayS \cite{Chen20RayS} for consistency with our theorem. The architectures we test with include VGG-19 \cite{Simonyan14VGG}, SENet-18 \cite{CVPR18SENet}, and Preact-ResNet-18 \cite{He2016IdentityMI}. Every single data point is an average of 3 independent and repeated runs under exactly same settings (i.e., every single robust accuracy in Table \ref{main_exp_table} is an average of 3 runs to avoid stochasticity). The following Table \ref{default_setting_table} summarizes the default settings in our experiments.

    \begin{table}[htbp]
 
    \caption{Experimental Settings}
    \centering
    \begin{tabular}{c|c}
    \hline
    Batch Size: 128 & Label Smoothing: False\\ 
    \hline
    Weight Decay: $5\times10^{-4}$ &  BN Mode: eval\\ 
    \hline
    Activation: ReLu &  Total Epoch: 110 \\ 
    \hline
    LR Decay Factor: 0.1 & LR Decay Epochs: 100, 105 \\ 
    \hline
    Attack: PGD-10 & Maximal Perturbation: $\epsilon=8/255$\\ 
    \hline
    Attack Step Size: $2/255$ & Threat Model: $l_\infty$\\ 
    \hline
    $k_1$: 1.0 & $k_2$: 0.8\\ 
    \hline
    \end{tabular}
    \label{default_setting_table}
 
    \end{table}
    \vspace*{-6pt}

\begin{figure}[h]
 \vspace*{-12pt}
\includegraphics[width=8.0cm]{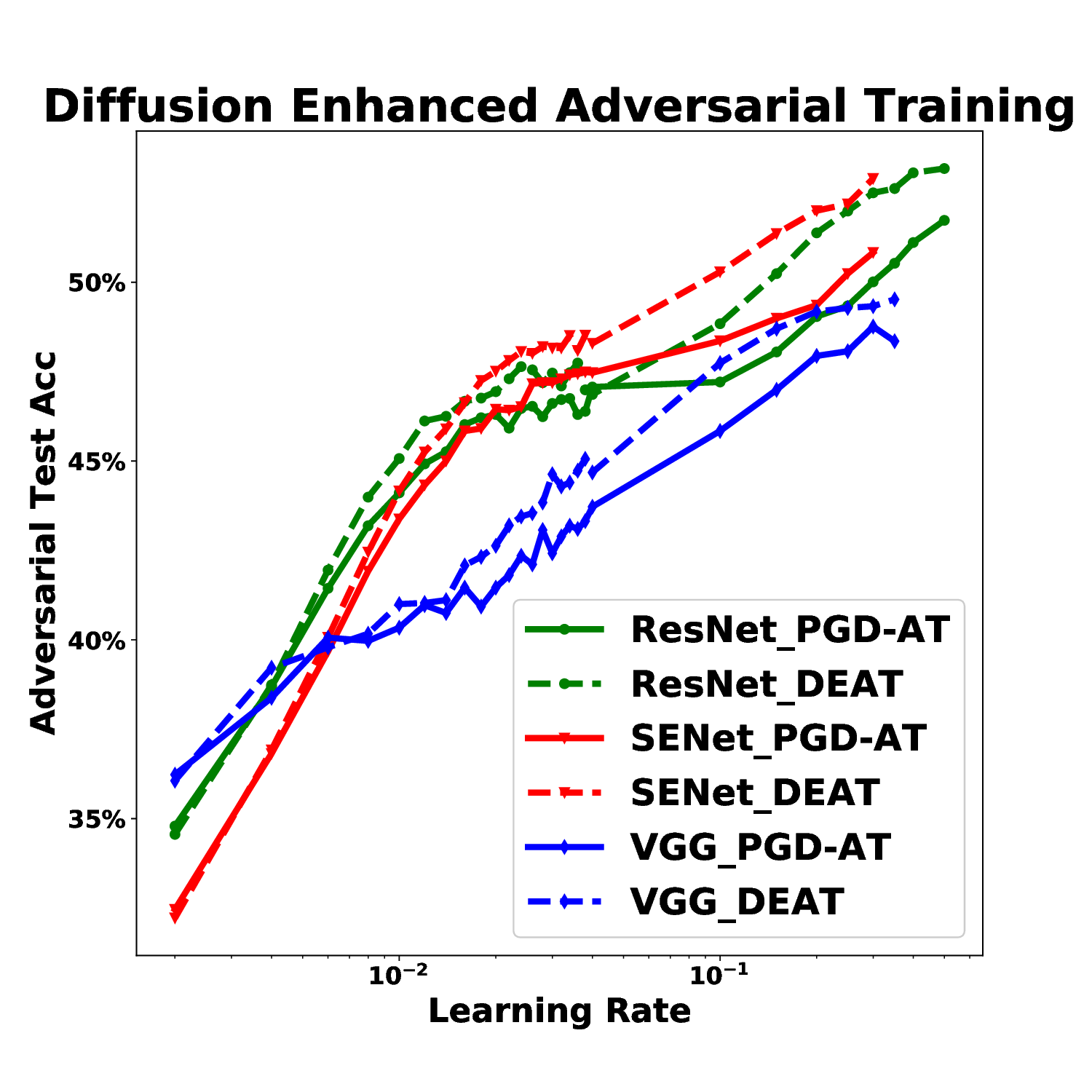}
\vspace*{-12pt}
\caption{\label{main_exp_fig} Adversarial testing accuracy on CIFAR10 for vanilla PGD-AT and our proposed DEAT across a wide spectrum of learning rates. The figure demonstrates a strongly positive correlation between robust generalization and learning rate. We could also observe DEAT obtains a significant improvement over PGD-AT.}
 
\end{figure}

Note that most of our experimental results are reported in terms of robust test accuracy, instead of the robust generalization gap. On one hand, test accuracy is the metric that we really aim to optimize in practice. On the other hand, robust test accuracy, though is not the whole picture of generalization gap, actually reflects the gap very well, especially in overparameterized regime, due to the minimization of empirical risk is relatively simple with deep models \footnote{In the setting of over-parametrized learning, there is a large set of global minima, all of which have zero training error but the test error can be very different \cite{Zhang17Rethinking, WuME18NIPS}.}, even in an adversarial environment \cite{ Rice2020OverfittingIA}. Therefore, we report only robust test accuracy following \cite{He19ControlBatch,WuME18NIPS} by default. To ensure our proposed approach actually closes the generalization gap, we report the actual generalization gap in Fig \ref{adv_training_curve_fig}, and observe DEAT can beat vanilla PGD-AT by a non-trivial margin in testing performances even with sub-optimal training performances.

\subsection{Hyperparameter is Impactful in Robust Generalization}

Our theorem indicates learning rate and batch size can impact robust generalization via affecting diffusion. Specifically, Theorem \ref{pgd_at_generalization_theorem} expects larger learning rate/batch size ratio would improve robust generalization. We sweep through a wide range of learning rates ${0.01, 0.12, 0.014, \cdots, 0.50}$, and report the adversarial testing accuracy of both vanilla PGD-AT and DEAT for a selection of learning rates in Table \ref{main_exp_table} and Figure \ref{main_exp_fig}. Considering the computational time for AT is already very long, decreasing batch size to improve robust generalization is simply economically prohibitive. Thus, we mainly focus on $\alpha$.

\newcolumntype{?}{!{\vrule width 2pt}}
\begin{table*}[ht]
\caption{Adversarial testing accuracy for both vanilla PGD-AT and DEAT. $\text{Acc}_{\text{d}}$ represents the accuracy difference between diffusion enhanced adversarial training and vanilla PGD-AT, i.e., $\text{Acc}_{\text{DEAT}}-\text{Acc}_{\text{PGD-AT}}$.}
 
\centering
\begin{tabular}{|c|c|c|c?c|c|c|c?c|c|c|c|}
\multicolumn{4}{c?}{Preact-ResNet \cite{He16Res}}&\multicolumn{4}{c?}{SENet \cite{CVPR18SENet}} &\multicolumn{4}{c}{VGG \cite{Simonyan14VGG}}\\
    \hline
    $\alpha$ & PGD-AT & DEAT & $\text{Acc}_{\text{d}}$ & $\alpha$ & PGD-AT & DEAT & $\text{Acc}_{\text{d}}$ & $\alpha$ & PGD-AT & DEAT & $\text{Acc}_{\text{d}}$\\\hline
    0.010 & 44.11\% & 45.07\% & 0.96\% & 0.010 & 43.38\% & 44.16\% & 0.78\% & 0.010 & 40.34\% & 41.00\% & 0.66\%\\
    0.012 & 44.92\% & 46.12\% & 1.20\% & 0.012 & 44.33\% & 45.25\% & 0.92\% & 0.012 & 40.97\% & 41.03\% & 0.06\%\\
    0.014 & 45.26\% & 46.25\% & 0.99\% & 0.014 & 45.00\% & 45.90\% & 0.90\% & 0.014 & 40.75\% & 41.11\% & 0.36\%\\
    0.018 & 46.21\% & 46.76\% & 0.55\% & 0.018 & 45.91\% & 47.25\% & 1.34\% & 0.018 & 40.93\% & 42.32\% & 1.39\%\\
    0.020 & 46.30\% & 46.94\% & 0.64\% & 0.020 & 46.45\% & 47.51\% & 1.06\% & 0.020 & 41.46\% & 42.08\% & 0.62\%\\
    0.022 & 45.92\% & 47.30\% & 1.38\% & 0.022 & 46.42\% & 47.81\% & 1.39\% & 0.022 & 41.81\% & 43.20\% & 1.39\%\\
    0.024 & 46.47\% & 47.64\% & 1.17\% & 0.024 & 46.52\% & 48.06\% & 1.54\% & 0.024 & 42.35\% & 43.45\% & 1.10\%\\
    0.028 & 46.24\% & 47.19\% & 0.95\% & 0.028 & 47.19\% & 48.20\% & 1.01\% & 0.028 & 43.07\% & 43.84\% & 0.77\%\\
    0.030 & 46.61\% & 47.46\% & 0.85\% & 0.030 & 47.19\% & 48.16\% & 0.97\% & 0.030 & 42.42\% & 44.63\% & 2.21\%\\
    \hline
    0.100 & 47.21\% & 48.84\% & 1.63\% & 0.100 & 48.36\% & 50.29\% & 1.93\% & 0.100 & 45.84\% & 47.74\% & 1.90\%\\
    0.150 & 48.05\% & 50.24\% & 2.19\% & 0.150 & 48.99\% & 51.36\% & 2.37\% & 0.150 & 46.99\% & 48.70\% & 1.71\%\\
    0.200 & 49.04\% & 51.38\% & 2.34\% & 0.200 & 49.36\% & 52.00\% & 2.64\% & 0.200 & 47.94\% & 49.18\% & 1.24\%\\
    0.250 & 49.34\% & 51.99\% & 2.65\% & 0.250 & 50.24\% & 52.19\% & 1.95\% & 0.250 & 48.07\% & 49.28\% & 1.21\%\\
    0.300 & 50.01\% & 52.50\% & 2.49\% & 0.300 & 50.83\% & 52.90\% & 2.07\% & 0.300 & 48.76\% & 49.33\% & 0.57\%\\
    \specialrule{0.2em}{0em}{0.05em} 
    \multicolumn{12}{|c|}{Improvement is significant especially when model is performing well (large $\alpha$).} \\
    \multicolumn{12}{|c|}{\textbf{DEAT improves \underline{1.5\%} on VGG, and over \underline{2.0\%} on SENet and Preact-ResNet.}}\\
    \specialrule{0.2em}{0em}{0.05em} 
\end{tabular}
\label{main_exp_table}
 
\end{table*}

Table \ref{main_exp_table} exhibits a strong positive correlation between robust generalization and learning rate. The pattern is consistent with all three architectures. Figure \ref{main_exp_fig} provides a better visualization of the positive correlation. 

We further do some testing on whether such correlation is statistically significant or not. We calculate the Pearson's $r$, Spearman's $\rho$, and Kendall's $\tau$ rank-order correlation coefficients \footnote{They measure the statistical dependence between the rankings of two variables, and how well the relationship between two variables can be described using a monotonic function.}, and the corresponding $p$ values to investigate the statistically significance of the correlations. The procedure to calculate $p$ values is as follows, when calculating $p$-value in Tables \ref{rank_correlation_table} and \ref{t_test_table}, we regard the data point in Table \ref{main_exp_table} as the accuracy for each $\alpha$ and calculate the RCC between accuracy and $\alpha$ and its $p$-value, following same procedure in \cite{He19ControlBatch}.

We report the test result in Table \ref{rank_correlation_table}. The closer correlation coefficient is to $+1$ (or $-1$), the stronger positive (or negative) correlation exists. If $p<0.005$, the correlation is statistically significant \footnote{The criterion of 'statistically significant' has various versions, such as $p < 0.05$ or $p < 0.01$. We use a more rigorous $p < 0.005$.}.

\newcolumntype{?}{!{\vrule width 2pt}}
\begin{table*}[ht]
\caption{Rank correlation coefficients (corresponding significance level) between robust generalization and learning rate. All correlation coefficient indicates a strong positive relationship (close to $+1$). The p values are all highly statistically significant.}
\centering
\begin{tabular}{c?c|c?c|c?c|c}
\multicolumn{1}{c?}{Rank Correlation Coefficient} & 
\multicolumn{2}{c?}{
\begin{tabular}{c}
      Preact-ResNet  \\
      \hline
      \begin{tabular}{c|c}
      PGD-AT & DEAT
 \end{tabular}
 \end{tabular}
 }&
 \multicolumn{2}{c?}{
\begin{tabular}{c}
      SENet  \\
      \hline
      \begin{tabular}{c|c}
      PGD-AT & DEAT
 \end{tabular}
 \end{tabular}
 } &\multicolumn{2}{c}{
\begin{tabular}{c}
      VGG  \\
      \hline
      \begin{tabular}{c|c}
      PGD-AT & DEAT
 \end{tabular}
 \end{tabular}
 }\\
    \hline
    
    Pearson's $r$ ($p$-value) & 0.889 (\textbf{5.5e-10}) & 0.896 (\textbf{2.8e-10}) & 0.711 (\textbf{1.4e-04}) & 0.762 (\textbf{2.4e-05}) & 0.916 (\textbf{3.3e-10}) & 0.862 (\textbf{5.8e-08})\\
    Spearman's $\rho$ ($p$-value) & 0.965 (\textbf{3.4e-16}) & 0.922 (\textbf{7.5e-07}) & 0.998 (\textbf{<2.2e-16}) & 0.982 (\textbf{<2.2e-16}) & 0.988 (\textbf{<2.2e-16}) & 0.992 (\textbf{8.9e-07})\\
    Kendall's $\tau$ ($p$-value) & 0.907 (\textbf{3.3e-11}) & 0.818 (\textbf{1.9e-12}) & 0.982 (\textbf{5.7e-11}) & 0.927 (\textbf{6.3e-10}) & 0.932 (\textbf{1.8e-10}) & 0.956 (\textbf{<2.2e-16})\\
    \hline
\end{tabular}
\label{rank_correlation_table}
\end{table*}

Our theorem indicates ratio of learning rate and batch size (instead of batch size itself) determines generalization, which justifies the linear scaling rule in \cite{pang2021bag}, i.e., scaling the learning rate up when using larger batch, and maintaining the ratio between learning rate and batch size, would effectively preserve the robust generalization.

The side effect of adjusting batch size also demonstrates the necessity of our proposed approach, which could manipulate diffusion to boost generalization without extra computational burden.

\begin{figure*}[t!]
	\centering%
 
	\begin{subfigure}{0.30\textwidth}%
		\centering%
		\includegraphics[width=5.8cm,height=5.8cm]{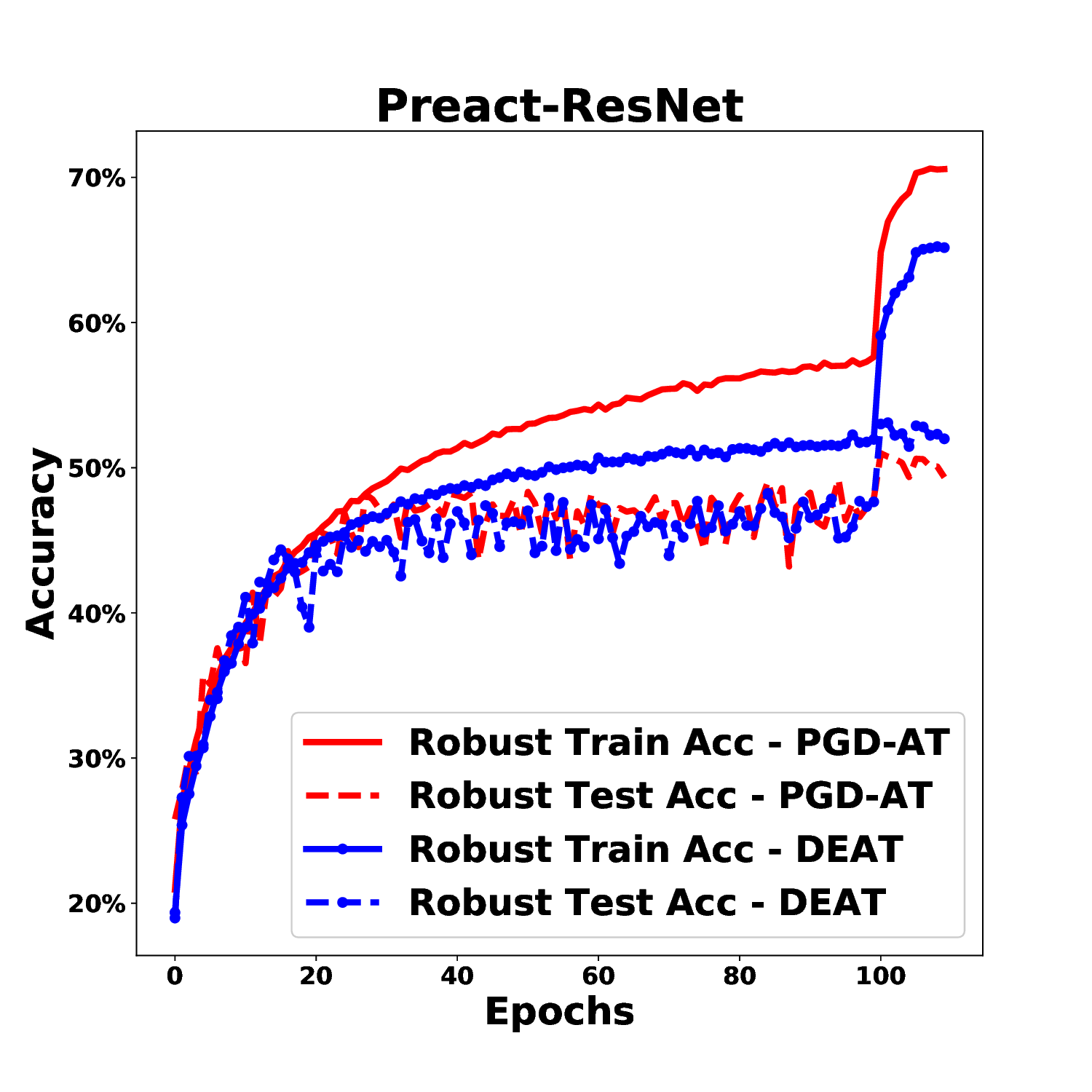}%
 
	\end{subfigure}%
	\hspace{8mm}
	\begin{subfigure}{0.30\textwidth}%
		\centering%
		\includegraphics[width=5.8cm,height=5.8cm]{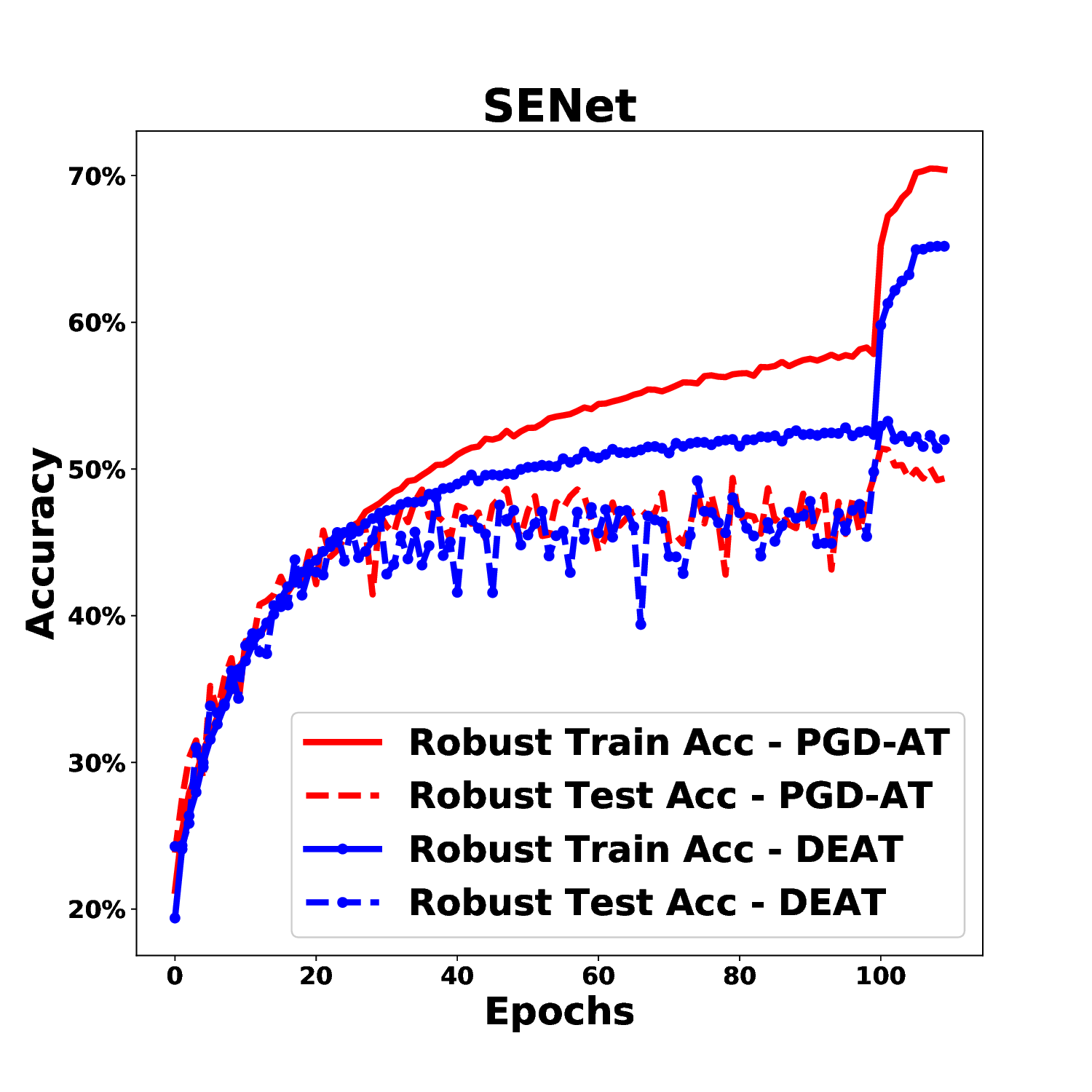}%
  
	\end{subfigure}%
		\hspace{8mm}
	\begin{subfigure}{0.30\textwidth}%
		\centering%
		\includegraphics[width=5.8cm,height=5.8cm]{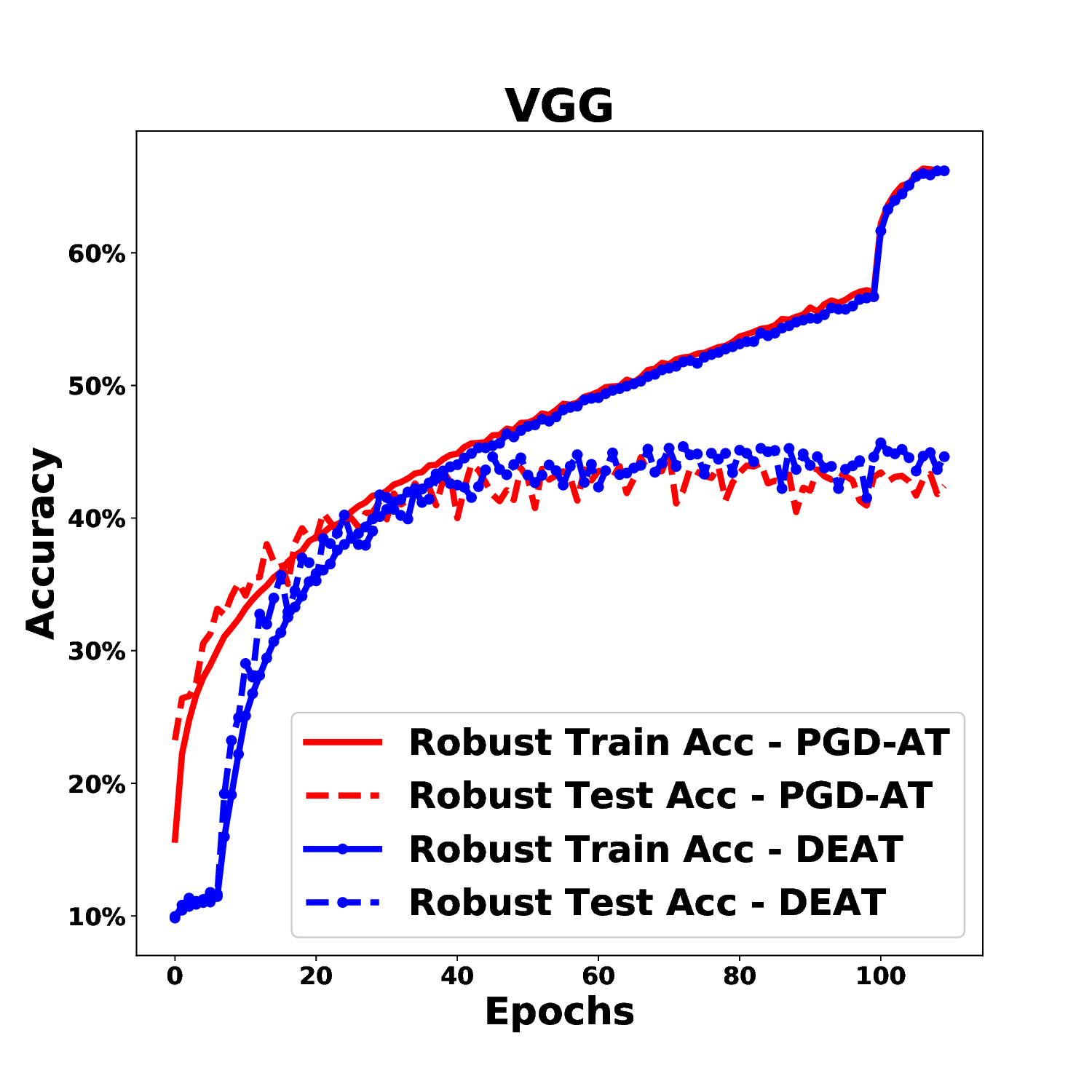}%
 
	\end{subfigure}%
 
  \centering\caption{\label{adv_training_curve_fig} Adversarial training and adversarial testing curves for vanilla PGD-AT and DEAT. DEAT performs worse in training stage, but outperforms vanilla PGD-AT in testing stage. This pattern strongly attests to the effectiveness of DEAT in alleviating overfitting.}
\end{figure*}

\subsection{DEAT Effectively Improves Robust Generalization}

We compare the robust generalization of vanilla PGD-AT and DEAT in Figure \ref{main_exp_fig} and Table \ref{main_exp_table}. 

The improvement is consistent across all different learning rates/model architectures. The improvement is even more significant when learning rate is fairly large, i.e. when the baseline is working well, in both Table \ref{main_exp_table} and Figure \ref{main_exp_fig}. Our proposed DEAT improves 1.5\% on VGG, and over 2.0\% on SENet and Preact-ResNet.

Note 1.5\% to 2.0\% improvement is very significant in robust generalization. It actually surpasses the performance gap between different model architectures. In Figure \ref{main_exp_fig}, the boosted VGG can obtain similar robust generalization compared to SENet and ResNet. \cite{pang2021bag} measures the robust generalization of virtually all popular architectures, and the range is only approximately 3\%. Considering adjusting architectures would potentially include millions of more parameters and carefully hand-crafted design, our proposed approach is nearly "free" in cost.

We plot the adversarial training and adversarial testing curves (using one specific learning rate) for all three architectures in Figure \ref{adv_training_curve_fig}. It is very interesting to observe that our proposed approach may not be better in terms of training performances (e.g. in ResNet and SENet), but it beats vanilla PGD-AT by a non-trivial margin in testing performances. It is safe to say that DEAT effectively control the level of overfitting in adversarial training.

We further do a t-test to check the statistical significance of the improvement and report the result in Table \ref{t_test_table}. Note the mean improvement in the table (e.g. 1.22\%) is averaged across all learning rates, and does not completely reflect the extent of improvement (as we pay more attention to the improvement with larger learning rates, where the improvement is larger than 1.5\%). The p-values clearly indicate a statistical significant improvement across models.

\begin{table}[htbp]
\caption{Statistical test of significance of improvement. The p-values indicate a strongly significant improvement across all architectures.}
\centering
\begin{tabular}{c|c}
\hline
Architecture & Statistical Significance of Improvement \\
\hline
Preact-ResNet & 1.22\% (\textbf{2.10e-09}) \\ 
\hline
SENet & 1.21\% (\textbf{2.11e-09}) \\ 
\hline
VGG & 1.11\% (\textbf{7.462e-10}) \\ 
\hline
\end{tabular}
\label{t_test_table}
\end{table}

\section{Conclusions}
To our best knowledge, this paper is the first study that rigorously connects the dynamics of adversarial training to the robust generalization. Specifically, we derive a generalization bound of PGD-AT, and based on this bound, point out the role of learning rate and batch size. We further propose a novel training approach Diffusion Enhanced Adversarial Training. Our extensive experiments demonstrate DEAT universally outperforms PGD-AT by a large margin with little cost, and could potentially serve as a new strong baseline in AT research.



\bibliographystyle{ACM-Reference-Format}
\bibliography{jianhui}


\begin{thebibliography}{96}


\ifx \showCODEN    \undefined \def \showCODEN     #1{\unskip}     \fi
\ifx \showDOI      \undefined \def \showDOI       #1{#1}\fi
\ifx \showISBNx    \undefined \def \showISBNx     #1{\unskip}     \fi
\ifx \showISBNxiii \undefined \def \showISBNxiii  #1{\unskip}     \fi
\ifx \showISSN     \undefined \def \showISSN      #1{\unskip}     \fi
\ifx \showLCCN     \undefined \def \showLCCN      #1{\unskip}     \fi
\ifx \shownote     \undefined \def \shownote      #1{#1}          \fi
\ifx \showarticletitle \undefined \def \showarticletitle #1{#1}   \fi
\ifx \showURL      \undefined \def \showURL       {\relax}        \fi
\providecommand\bibfield[2]{#2}
\providecommand\bibinfo[2]{#2}
\providecommand\natexlab[1]{#1}
\providecommand\showeprint[2][]{arXiv:#2}

\bibitem[An et~al\mbox{.}(2018)]%
        {An2018StochasticME}
\bibfield{author}{\bibinfo{person}{Jing An}, \bibinfo{person}{J. Lu}, {and}
  \bibinfo{person}{Lexing Ying}.} \bibinfo{year}{2018}\natexlab{}.
\newblock \showarticletitle{Stochastic modified equations for the asynchronous
  stochastic gradient descent}.
\newblock \bibinfo{journal}{\emph{ArXiv}}  \bibinfo{volume}{abs/1805.08244}
  (\bibinfo{year}{2018}).
\newblock


\bibitem[Atzmon et~al\mbox{.}(2019)]%
        {atzmon2019controlling}
\bibfield{author}{\bibinfo{person}{Matan Atzmon}, \bibinfo{person}{Niv Haim},
  \bibinfo{person}{Lior Yariv}, \bibinfo{person}{Ofer Israelov},
  \bibinfo{person}{Haggai Maron}, {and} \bibinfo{person}{Yaron Lipman}.}
  \bibinfo{year}{2019}\natexlab{}.
\newblock \showarticletitle{Controlling neural level sets}. In
  \bibinfo{booktitle}{\emph{Advances in Neural Information Processing
  Systems}}. \bibinfo{pages}{2032--2041}.
\newblock


\bibitem[Brendel et~al\mbox{.}(2018)]%
        {adversarial_vision_challenge}
\bibfield{author}{\bibinfo{person}{Wieland Brendel}, \bibinfo{person}{Jonas
  Rauber}, \bibinfo{person}{Alexey Kurakin}, \bibinfo{person}{Nicolas
  Papernot}, \bibinfo{person}{Behar Veliqi}, \bibinfo{person}{Marcel
  Salath{\'e}}, \bibinfo{person}{Sharada~P Mohanty}, {and}
  \bibinfo{person}{Matthias Bethge}.} \bibinfo{year}{2018}\natexlab{}.
\newblock \showarticletitle{Adversarial Vision Challenge}. In
  \bibinfo{booktitle}{\emph{32nd Conference on Neural Information Processing
  Systems (NIPS 2018) Competition Track}}.
\newblock
\urldef\tempurl%
\url{https://arxiv.org/abs/1808.01976}
\showURL{%
\tempurl}


\bibitem[Brown et~al\mbox{.}(2020)]%
        {brown2020language}
\bibfield{author}{\bibinfo{person}{Tom Brown}, \bibinfo{person}{Benjamin Mann},
  \bibinfo{person}{Nick Ryder}, \bibinfo{person}{Melanie Subbiah},
  \bibinfo{person}{Jared~D Kaplan}, \bibinfo{person}{Prafulla Dhariwal},
  \bibinfo{person}{Arvind Neelakantan}, \bibinfo{person}{Pranav Shyam},
  \bibinfo{person}{Girish Sastry}, \bibinfo{person}{Amanda Askell},
  {et~al\mbox{.}}} \bibinfo{year}{2020}\natexlab{}.
\newblock \showarticletitle{Language models are few-shot learners}.
\newblock \bibinfo{journal}{\emph{Advances in neural information processing
  systems}}  \bibinfo{volume}{33} (\bibinfo{year}{2020}),
  \bibinfo{pages}{1877--1901}.
\newblock


\bibitem[Cai et~al\mbox{.}(2018)]%
        {Cai2018Curriculum}
\bibfield{author}{\bibinfo{person}{Qi-Zhi Cai}, \bibinfo{person}{Chang Liu},
  {and} \bibinfo{person}{Dawn Song}.} \bibinfo{year}{2018}\natexlab{}.
\newblock \showarticletitle{Curriculum Adversarial Training}. In
  \bibinfo{booktitle}{\emph{Proceedings of the 27th International Joint
  Conference on Artificial Intelligence}} (Stockholm, Sweden)
  \emph{(\bibinfo{series}{IJCAI'18})}. \bibinfo{publisher}{AAAI Press},
  \bibinfo{pages}{3740–3747}.
\newblock
\showISBNx{9780999241127}


\bibitem[Cao and Guo(2020)]%
        {Cao2020ApproximationAC}
\bibfield{author}{\bibinfo{person}{Haoyang Cao} {and} \bibinfo{person}{Xin
  Guo}.} \bibinfo{year}{2020}\natexlab{}.
\newblock \showarticletitle{Approximation and convergence of GANs training: an
  SDE approach}.
\newblock \bibinfo{journal}{\emph{ArXiv}}  \bibinfo{volume}{abs/2006.02047}
  (\bibinfo{year}{2020}).
\newblock


\bibitem[Carmon et~al\mbox{.}(2019)]%
        {Carmon2019UnlabeledDI}
\bibfield{author}{\bibinfo{person}{Y. Carmon}, \bibinfo{person}{Aditi
  Raghunathan}, \bibinfo{person}{Ludwig Schmidt}, \bibinfo{person}{Percy
  Liang}, {and} \bibinfo{person}{John~C. Duchi}.}
  \bibinfo{year}{2019}\natexlab{}.
\newblock \showarticletitle{Unlabeled Data Improves Adversarial Robustness}. In
  \bibinfo{booktitle}{\emph{NeurIPS}}.
\newblock


\bibitem[Chakraborty et~al\mbox{.}(2018)]%
        {Chakraborty2018AdversarialAA}
\bibfield{author}{\bibinfo{person}{Anirban Chakraborty},
  \bibinfo{person}{Manaar Alam}, \bibinfo{person}{Vishal Dey},
  \bibinfo{person}{A. Chattopadhyay}, {and} \bibinfo{person}{Debdeep
  Mukhopadhyay}.} \bibinfo{year}{2018}\natexlab{}.
\newblock \showarticletitle{Adversarial Attacks and Defences: A Survey}.
\newblock \bibinfo{journal}{\emph{ArXiv}}  \bibinfo{volume}{abs/1810.00069}
  (\bibinfo{year}{2018}).
\newblock


\bibitem[Chaudhari et~al\mbox{.}(2017)]%
        {Chaudhari2017DeepRP}
\bibfield{author}{\bibinfo{person}{P. Chaudhari}, \bibinfo{person}{Adam~M.
  Oberman}, \bibinfo{person}{S. Osher}, \bibinfo{person}{Stefano Soatto}, {and}
  \bibinfo{person}{G. Carlier}.} \bibinfo{year}{2017}\natexlab{}.
\newblock \showarticletitle{Deep relaxation: partial differential equations for
  optimizing deep neural networks}.
\newblock \bibinfo{journal}{\emph{Research in the Mathematical Sciences}}
  \bibinfo{volume}{5} (\bibinfo{year}{2017}), \bibinfo{pages}{1--30}.
\newblock


\bibitem[Chen and Gu(2020)]%
        {Chen20RayS}
\bibfield{author}{\bibinfo{person}{Jinghui Chen} {and}
  \bibinfo{person}{Quanquan Gu}.} \bibinfo{year}{2020}\natexlab{}.
\newblock \showarticletitle{RayS: A Ray Searching Method for Hard-Label
  Adversarial Attack}. In \bibinfo{booktitle}{\emph{Proceedings of the 26th ACM
  SIGKDD International Conference on Knowledge Discovery \& Data Mining}}
  (Virtual Event, CA, USA) \emph{(\bibinfo{series}{KDD '20})}.
  \bibinfo{publisher}{Association for Computing Machinery},
  \bibinfo{address}{New York, NY, USA}, \bibinfo{pages}{1739–1747}.
\newblock
\showISBNx{9781450379984}
\urldef\tempurl%
\url{https://doi.org/10.1145/3394486.3403225}
\showDOI{\tempurl}


\bibitem[Chen et~al\mbox{.}(2022)]%
        {Chen22CIKMSports}
\bibfield{author}{\bibinfo{person}{Xiusi Chen}, \bibinfo{person}{Jyun-Yu
  Jiang}, \bibinfo{person}{Kun Jin}, \bibinfo{person}{Yichao Zhou},
  \bibinfo{person}{Mingyan Liu}, \bibinfo{person}{P.~Jeffrey Brantingham},
  {and} \bibinfo{person}{Wei Wang}.} \bibinfo{year}{2022}\natexlab{}.
\newblock \showarticletitle{ReLiable: Offline Reinforcement Learning for
  Tactical Strategies in Professional Basketball Games}. In
  \bibinfo{booktitle}{\emph{Proceedings of the 31st ACM International
  Conference on Information \& Knowledge Management}} (Atlanta, GA, USA)
  \emph{(\bibinfo{series}{CIKM '22})}. \bibinfo{publisher}{Association for
  Computing Machinery}, \bibinfo{address}{New York, NY, USA},
  \bibinfo{pages}{3023–3032}.
\newblock
\showISBNx{9781450392365}
\urldef\tempurl%
\url{https://doi.org/10.1145/3511808.3557105}
\showDOI{\tempurl}


\bibitem[Chen et~al\mbox{.}(2023)]%
        {Chen2023ProfessionalBP}
\bibfield{author}{\bibinfo{person}{Xiusi Chen}, \bibinfo{person}{Wei-Yao Wang},
  \bibinfo{person}{Ziniu Hu}, \bibinfo{person}{Curtis Chou},
  \bibinfo{person}{Lam~Xuan Hoang}, \bibinfo{person}{Kun Jin},
  \bibinfo{person}{Ming Liu}, \bibinfo{person}{P.~Jeffrey Brantingham}, {and}
  \bibinfo{person}{Wei Wang}.} \bibinfo{year}{2023}\natexlab{}.
\newblock \showarticletitle{Professional Basketball Player Behavior Synthesis
  via Planning with Diffusion}.
\newblock \bibinfo{journal}{\emph{ArXiv}}  \bibinfo{volume}{abs/2306.04090}
  (\bibinfo{year}{2023}).
\newblock
\urldef\tempurl%
\url{https://api.semanticscholar.org/CorpusID:259095966}
\showURL{%
\tempurl}


\bibitem[Cheng et~al\mbox{.}(2016)]%
        {google16deep&wide}
\bibfield{author}{\bibinfo{person}{Heng-Tze Cheng}, \bibinfo{person}{Levent
  Koc}, \bibinfo{person}{Jeremiah Harmsen}, \bibinfo{person}{Tal Shaked},
  \bibinfo{person}{Tushar Chandra}, \bibinfo{person}{Hrishi Aradhye},
  \bibinfo{person}{Glen Anderson}, \bibinfo{person}{Greg Corrado},
  \bibinfo{person}{Wei Chai}, \bibinfo{person}{Mustafa Ispir},
  \bibinfo{person}{Rohan Anil}, \bibinfo{person}{Zakaria Haque},
  \bibinfo{person}{Lichan Hong}, \bibinfo{person}{Vihan Jain},
  \bibinfo{person}{Xiaobing Liu}, {and} \bibinfo{person}{Hemal Shah}.}
  \bibinfo{year}{2016}\natexlab{}.
\newblock \showarticletitle{Wide \& Deep Learning for Recommender Systems}. In
  \bibinfo{booktitle}{\emph{Proceedings of the 1st Workshop on Deep Learning
  for Recommender Systems}} (Boston, MA, USA) \emph{(\bibinfo{series}{DLRS
  2016})}. \bibinfo{publisher}{Association for Computing Machinery},
  \bibinfo{address}{New York, NY, USA}, \bibinfo{pages}{7–10}.
\newblock
\showISBNx{9781450347952}
\urldef\tempurl%
\url{https://doi.org/10.1145/2988450.2988454}
\showDOI{\tempurl}


\bibitem[Croce and Hein(2020)]%
        {Francesco20autoattack}
\bibfield{author}{\bibinfo{person}{Francesco Croce} {and}
  \bibinfo{person}{Matthias Hein}.} \bibinfo{year}{2020}\natexlab{}.
\newblock \showarticletitle{Reliable Evaluation of Adversarial Robustness with
  an Ensemble of Diverse Parameter-Free Attacks}. In
  \bibinfo{booktitle}{\emph{Proceedings of the 37th International Conference on
  Machine Learning}} \emph{(\bibinfo{series}{ICML'20})}.
  \bibinfo{publisher}{JMLR.org}, Article \bibinfo{articleno}{206},
  \bibinfo{numpages}{11}~pages.
\newblock


\bibitem[Daneshmand et~al\mbox{.}(2018)]%
        {daneshmand18saddle}
\bibfield{author}{\bibinfo{person}{Hadi Daneshmand}, \bibinfo{person}{Jonas
  Kohler}, \bibinfo{person}{Aurelien Lucchi}, {and} \bibinfo{person}{Thomas
  Hofmann}.} \bibinfo{year}{2018}\natexlab{}.
\newblock \showarticletitle{Escaping Saddles with Stochastic Gradients}. In
  \bibinfo{booktitle}{\emph{Proceedings of the 35th International Conference on
  Machine Learning}} \emph{(\bibinfo{series}{Proceedings of Machine Learning
  Research}, Vol.~\bibinfo{volume}{80})},
  \bibfield{editor}{\bibinfo{person}{Jennifer Dy} {and}
  \bibinfo{person}{Andreas Krause}} (Eds.). \bibinfo{publisher}{PMLR},
  \bibinfo{pages}{1155--1164}.
\newblock
\urldef\tempurl%
\url{https://proceedings.mlr.press/v80/daneshmand18a.html}
\showURL{%
\tempurl}


\bibitem[Devlin et~al\mbox{.}(2019)]%
        {Devlin2019BERT}
\bibfield{author}{\bibinfo{person}{Jacob Devlin}, \bibinfo{person}{Ming-Wei
  Chang}, \bibinfo{person}{Kenton Lee}, {and} \bibinfo{person}{Kristina
  Toutanova}.} \bibinfo{year}{2019}\natexlab{}.
\newblock \showarticletitle{BERT: Pre-training of Deep Bidirectional
  Transformers for Language Understanding}.
\newblock \bibinfo{journal}{\emph{ArXiv}}  \bibinfo{volume}{abs/1810.04805}
  (\bibinfo{year}{2019}).
\newblock


\bibitem[Ding et~al\mbox{.}(2020)]%
        {Ding2020MMA}
\bibfield{author}{\bibinfo{person}{Gavin~Weiguang Ding}, \bibinfo{person}{Yash
  Sharma}, \bibinfo{person}{Kry Yik~Chau Lui}, {and} \bibinfo{person}{Ruitong
  Huang}.} \bibinfo{year}{2020}\natexlab{}.
\newblock \showarticletitle{MMA Training: Direct Input Space Margin
  Maximization through Adversarial Training}. In
  \bibinfo{booktitle}{\emph{International Conference on Learning
  Representations}}.
\newblock
\urldef\tempurl%
\url{https://openreview.net/forum?id=HkeryxBtPB}
\showURL{%
\tempurl}


\bibitem[Ding et~al\mbox{.}(2021)]%
        {DingZhou21WWW}
\bibfield{author}{\bibinfo{person}{Kaize Ding}, \bibinfo{person}{Qinghai Zhou},
  \bibinfo{person}{Hanghang Tong}, {and} \bibinfo{person}{Huan Liu}.}
  \bibinfo{year}{2021}\natexlab{}.
\newblock \showarticletitle{Few-Shot Network Anomaly Detection via
  Cross-Network Meta-Learning}. In \bibinfo{booktitle}{\emph{Proceedings of the
  Web Conference 2021}} (Ljubljana, Slovenia) \emph{(\bibinfo{series}{WWW
  '21})}. \bibinfo{publisher}{Association for Computing Machinery},
  \bibinfo{address}{New York, NY, USA}, \bibinfo{pages}{2448–2456}.
\newblock
\showISBNx{9781450383127}
\urldef\tempurl%
\url{https://doi.org/10.1145/3442381.3449922}
\showDOI{\tempurl}


\bibitem[Dosovitskiy et~al\mbox{.}(2020)]%
        {dosovitskiy2020vit}
\bibfield{author}{\bibinfo{person}{Alexey Dosovitskiy}, \bibinfo{person}{Lucas
  Beyer}, \bibinfo{person}{Alexander Kolesnikov}, \bibinfo{person}{Dirk
  Weissenborn}, \bibinfo{person}{Xiaohua Zhai}, \bibinfo{person}{Thomas
  Unterthiner}, \bibinfo{person}{Mostafa Dehghani}, \bibinfo{person}{Matthias
  Minderer}, \bibinfo{person}{Georg Heigold}, \bibinfo{person}{Sylvain Gelly},
  {et~al\mbox{.}}} \bibinfo{year}{2020}\natexlab{}.
\newblock \showarticletitle{An image is worth 16x16 words: Transformers for
  image recognition at scale}.
\newblock \bibinfo{journal}{\emph{arXiv preprint arXiv:2010.11929}}
  (\bibinfo{year}{2020}).
\newblock


\bibitem[Du and Hu(2019)]%
        {Du2019Optim}
\bibfield{author}{\bibinfo{person}{Simon~S. Du} {and} \bibinfo{person}{Wei
  Hu}.} \bibinfo{year}{2019}\natexlab{}.
\newblock \showarticletitle{Width Provably Matters in Optimization for Deep
  Linear Neural Networks}.
\newblock \bibinfo{journal}{\emph{CoRR}}  \bibinfo{volume}{abs/1901.08572}
  (\bibinfo{year}{2019}).
\newblock
\showeprint[arxiv]{1901.08572}
\urldef\tempurl%
\url{http://arxiv.org/abs/1901.08572}
\showURL{%
\tempurl}


\bibitem[Gitman et~al\mbox{.}(2019)]%
        {Gitman19Momentum}
\bibfield{author}{\bibinfo{person}{Igor Gitman}, \bibinfo{person}{Hunter Lang},
  \bibinfo{person}{Pengchuan Zhang}, {and} \bibinfo{person}{Lin Xiao}.}
  \bibinfo{year}{2019}\natexlab{}.
\newblock \showarticletitle{Understanding the Role of Momentum in Stochastic
  Gradient Methods}.
\newblock In \bibinfo{booktitle}{\emph{Advances in Neural Information
  Processing Systems 32}}. \bibinfo{publisher}{Curran Associates, Inc.},
  \bibinfo{pages}{9633--9643}.
\newblock


\bibitem[Goodfellow et~al\mbox{.}(2015)]%
        {Goodfellow2015ExplainingAH}
\bibfield{author}{\bibinfo{person}{I. Goodfellow}, \bibinfo{person}{Jonathon
  Shlens}, {and} \bibinfo{person}{Christian Szegedy}.}
  \bibinfo{year}{2015}\natexlab{}.
\newblock \showarticletitle{Explaining and Harnessing Adversarial Examples}.
\newblock \bibinfo{journal}{\emph{CoRR}}  \bibinfo{volume}{abs/1412.6572}
  (\bibinfo{year}{2015}).
\newblock


\bibitem[Goyal et~al\mbox{.}(2017)]%
        {GoyalDGNWKTJH17LargeMinibatch}
\bibfield{author}{\bibinfo{person}{Priya Goyal}, \bibinfo{person}{Piotr
  Doll{\'{a}}r}, \bibinfo{person}{Ross~B. Girshick}, \bibinfo{person}{Pieter
  Noordhuis}, \bibinfo{person}{Lukasz Wesolowski}, \bibinfo{person}{Aapo
  Kyrola}, \bibinfo{person}{Andrew Tulloch}, \bibinfo{person}{Yangqing Jia},
  {and} \bibinfo{person}{Kaiming He}.} \bibinfo{year}{2017}\natexlab{}.
\newblock \showarticletitle{Accurate, Large Minibatch {SGD:} Training ImageNet
  in 1 Hour}.
\newblock \bibinfo{journal}{\emph{CoRR}}  \bibinfo{volume}{abs/1706.02677}
  (\bibinfo{year}{2017}).
\newblock
\showeprint[arxiv]{1706.02677}
\urldef\tempurl%
\url{http://arxiv.org/abs/1706.02677}
\showURL{%
\tempurl}


\bibitem[Gu and Guo(2021)]%
        {Gu2021AT_SDE}
\bibfield{author}{\bibinfo{person}{Haotian Gu} {and} \bibinfo{person}{Xin
  Guo}.} \bibinfo{year}{2021}\natexlab{}.
\newblock \showarticletitle{An SDE Framework for Adversarial Training, with
  Convergence and Robustness Analysis}.
\newblock \bibinfo{journal}{\emph{ArXiv}}  \bibinfo{volume}{abs/2105.08037}
  (\bibinfo{year}{2021}).
\newblock


\bibitem[Guedj(2019)]%
        {Guedj2019APO}
\bibfield{author}{\bibinfo{person}{Benjamin Guedj}.}
  \bibinfo{year}{2019}\natexlab{}.
\newblock \showarticletitle{A Primer on PAC-Bayesian Learning}.
\newblock \bibinfo{journal}{\emph{ArXiv}}  \bibinfo{volume}{abs/1901.05353}
  (\bibinfo{year}{2019}).
\newblock


\bibitem[Hamilton et~al\mbox{.}(2017)]%
        {Jure2017GNN}
\bibfield{author}{\bibinfo{person}{Will Hamilton}, \bibinfo{person}{Zhitao
  Ying}, {and} \bibinfo{person}{Jure Leskovec}.}
  \bibinfo{year}{2017}\natexlab{}.
\newblock \showarticletitle{Inductive Representation Learning on Large Graphs}.
  In \bibinfo{booktitle}{\emph{Advances in Neural Information Processing
  Systems}}, \bibfield{editor}{\bibinfo{person}{I.~Guyon},
  \bibinfo{person}{U.~Von Luxburg}, \bibinfo{person}{S.~Bengio},
  \bibinfo{person}{H.~Wallach}, \bibinfo{person}{R.~Fergus},
  \bibinfo{person}{S.~Vishwanathan}, {and} \bibinfo{person}{R.~Garnett}}
  (Eds.), Vol.~\bibinfo{volume}{30}. \bibinfo{publisher}{Curran Associates,
  Inc.}
\newblock


\bibitem[He et~al\mbox{.}(2019)]%
        {He19ControlBatch}
\bibfield{author}{\bibinfo{person}{Fengxiang He}, \bibinfo{person}{Tongliang
  Liu}, {and} \bibinfo{person}{Dacheng Tao}.} \bibinfo{year}{2019}\natexlab{}.
\newblock \showarticletitle{Control Batch Size and Learning Rate to Generalize
  Well: Theoretical and Empirical Evidence}.
\newblock In \bibinfo{booktitle}{\emph{Advances in Neural Information
  Processing Systems 32}}. \bibinfo{publisher}{Curran Associates, Inc.},
  \bibinfo{pages}{1143--1152}.
\newblock


\bibitem[{He} et~al\mbox{.}(2016)]%
        {He16Res}
\bibfield{author}{\bibinfo{person}{K. {He}}, \bibinfo{person}{X. {Zhang}},
  \bibinfo{person}{S. {Ren}}, {and} \bibinfo{person}{J. {Sun}}.}
  \bibinfo{year}{2016}\natexlab{}.
\newblock \showarticletitle{Deep Residual Learning for Image Recognition}. In
  \bibinfo{booktitle}{\emph{2016 IEEE Conference on Computer Vision and Pattern
  Recognition (CVPR)}}. \bibinfo{pages}{770--778}.
\newblock


\bibitem[He et~al\mbox{.}(2016a)]%
        {He2016DeepResNet}
\bibfield{author}{\bibinfo{person}{Kaiming He}, \bibinfo{person}{X. Zhang},
  \bibinfo{person}{Shaoqing Ren}, {and} \bibinfo{person}{Jian Sun}.}
  \bibinfo{year}{2016}\natexlab{a}.
\newblock \showarticletitle{Deep Residual Learning for Image Recognition}.
\newblock \bibinfo{journal}{\emph{2016 IEEE Conference on Computer Vision and
  Pattern Recognition (CVPR)}} (\bibinfo{year}{2016}),
  \bibinfo{pages}{770--778}.
\newblock


\bibitem[He et~al\mbox{.}(2016b)]%
        {He2016IdentityMI}
\bibfield{author}{\bibinfo{person}{Kaiming He}, \bibinfo{person}{X. Zhang},
  \bibinfo{person}{Shaoqing Ren}, {and} \bibinfo{person}{Jian Sun}.}
  \bibinfo{year}{2016}\natexlab{b}.
\newblock \showarticletitle{Identity Mappings in Deep Residual Networks}.
\newblock \bibinfo{journal}{\emph{ArXiv}}  \bibinfo{volume}{abs/1603.05027}
  (\bibinfo{year}{2016}).
\newblock


\bibitem[Hendrycks et~al\mbox{.}(2019)]%
        {hendrycks2019pretraining}
\bibfield{author}{\bibinfo{person}{Dan Hendrycks}, \bibinfo{person}{Kimin Lee},
  {and} \bibinfo{person}{Mantas Mazeika}.} \bibinfo{year}{2019}\natexlab{}.
\newblock \showarticletitle{Using Pre-Training Can Improve Model Robustness and
  Uncertainty}.
\newblock \bibinfo{journal}{\emph{Proceedings of the International Conference
  on Machine Learning}} (\bibinfo{year}{2019}).
\newblock


\bibitem[Hu et~al\mbox{.}(2018b)]%
        {CVPR18SENet}
\bibfield{author}{\bibinfo{person}{Jie Hu}, \bibinfo{person}{Li Shen}, {and}
  \bibinfo{person}{Gang Sun}.} \bibinfo{year}{2018}\natexlab{b}.
\newblock \showarticletitle{Squeeze-and-Excitation Networks}. In
  \bibinfo{booktitle}{\emph{2018 IEEE/CVF Conference on Computer Vision and
  Pattern Recognition}}. \bibinfo{pages}{7132--7141}.
\newblock
\urldef\tempurl%
\url{https://doi.org/10.1109/CVPR.2018.00745}
\showDOI{\tempurl}


\bibitem[Hu et~al\mbox{.}(2018a)]%
        {hu2018diffusion}
\bibfield{author}{\bibinfo{person}{Wenqing Hu}, \bibinfo{person}{Chris~Junchi
  Li}, \bibinfo{person}{Lei Li}, {and} \bibinfo{person}{Jian-Guo Liu}.}
  \bibinfo{year}{2018}\natexlab{a}.
\newblock \bibinfo{title}{On the diffusion approximation of nonconvex
  stochastic gradient descent}.
\newblock
\newblock
\showeprint[arxiv]{1705.07562}~[stat.ML]


\bibitem[Huai et~al\mbox{.}(2020)]%
        {Huai20attack}
\bibfield{author}{\bibinfo{person}{Mengdi Huai}, \bibinfo{person}{Jianhui Sun},
  \bibinfo{person}{Renqin Cai}, \bibinfo{person}{Liuyi Yao}, {and}
  \bibinfo{person}{Aidong Zhang}.} \bibinfo{year}{2020}\natexlab{}.
\newblock \showarticletitle{Malicious Attacks against Deep Reinforcement
  Learning Interpretations}. In \bibinfo{booktitle}{\emph{Proceedings of the
  26th ACM SIGKDD International Conference on Knowledge Discovery \& Data
  Mining}} (Virtual Event, CA, USA) \emph{(\bibinfo{series}{KDD '20})}.
  \bibinfo{publisher}{Association for Computing Machinery},
  \bibinfo{address}{New York, NY, USA}, \bibinfo{pages}{472–482}.
\newblock
\showISBNx{9781450379984}
\urldef\tempurl%
\url{https://doi.org/10.1145/3394486.3403089}
\showDOI{\tempurl}


\bibitem[Huang et~al\mbox{.}(2021)]%
        {huang2021efficient}
\bibfield{author}{\bibinfo{person}{Feihu Huang}, \bibinfo{person}{Xidong Wu},
  {and} \bibinfo{person}{Heng Huang}.} \bibinfo{year}{2021}\natexlab{}.
\newblock \showarticletitle{Efficient mirror descent ascent methods for
  nonsmooth minimax problems}.
\newblock \bibinfo{journal}{\emph{Advances in Neural Information Processing
  Systems (NeurIPS)}}  \bibinfo{volume}{34} (\bibinfo{year}{2021}),
  \bibinfo{pages}{10431--10443}.
\newblock


\bibitem[{Huang} et~al\mbox{.}(2017)]%
        {Huang2017DenseNet}
\bibfield{author}{\bibinfo{person}{G. {Huang}}, \bibinfo{person}{Z. {Liu}},
  \bibinfo{person}{L. {Van Der Maaten}}, {and} \bibinfo{person}{K.~Q.
  {Weinberger}}.} \bibinfo{year}{2017}\natexlab{}.
\newblock \showarticletitle{Densely Connected Convolutional Networks}. In
  \bibinfo{booktitle}{\emph{2017 IEEE Conference on Computer Vision and Pattern
  Recognition (CVPR)}}. \bibinfo{pages}{2261--2269}.
\newblock


\bibitem[Jastrzębski et~al\mbox{.}(2018)]%
        {jastrzebski2018three}
\bibfield{author}{\bibinfo{person}{Stanisław Jastrzębski},
  \bibinfo{person}{Zac Kenton}, \bibinfo{person}{Devansh Arpit},
  \bibinfo{person}{Nicolas Ballas}, \bibinfo{person}{Asja Fischer},
  \bibinfo{person}{Amos Storkey}, {and} \bibinfo{person}{Yoshua Bengio}.}
  \bibinfo{year}{2018}\natexlab{}.
\newblock \bibinfo{title}{Three factors influencing minima in {SGD}}.
\newblock
\newblock
\urldef\tempurl%
\url{https://openreview.net/forum?id=rJma2bZCW}
\showURL{%
\tempurl}


\bibitem[Keskar et~al\mbox{.}(2016)]%
        {Keskar16Large-Batch}
\bibfield{author}{\bibinfo{person}{Nitish~Shirish Keskar},
  \bibinfo{person}{Dheevatsa Mudigere}, \bibinfo{person}{Jorge Nocedal},
  \bibinfo{person}{Mikhail Smelyanskiy}, {and} \bibinfo{person}{Ping Tak~Peter
  Tang}.} \bibinfo{year}{2016}\natexlab{}.
\newblock \showarticletitle{On Large-Batch Training for Deep Learning:
  Generalization Gap and Sharp Minima}.
\newblock \bibinfo{journal}{\emph{CoRR}}  \bibinfo{volume}{abs/1609.04836}
  (\bibinfo{year}{2016}).
\newblock
\showeprint[arxiv]{1609.04836}
\urldef\tempurl%
\url{http://arxiv.org/abs/1609.04836}
\showURL{%
\tempurl}


\bibitem[Kipf and Welling(2016)]%
        {kipf2016gcn}
\bibfield{author}{\bibinfo{person}{Thomas~N Kipf} {and} \bibinfo{person}{Max
  Welling}.} \bibinfo{year}{2016}\natexlab{}.
\newblock \showarticletitle{Semi-supervised classification with graph
  convolutional networks}.
\newblock \bibinfo{journal}{\emph{arXiv preprint arXiv:1609.02907}}
  (\bibinfo{year}{2016}).
\newblock


\bibitem[Krichene and Bartlett(2017)]%
        {Krichene17Mirror}
\bibfield{author}{\bibinfo{person}{Walid Krichene} {and} \bibinfo{person}{Peter
  Bartlett}.} \bibinfo{year}{2017}\natexlab{}.
\newblock \showarticletitle{Acceleration and Averaging in Stochastic Descent
  Dynamics}. In \bibinfo{booktitle}{\emph{Proceedings of the 31st International
  Conference on Neural Information Processing Systems}} (Long Beach,
  California, USA) \emph{(\bibinfo{series}{NIPS'17})}.
  \bibinfo{publisher}{Curran Associates Inc.}, \bibinfo{address}{Red Hook, NY,
  USA}, \bibinfo{pages}{6799–6809}.
\newblock
\showISBNx{9781510860964}


\bibitem[Kurakin et~al\mbox{.}(2018)]%
        {Kurakin2018AdversarialAA}
\bibfield{author}{\bibinfo{person}{A. Kurakin}, \bibinfo{person}{I.
  Goodfellow}, \bibinfo{person}{Samy Bengio}, \bibinfo{person}{Yinpeng Dong},
  \bibinfo{person}{Fangzhou Liao}, \bibinfo{person}{Ming Liang},
  \bibinfo{person}{Tianyu Pang}, \bibinfo{person}{Jun Zhu},
  \bibinfo{person}{Xiaolin Hu}, \bibinfo{person}{Cihang Xie},
  \bibinfo{person}{Jianyu Wang}, \bibinfo{person}{Zhishuai Zhang},
  \bibinfo{person}{Zhou Ren}, \bibinfo{person}{A. Yuille},
  \bibinfo{person}{Sangxia Huang}, \bibinfo{person}{Yao Zhao},
  \bibinfo{person}{Yuzhe Zhao}, \bibinfo{person}{Zhonglin Han},
  \bibinfo{person}{Junjiajia Long}, \bibinfo{person}{Yerkebulan Berdibekov},
  \bibinfo{person}{Takuya Akiba}, \bibinfo{person}{Seiya Tokui}, {and}
  \bibinfo{person}{Motoki Abe}.} \bibinfo{year}{2018}\natexlab{}.
\newblock \showarticletitle{Adversarial Attacks and Defences Competition}.
\newblock \bibinfo{journal}{\emph{ArXiv}}  \bibinfo{volume}{abs/1804.00097}
  (\bibinfo{year}{2018}).
\newblock


\bibitem[Li et~al\mbox{.}(2018)]%
        {Li2018Visualization}
\bibfield{author}{\bibinfo{person}{Hao Li}, \bibinfo{person}{Zheng Xu},
  \bibinfo{person}{Gavin Taylor}, \bibinfo{person}{Christoph Studer}, {and}
  \bibinfo{person}{Tom Goldstein}.} \bibinfo{year}{2018}\natexlab{}.
\newblock \showarticletitle{Visualizing the Loss Landscape of Neural Nets}. In
  \bibinfo{booktitle}{\emph{Proceedings of the 32nd International Conference on
  Neural Information Processing Systems}} (Montr\'{e}al, Canada)
  \emph{(\bibinfo{series}{NIPS’18})}. \bibinfo{publisher}{Curran Associates
  Inc.}, \bibinfo{address}{Red Hook, NY, USA}, \bibinfo{pages}{6391–6401}.
\newblock


\bibitem[Li et~al\mbox{.}(2017)]%
        {Li17SME}
\bibfield{author}{\bibinfo{person}{Qianxiao Li}, \bibinfo{person}{Cheng Tai},
  {and} \bibinfo{person}{Weinan E}.} \bibinfo{year}{2017}\natexlab{}.
\newblock \showarticletitle{Stochastic Modified Equations and Adaptive
  Stochastic Gradient Algorithms} \emph{(\bibinfo{series}{Proceedings of
  Machine Learning Research}, Vol.~\bibinfo{volume}{70})}.
  \bibinfo{publisher}{PMLR}, \bibinfo{address}{International Convention Centre,
  Sydney, Australia}, \bibinfo{pages}{2101--2110}.
\newblock
\urldef\tempurl%
\url{http://proceedings.mlr.press/v70/li17f.html}
\showURL{%
\tempurl}


\bibitem[Lin et~al\mbox{.}(2020a)]%
        {lin20minimaxa}
\bibfield{author}{\bibinfo{person}{Tianyi Lin}, \bibinfo{person}{Chi Jin},
  {and} \bibinfo{person}{Michael Jordan}.} \bibinfo{year}{2020}\natexlab{a}.
\newblock \showarticletitle{On Gradient Descent Ascent for Nonconvex-Concave
  Minimax Problems}. In \bibinfo{booktitle}{\emph{Proceedings of the 37th
  International Conference on Machine Learning}}
  \emph{(\bibinfo{series}{Proceedings of Machine Learning Research},
  Vol.~\bibinfo{volume}{119})}, \bibfield{editor}{\bibinfo{person}{Hal~Daumé
  III} {and} \bibinfo{person}{Aarti Singh}} (Eds.). \bibinfo{publisher}{PMLR},
  \bibinfo{pages}{6083--6093}.
\newblock
\urldef\tempurl%
\url{https://proceedings.mlr.press/v119/lin20a.html}
\showURL{%
\tempurl}


\bibitem[Lin et~al\mbox{.}(2020b)]%
        {lin20minimaxb}
\bibfield{author}{\bibinfo{person}{Tianyi Lin}, \bibinfo{person}{Chi Jin},
  {and} \bibinfo{person}{Michael~I. Jordan}.} \bibinfo{year}{2020}\natexlab{b}.
\newblock \showarticletitle{Near-Optimal Algorithms for Minimax Optimization}.
  In \bibinfo{booktitle}{\emph{Proceedings of Thirty Third Conference on
  Learning Theory}} \emph{(\bibinfo{series}{Proceedings of Machine Learning
  Research}, Vol.~\bibinfo{volume}{125})},
  \bibfield{editor}{\bibinfo{person}{Jacob Abernethy} {and}
  \bibinfo{person}{Shivani Agarwal}} (Eds.). \bibinfo{publisher}{PMLR},
  \bibinfo{pages}{2738--2779}.
\newblock
\urldef\tempurl%
\url{https://proceedings.mlr.press/v125/lin20a.html}
\showURL{%
\tempurl}


\bibitem[Liu et~al\mbox{.}(2021)]%
        {liu2021noise}
\bibfield{author}{\bibinfo{person}{Kangqiao Liu}, \bibinfo{person}{Liu Ziyin},
  {and} \bibinfo{person}{Masahito Ueda}.} \bibinfo{year}{2021}\natexlab{}.
\newblock \bibinfo{title}{Noise and Fluctuation of Finite Learning Rate
  Stochastic Gradient Descent}.
\newblock
\newblock
\showeprint[arxiv]{2012.03636}~[stat.ML]


\bibitem[London(2017)]%
        {London2017APA}
\bibfield{author}{\bibinfo{person}{Ben London}.}
  \bibinfo{year}{2017}\natexlab{}.
\newblock \showarticletitle{A PAC-Bayesian Analysis of Randomized Learning with
  Application to Stochastic Gradient Descent}. In
  \bibinfo{booktitle}{\emph{NIPS}}.
\newblock


\bibitem[Madry et~al\mbox{.}(2018)]%
        {madry2018towards}
\bibfield{author}{\bibinfo{person}{Aleksander Madry},
  \bibinfo{person}{Aleksandar Makelov}, \bibinfo{person}{Ludwig Schmidt},
  \bibinfo{person}{Dimitris Tsipras}, {and} \bibinfo{person}{Adrian Vladu}.}
  \bibinfo{year}{2018}\natexlab{}.
\newblock \showarticletitle{Towards Deep Learning Models Resistant to
  Adversarial Attacks}. In \bibinfo{booktitle}{\emph{International Conference
  on Learning Representations}}.
\newblock
\urldef\tempurl%
\url{https://openreview.net/forum?id=rJzIBfZAb}
\showURL{%
\tempurl}


\bibitem[Mandt et~al\mbox{.}(2017)]%
        {Mandt17SGDApproximateBayesian}
\bibfield{author}{\bibinfo{person}{Stephan Mandt}, \bibinfo{person}{Matthew~D.
  Hoffman}, {and} \bibinfo{person}{David~M. Blei}.}
  \bibinfo{year}{2017}\natexlab{}.
\newblock \showarticletitle{Stochastic Gradient Descent as Approximate Bayesian
  Inference}.
\newblock \bibinfo{journal}{\emph{J. Mach. Learn. Res.}} \bibinfo{volume}{18},
  \bibinfo{number}{1} (\bibinfo{date}{Jan.} \bibinfo{year}{2017}),
  \bibinfo{pages}{4873–4907}.
\newblock
\showISSN{1532-4435}


\bibitem[Mao et~al\mbox{.}(2019)]%
        {Mao2019MetricLF}
\bibfield{author}{\bibinfo{person}{Chengzhi Mao}, \bibinfo{person}{Ziyuan
  Zhong}, \bibinfo{person}{Junfeng Yang}, \bibinfo{person}{Carl Vondrick},
  {and} \bibinfo{person}{Baishakhi Ray}.} \bibinfo{year}{2019}\natexlab{}.
\newblock \showarticletitle{Metric Learning for Adversarial Robustness}. In
  \bibinfo{booktitle}{\emph{NeurIPS}}.
\newblock


\bibitem[McAllester(1998)]%
        {McAllester98PAC-Bayes}
\bibfield{author}{\bibinfo{person}{David~A. McAllester}.}
  \bibinfo{year}{1998}\natexlab{}.
\newblock \showarticletitle{Some PAC-Bayesian Theorems}. In
  \bibinfo{booktitle}{\emph{Proceedings of the Eleventh Annual Conference on
  Computational Learning Theory}} (Madison, Wisconsin, USA)
  \emph{(\bibinfo{series}{COLT' 98})}. \bibinfo{publisher}{Association for
  Computing Machinery}, \bibinfo{address}{New York, NY, USA},
  \bibinfo{pages}{230–234}.
\newblock
\showISBNx{1581130570}
\urldef\tempurl%
\url{https://doi.org/10.1145/279943.279989}
\showDOI{\tempurl}


\bibitem[Mnih et~al\mbox{.}(2013)]%
        {Mnih2013PlayingAW}
\bibfield{author}{\bibinfo{person}{V. Mnih}, \bibinfo{person}{K. Kavukcuoglu},
  \bibinfo{person}{D. Silver}, \bibinfo{person}{A. Graves},
  \bibinfo{person}{Ioannis Antonoglou}, \bibinfo{person}{Daan Wierstra}, {and}
  \bibinfo{person}{Martin~A. Riedmiller}.} \bibinfo{year}{2013}\natexlab{}.
\newblock \showarticletitle{Playing Atari with Deep Reinforcement Learning}.
\newblock \bibinfo{journal}{\emph{ArXiv}}  \bibinfo{volume}{abs/1312.5602}
  (\bibinfo{year}{2013}).
\newblock


\bibitem[Nesterov(1983)]%
        {Nesterov1983NAG}
\bibfield{author}{\bibinfo{person}{Y. Nesterov}.}
  \bibinfo{year}{1983}\natexlab{}.
\newblock \showarticletitle{A method for solving the convex programming problem
  with convergence rate O(1/k^2)}.
\newblock


\bibitem[Nguyen et~al\mbox{.}(2015)]%
        {nguyen2015adversarial}
\bibfield{author}{\bibinfo{person}{Anh Nguyen}, \bibinfo{person}{Jason
  Yosinski}, {and} \bibinfo{person}{Jeff Clune}.}
  \bibinfo{year}{2015}\natexlab{}.
\newblock \showarticletitle{Deep neural networks are easily fooled: High
  confidence predictions for unrecognizable images}. In
  \bibinfo{booktitle}{\emph{Proceedings of the IEEE conference on computer
  vision and pattern recognition}}. \bibinfo{pages}{427--436}.
\newblock


\bibitem[Pang et~al\mbox{.}(2020)]%
        {Pang2020Rethinking}
\bibfield{author}{\bibinfo{person}{Tianyu Pang}, \bibinfo{person}{Kun Xu},
  \bibinfo{person}{Yinpeng Dong}, \bibinfo{person}{Chao Du},
  \bibinfo{person}{Ning Chen}, {and} \bibinfo{person}{Jun Zhu}.}
  \bibinfo{year}{2020}\natexlab{}.
\newblock \showarticletitle{Rethinking Softmax Cross-Entropy Loss for
  Adversarial Robustness}. In \bibinfo{booktitle}{\emph{International
  Conference on Learning Representations}}.
\newblock
\urldef\tempurl%
\url{https://openreview.net/forum?id=Byg9A24tvB}
\showURL{%
\tempurl}


\bibitem[Pang et~al\mbox{.}(2021)]%
        {pang2021bag}
\bibfield{author}{\bibinfo{person}{Tianyu Pang}, \bibinfo{person}{Xiao Yang},
  \bibinfo{person}{Yinpeng Dong}, \bibinfo{person}{Hang Su}, {and}
  \bibinfo{person}{Jun Zhu}.} \bibinfo{year}{2021}\natexlab{}.
\newblock \showarticletitle{Bag of Tricks for Adversarial Training}. In
  \bibinfo{booktitle}{\emph{International Conference on Learning
  Representations}}.
\newblock
\urldef\tempurl%
\url{https://openreview.net/forum?id=Xb8xvrtB8Ce}
\showURL{%
\tempurl}


\bibitem[Polyak(1964)]%
        {POLYAK1964HeavyBall}
\bibfield{author}{\bibinfo{person}{B.T. Polyak}.}
  \bibinfo{year}{1964}\natexlab{}.
\newblock \showarticletitle{Some methods of speeding up the convergence of
  iteration methods}.
\newblock \bibinfo{journal}{\emph{U. S. S. R. Comput. Math. and Math. Phys.}}
  \bibinfo{volume}{4}, \bibinfo{number}{5} (\bibinfo{year}{1964}),
  \bibinfo{pages}{1 -- 17}.
\newblock
\showISSN{0041-5553}
\urldef\tempurl%
\url{https://doi.org/10.1016/0041-5553(64)90137-5}
\showDOI{\tempurl}


\bibitem[Qin et~al\mbox{.}(2019)]%
        {Qin2019AdversarialRT}
\bibfield{author}{\bibinfo{person}{Chongli Qin}, \bibinfo{person}{James
  Martens}, \bibinfo{person}{Sven Gowal}, \bibinfo{person}{Dilip Krishnan},
  \bibinfo{person}{Krishnamurthy Dvijotham}, \bibinfo{person}{Alhussein Fawzi},
  \bibinfo{person}{Soham De}, \bibinfo{person}{Robert Stanforth}, {and}
  \bibinfo{person}{P. Kohli}.} \bibinfo{year}{2019}\natexlab{}.
\newblock \showarticletitle{Adversarial Robustness through Local
  Linearization}. In \bibinfo{booktitle}{\emph{NeurIPS}}.
\newblock


\bibitem[Raffel et~al\mbox{.}(2019)]%
        {Raffel2019ExploringTL}
\bibfield{author}{\bibinfo{person}{Colin Raffel}, \bibinfo{person}{Noam~M.
  Shazeer}, \bibinfo{person}{Adam Roberts}, \bibinfo{person}{Katherine Lee},
  \bibinfo{person}{Sharan Narang}, \bibinfo{person}{Michael Matena},
  \bibinfo{person}{Yanqi Zhou}, \bibinfo{person}{Wei Li}, {and}
  \bibinfo{person}{Peter~J. Liu}.} \bibinfo{year}{2019}\natexlab{}.
\newblock \showarticletitle{Exploring the Limits of Transfer Learning with a
  Unified Text-to-Text Transformer}.
\newblock \bibinfo{journal}{\emph{ArXiv}}  \bibinfo{volume}{abs/1910.10683}
  (\bibinfo{year}{2019}).
\newblock
\urldef\tempurl%
\url{https://api.semanticscholar.org/CorpusID:204838007}
\showURL{%
\tempurl}


\bibitem[Rice et~al\mbox{.}(2020)]%
        {Rice2020OverfittingIA}
\bibfield{author}{\bibinfo{person}{Leslie Rice}, \bibinfo{person}{Eric Wong},
  {and} \bibinfo{person}{J.~Zico Kolter}.} \bibinfo{year}{2020}\natexlab{}.
\newblock \showarticletitle{Overfitting in adversarially robust deep learning}.
  In \bibinfo{booktitle}{\emph{ICML}}.
\newblock


\bibitem[Schmidt et~al\mbox{.}(2018)]%
        {Schmidt18MoreData}
\bibfield{author}{\bibinfo{person}{Ludwig Schmidt}, \bibinfo{person}{Shibani
  Santurkar}, \bibinfo{person}{Dimitris Tsipras}, \bibinfo{person}{Kunal
  Talwar}, {and} \bibinfo{person}{Aleksander Madry}.}
  \bibinfo{year}{2018}\natexlab{}.
\newblock \showarticletitle{Adversarially Robust Generalization Requires More
  Data}. In \bibinfo{booktitle}{\emph{Proceedings of the 32nd International
  Conference on Neural Information Processing Systems}} (Montr\'{e}al, Canada)
  \emph{(\bibinfo{series}{NIPS'18})}. \bibinfo{publisher}{Curran Associates
  Inc.}, \bibinfo{address}{Red Hook, NY, USA}, \bibinfo{pages}{5019–5031}.
\newblock


\bibitem[Shafahi et~al\mbox{.}(2019)]%
        {Shafahi2019AdversarialTF}
\bibfield{author}{\bibinfo{person}{A. Shafahi}, \bibinfo{person}{Mahyar
  Najibi}, \bibinfo{person}{Amin Ghiasi}, \bibinfo{person}{Zheng Xu},
  \bibinfo{person}{John~P. Dickerson}, \bibinfo{person}{Christoph Studer},
  \bibinfo{person}{L. Davis}, \bibinfo{person}{G. Taylor}, {and}
  \bibinfo{person}{T. Goldstein}.} \bibinfo{year}{2019}\natexlab{}.
\newblock \showarticletitle{Adversarial Training for Free!}. In
  \bibinfo{booktitle}{\emph{NeurIPS}}.
\newblock


\bibitem[Shawe-Taylor and Williamson(1997)]%
        {Williamson97PAC-Bayes}
\bibfield{author}{\bibinfo{person}{John Shawe-Taylor} {and}
  \bibinfo{person}{Robert~C. Williamson}.} \bibinfo{year}{1997}\natexlab{}.
\newblock \showarticletitle{A PAC Analysis of a Bayesian Estimator}. In
  \bibinfo{booktitle}{\emph{Proceedings of the Tenth Annual Conference on
  Computational Learning Theory}} (Nashville, Tennessee, USA)
  \emph{(\bibinfo{series}{COLT '97})}. \bibinfo{publisher}{Association for
  Computing Machinery}, \bibinfo{address}{New York, NY, USA},
  \bibinfo{pages}{2–9}.
\newblock
\showISBNx{0897918916}
\urldef\tempurl%
\url{https://doi.org/10.1145/267460.267466}
\showDOI{\tempurl}


\bibitem[Simonyan and Zisserman(2015)]%
        {Simonyan14VGG}
\bibfield{author}{\bibinfo{person}{Karen Simonyan} {and}
  \bibinfo{person}{Andrew Zisserman}.} \bibinfo{year}{2015}\natexlab{}.
\newblock \showarticletitle{Very Deep Convolutional Networks for Large-Scale
  Image Recognition}. In \bibinfo{booktitle}{\emph{3rd International Conference
  on Learning Representations, {ICLR} 2015, San Diego, CA, USA, May 7-9, 2015,
  Conference Track Proceedings}}.
\newblock
\urldef\tempurl%
\url{http://arxiv.org/abs/1409.1556}
\showURL{%
\tempurl}


\bibitem[Sinha et~al\mbox{.}(2022)]%
        {Sinha2022UnderstandingAE}
\bibfield{author}{\bibinfo{person}{Sanchit Sinha}, \bibinfo{person}{Mengdi
  Huai}, \bibinfo{person}{Jianhui Sun}, {and} \bibinfo{person}{Aidong Zhang}.}
  \bibinfo{year}{2022}\natexlab{}.
\newblock \showarticletitle{Understanding and Enhancing Robustness of
  Concept-based Models}.
\newblock \bibinfo{journal}{\emph{ArXiv}}  \bibinfo{volume}{abs/2211.16080}
  (\bibinfo{year}{2022}).
\newblock


\bibitem[Smith and Le(2018)]%
        {Smith18Bayesian}
\bibfield{author}{\bibinfo{person}{Sam Smith} {and} \bibinfo{person}{Quoc~V.
  Le}.} \bibinfo{year}{2018}\natexlab{}.
\newblock \showarticletitle{A Bayesian Perspective on Generalization and
  Stochastic Gradient Descent}.
\newblock
\urldef\tempurl%
\url{https://openreview.net/pdf?id=BJij4yg0Z}
\showURL{%
\tempurl}


\bibitem[Smith et~al\mbox{.}(2018)]%
        {Smith18DontDecay}
\bibfield{author}{\bibinfo{person}{Samuel~L. Smith},
  \bibinfo{person}{Pieter-Jan Kindermans}, {and} \bibinfo{person}{Quoc~V. Le}.}
  \bibinfo{year}{2018}\natexlab{}.
\newblock \showarticletitle{Don't Decay the Learning Rate, Increase the Batch
  Size}. In \bibinfo{booktitle}{\emph{International Conference on Learning
  Representations}}.
\newblock
\urldef\tempurl%
\url{https://openreview.net/forum?id=B1Yy1BxCZ}
\showURL{%
\tempurl}


\bibitem[Sun et~al\mbox{.}(2022)]%
        {Sun22KDDHyper}
\bibfield{author}{\bibinfo{person}{Jianhui Sun}, \bibinfo{person}{Mengdi Huai},
  \bibinfo{person}{Kishlay Jha}, {and} \bibinfo{person}{Aidong Zhang}.}
  \bibinfo{year}{2022}\natexlab{}.
\newblock \showarticletitle{Demystify Hyperparameters for Stochastic
  Optimization with Transferable Representations}. In
  \bibinfo{booktitle}{\emph{Proceedings of the 28th ACM SIGKDD Conference on
  Knowledge Discovery and Data Mining}} (Washington DC, USA)
  \emph{(\bibinfo{series}{KDD '22})}. \bibinfo{publisher}{Association for
  Computing Machinery}, \bibinfo{address}{New York, NY, USA},
  \bibinfo{pages}{1706–1716}.
\newblock
\showISBNx{9781450393850}
\urldef\tempurl%
\url{https://doi.org/10.1145/3534678.3539298}
\showDOI{\tempurl}


\bibitem[Sun et~al\mbox{.}(2021)]%
        {Sun21KDDHyperparameter}
\bibfield{author}{\bibinfo{person}{Jianhui Sun}, \bibinfo{person}{Ying Yang},
  \bibinfo{person}{Guangxu Xun}, {and} \bibinfo{person}{Aidong Zhang}.}
  \bibinfo{year}{2021}\natexlab{}.
\newblock \showarticletitle{A Stagewise Hyperparameter Scheduler to Improve
  Generalization}. In \bibinfo{booktitle}{\emph{Proceedings of the 27th ACM
  SIGKDD Conference on Knowledge Discovery \& Data Mining}} (Virtual Event,
  Singapore) \emph{(\bibinfo{series}{KDD '21})}.
  \bibinfo{publisher}{Association for Computing Machinery},
  \bibinfo{address}{New York, NY, USA}, \bibinfo{pages}{1530–1540}.
\newblock
\showISBNx{9781450383325}
\urldef\tempurl%
\url{https://doi.org/10.1145/3447548.3467287}
\showDOI{\tempurl}


\bibitem[Sun et~al\mbox{.}(2023)]%
        {Sun2023TKDD}
\bibfield{author}{\bibinfo{person}{Jianhui Sun}, \bibinfo{person}{Ying Yang},
  \bibinfo{person}{Guangxu Xun}, {and} \bibinfo{person}{Aidong Zhang}.}
  \bibinfo{year}{2023}\natexlab{}.
\newblock \showarticletitle{Scheduling Hyperparameters to Improve
  Generalization: From Centralized SGD to Asynchronous SGD}.
\newblock \bibinfo{journal}{\emph{ACM Trans. Knowl. Discov. Data}}
  \bibinfo{volume}{17}, \bibinfo{number}{2}, Article \bibinfo{articleno}{29}
  (\bibinfo{date}{mar} \bibinfo{year}{2023}), \bibinfo{numpages}{37}~pages.
\newblock
\showISSN{1556-4681}
\urldef\tempurl%
\url{https://doi.org/10.1145/3544782}
\showDOI{\tempurl}


\bibitem[Suo et~al\mbox{.}(2019)]%
        {SuoICHI19}
\bibfield{author}{\bibinfo{person}{Qiuling Suo}, \bibinfo{person}{Liuyi Yao},
  \bibinfo{person}{Guangxu Xun}, \bibinfo{person}{Jianhui Sun}, {and}
  \bibinfo{person}{Aidong Zhang}.} \bibinfo{year}{2019}\natexlab{}.
\newblock \showarticletitle{Recurrent Imputation for Multivariate Time Series
  with Missing Values}. In \bibinfo{booktitle}{\emph{2019 {IEEE} International
  Conference on Healthcare Informatics, {ICHI} 2019, Xi'an, China, June 10-13,
  2019}}. \bibinfo{publisher}{{IEEE}}, \bibinfo{pages}{1--3}.
\newblock
\urldef\tempurl%
\url{https://doi.org/10.1109/ICHI.2019.8904638}
\showDOI{\tempurl}


\bibitem[Szegedy et~al\mbox{.}(2014)]%
        {Szegedy2014IntriguingPO}
\bibfield{author}{\bibinfo{person}{Christian Szegedy},
  \bibinfo{person}{Wojciech Zaremba}, \bibinfo{person}{Ilya Sutskever},
  \bibinfo{person}{Joan Bruna}, \bibinfo{person}{D. Erhan}, \bibinfo{person}{I.
  Goodfellow}, {and} \bibinfo{person}{R. Fergus}.}
  \bibinfo{year}{2014}\natexlab{}.
\newblock \showarticletitle{Intriguing properties of neural networks}.
\newblock \bibinfo{journal}{\emph{CoRR}}  \bibinfo{volume}{abs/1312.6199}
  (\bibinfo{year}{2014}).
\newblock


\bibitem[Tramèr et~al\mbox{.}(2018)]%
        {tramer2018ensemble}
\bibfield{author}{\bibinfo{person}{Florian Tramèr}, \bibinfo{person}{Alexey
  Kurakin}, \bibinfo{person}{Nicolas Papernot}, \bibinfo{person}{Ian
  Goodfellow}, \bibinfo{person}{Dan Boneh}, {and} \bibinfo{person}{Patrick
  McDaniel}.} \bibinfo{year}{2018}\natexlab{}.
\newblock \showarticletitle{Ensemble Adversarial Training: Attacks and
  Defenses}. In \bibinfo{booktitle}{\emph{International Conference on Learning
  Representations}}.
\newblock
\urldef\tempurl%
\url{https://openreview.net/forum?id=rkZvSe-RZ}
\showURL{%
\tempurl}


\bibitem[Uesato et~al\mbox{.}(2019)]%
        {Uesato2019AreLR}
\bibfield{author}{\bibinfo{person}{Jonathan Uesato},
  \bibinfo{person}{Jean-Baptiste Alayrac}, \bibinfo{person}{Po-Sen Huang},
  \bibinfo{person}{Robert Stanforth}, \bibinfo{person}{Alhussein Fawzi}, {and}
  \bibinfo{person}{Pushmeet Kohli}.} \bibinfo{year}{2019}\natexlab{}.
\newblock \showarticletitle{Are Labels Required for Improving Adversarial
  Robustness?}. In \bibinfo{booktitle}{\emph{NeurIPS}}.
\newblock


\bibitem[Wang and Yu(2019)]%
        {wang2018a}
\bibfield{author}{\bibinfo{person}{Huaxia Wang} {and} \bibinfo{person}{Chun-Nam
  Yu}.} \bibinfo{year}{2019}\natexlab{}.
\newblock \showarticletitle{A Direct Approach to Robust Deep Learning Using
  Adversarial Networks}. In \bibinfo{booktitle}{\emph{International Conference
  on Learning Representations}}.
\newblock
\urldef\tempurl%
\url{https://openreview.net/forum?id=S1lIMn05F7}
\showURL{%
\tempurl}


\bibitem[Wang et~al\mbox{.}(2019)]%
        {pmlr-v97-wang19i}
\bibfield{author}{\bibinfo{person}{Yisen Wang}, \bibinfo{person}{Xingjun Ma},
  \bibinfo{person}{James Bailey}, \bibinfo{person}{Jinfeng Yi},
  \bibinfo{person}{Bowen Zhou}, {and} \bibinfo{person}{Quanquan Gu}.}
  \bibinfo{year}{2019}\natexlab{}.
\newblock \showarticletitle{On the Convergence and Robustness of Adversarial
  Training}. In \bibinfo{booktitle}{\emph{Proceedings of the 36th International
  Conference on Machine Learning}} \emph{(\bibinfo{series}{Proceedings of
  Machine Learning Research}, Vol.~\bibinfo{volume}{97})},
  \bibfield{editor}{\bibinfo{person}{Kamalika Chaudhuri} {and}
  \bibinfo{person}{Ruslan Salakhutdinov}} (Eds.). \bibinfo{publisher}{PMLR},
  \bibinfo{pages}{6586--6595}.
\newblock
\urldef\tempurl%
\url{https://proceedings.mlr.press/v97/wang19i.html}
\showURL{%
\tempurl}


\bibitem[Wei et~al\mbox{.}(2023)]%
        {Wei23ICML}
\bibfield{author}{\bibinfo{person}{Tianxin Wei}, \bibinfo{person}{Zeming Guo},
  \bibinfo{person}{Yifan Chen}, {and} \bibinfo{person}{Jingrui He}.}
  \bibinfo{year}{2023}\natexlab{}.
\newblock \showarticletitle{NTK-approximating MLP Fusion for Efficient Language
  Model Fine-tuning}. In \bibinfo{booktitle}{\emph{Proceedings of the 40th
  International Conference on Machine Learning}}
  \emph{(\bibinfo{series}{Proceedings of Machine Learning Research},
  Vol.~\bibinfo{volume}{202})}, \bibfield{editor}{\bibinfo{person}{Andreas
  Krause}, \bibinfo{person}{Emma Brunskill}, \bibinfo{person}{Kyunghyun Cho},
  \bibinfo{person}{Barbara Engelhardt}, \bibinfo{person}{Sivan Sabato}, {and}
  \bibinfo{person}{Jonathan Scarlett}} (Eds.). \bibinfo{publisher}{PMLR},
  \bibinfo{pages}{36821--36838}.
\newblock


\bibitem[Wei and He(2022)]%
        {Wei22Recommender}
\bibfield{author}{\bibinfo{person}{Tianxin Wei} {and} \bibinfo{person}{Jingrui
  He}.} \bibinfo{year}{2022}\natexlab{}.
\newblock \showarticletitle{Comprehensive Fair Meta-Learned Recommender
  System}. In \bibinfo{booktitle}{\emph{Proceedings of the 28th ACM SIGKDD
  Conference on Knowledge Discovery and Data Mining}} (Washington DC, USA)
  \emph{(\bibinfo{series}{KDD '22})}. \bibinfo{publisher}{Association for
  Computing Machinery}, \bibinfo{address}{New York, NY, USA},
  \bibinfo{pages}{1989–1999}.
\newblock
\showISBNx{9781450393850}
\urldef\tempurl%
\url{https://doi.org/10.1145/3534678.3539269}
\showDOI{\tempurl}


\bibitem[Wen et~al\mbox{.}(2019)]%
        {Wen2019InterplayBO}
\bibfield{author}{\bibinfo{person}{Yeming Wen}, \bibinfo{person}{Kevin Luk},
  \bibinfo{person}{Maxime Gazeau}, \bibinfo{person}{Guodong Zhang},
  \bibinfo{person}{Harris Chan}, {and} \bibinfo{person}{Jimmy Ba}.}
  \bibinfo{year}{2019}\natexlab{}.
\newblock \showarticletitle{Interplay Between Optimization and Generalization
  of Stochastic Gradient Descent with Covariance Noise}.
\newblock \bibinfo{journal}{\emph{ArXiv}}  \bibinfo{volume}{abs/1902.08234}
  (\bibinfo{year}{2019}).
\newblock


\bibitem[Wong et~al\mbox{.}(2020)]%
        {Wong2020Fast}
\bibfield{author}{\bibinfo{person}{Eric Wong}, \bibinfo{person}{Leslie Rice},
  {and} \bibinfo{person}{J.~Zico Kolter}.} \bibinfo{year}{2020}\natexlab{}.
\newblock \showarticletitle{Fast is better than free: Revisiting adversarial
  training}. In \bibinfo{booktitle}{\emph{International Conference on Learning
  Representations}}.
\newblock
\urldef\tempurl%
\url{https://openreview.net/forum?id=BJx040EFvH}
\showURL{%
\tempurl}


\bibitem[Wu et~al\mbox{.}(2019)]%
        {Wu2019OnTN}
\bibfield{author}{\bibinfo{person}{Jingfeng Wu}, \bibinfo{person}{Wenqing Hu},
  \bibinfo{person}{Haoyi Xiong}, \bibinfo{person}{Jun Huan},
  \bibinfo{person}{Vladimir Braverman}, {and} \bibinfo{person}{Zhanxing Zhu}.}
  \bibinfo{year}{2019}\natexlab{}.
\newblock \showarticletitle{On the Noisy Gradient Descent that Generalizes as
  SGD}. In \bibinfo{booktitle}{\emph{International Conference on Machine
  Learning}}.
\newblock


\bibitem[Wu et~al\mbox{.}(2018)]%
        {WuME18NIPS}
\bibfield{author}{\bibinfo{person}{Lei Wu}, \bibinfo{person}{Chao Ma}, {and}
  \bibinfo{person}{Weinan E}.} \bibinfo{year}{2018}\natexlab{}.
\newblock \showarticletitle{How SGD Selects the Global Minima in
  Over-parameterized Learning: A Dynamical Stability Perspective}. In
  \bibinfo{booktitle}{\emph{NeurIPS}}. \bibinfo{pages}{8289--8298}.
\newblock


\bibitem[Wu et~al\mbox{.}(2023)]%
        {wu2023decentralized}
\bibfield{author}{\bibinfo{person}{Xidong Wu}, \bibinfo{person}{Zhengmian Hu},
  {and} \bibinfo{person}{Heng Huang}.} \bibinfo{year}{2023}\natexlab{}.
\newblock \showarticletitle{Decentralized riemannian algorithm for nonconvex
  minimax problems}.
\newblock \bibinfo{journal}{\emph{arXiv preprint arXiv:2302.03825}}
  (\bibinfo{year}{2023}).
\newblock


\bibitem[Wu et~al\mbox{.}(2022a)]%
        {Wu2022AdversarialWP}
\bibfield{author}{\bibinfo{person}{Yihan Wu}, \bibinfo{person}{Aleksandar
  Bojchevski}, {and} \bibinfo{person}{Heng Huang}.}
  \bibinfo{year}{2022}\natexlab{a}.
\newblock \showarticletitle{Adversarial Weight Perturbation Improves
  Generalization in Graph Neural Network}.
\newblock \bibinfo{journal}{\emph{ArXiv}}  \bibinfo{volume}{abs/2212.04983}
  (\bibinfo{year}{2022}).
\newblock


\bibitem[Wu et~al\mbox{.}(2022b)]%
        {wu22adversarial}
\bibfield{author}{\bibinfo{person}{Yihan Wu}, \bibinfo{person}{Hongyang Zhang},
  {and} \bibinfo{person}{Heng Huang}.} \bibinfo{year}{2022}\natexlab{b}.
\newblock \showarticletitle{{R}etrieval{G}uard: Provably Robust 1-Nearest
  Neighbor Image Retrieval}. In \bibinfo{booktitle}{\emph{Proceedings of the
  39th International Conference on Machine Learning}}
  \emph{(\bibinfo{series}{Proceedings of Machine Learning Research},
  Vol.~\bibinfo{volume}{162})}, \bibfield{editor}{\bibinfo{person}{Kamalika
  Chaudhuri}, \bibinfo{person}{Stefanie Jegelka}, \bibinfo{person}{Le~Song},
  \bibinfo{person}{Csaba Szepesvari}, \bibinfo{person}{Gang Niu}, {and}
  \bibinfo{person}{Sivan Sabato}} (Eds.). \bibinfo{publisher}{PMLR},
  \bibinfo{pages}{24266--24279}.
\newblock
\urldef\tempurl%
\url{https://proceedings.mlr.press/v162/wu22o.html}
\showURL{%
\tempurl}


\bibitem[Xie et~al\mbox{.}(2021)]%
        {xie2021positive}
\bibfield{author}{\bibinfo{person}{Zeke Xie}, \bibinfo{person}{Li Yuan},
  \bibinfo{person}{Zhanxing Zhu}, {and} \bibinfo{person}{Masashi Sugiyama}.}
  \bibinfo{year}{2021}\natexlab{}.
\newblock \showarticletitle{Positive-Negative Momentum: Manipulating Stochastic
  Gradient Noise to Improve Generalization}. In
  \bibinfo{booktitle}{\emph{Proceedings of the 38th International Conference on
  Machine Learning}} \emph{(\bibinfo{series}{Proceedings of Machine Learning
  Research}, Vol.~\bibinfo{volume}{139})},
  \bibfield{editor}{\bibinfo{person}{Marina Meila} {and} \bibinfo{person}{Tong
  Zhang}} (Eds.). \bibinfo{publisher}{PMLR}, \bibinfo{pages}{11448--11458}.
\newblock


\bibitem[Xu et~al\mbox{.}(2020)]%
        {Xu2020ExploringMR}
\bibfield{author}{\bibinfo{person}{Zheng Xu}, \bibinfo{person}{Ali Shafahi},
  {and} \bibinfo{person}{Tom Goldstein}.} \bibinfo{year}{2020}\natexlab{}.
\newblock \showarticletitle{Exploring Model Robustness with Adaptive Networks
  and Improved Adversarial Training}.
\newblock \bibinfo{journal}{\emph{ArXiv}}  \bibinfo{volume}{abs/2006.00387}
  (\bibinfo{year}{2020}).
\newblock


\bibitem[Xun et~al\mbox{.}(2020)]%
        {Xun2020CorrelationNF}
\bibfield{author}{\bibinfo{person}{Guangxu Xun}, \bibinfo{person}{Kishlay Jha},
  \bibinfo{person}{Jianhui Sun}, {and} \bibinfo{person}{Aidong Zhang}.}
  \bibinfo{year}{2020}\natexlab{}.
\newblock \showarticletitle{Correlation Networks for Extreme Multi-label Text
  Classification}.
\newblock \bibinfo{journal}{\emph{Proceedings of the 26th ACM SIGKDD
  International Conference on Knowledge Discovery \& Data Mining}}
  (\bibinfo{year}{2020}).
\newblock


\bibitem[Zhang et~al\mbox{.}(2017)]%
        {Zhang17Rethinking}
\bibfield{author}{\bibinfo{person}{Chiyuan Zhang}, \bibinfo{person}{Samy
  Bengio}, \bibinfo{person}{Moritz Hardt}, \bibinfo{person}{Benjamin Recht},
  {and} \bibinfo{person}{Oriol Vinyals}.} \bibinfo{year}{2017}\natexlab{}.
\newblock \showarticletitle{Understanding deep learning requires rethinking
  generalization}.
\newblock
\urldef\tempurl%
\url{https://arxiv.org/abs/1611.03530}
\showURL{%
\tempurl}


\bibitem[Zhang et~al\mbox{.}(2019b)]%
        {zhang2019you}
\bibfield{author}{\bibinfo{person}{Dinghuai Zhang}, \bibinfo{person}{Tianyuan
  Zhang}, \bibinfo{person}{Yiping Lu}, \bibinfo{person}{Zhanxing Zhu}, {and}
  \bibinfo{person}{Bin Dong}.} \bibinfo{year}{2019}\natexlab{b}.
\newblock \showarticletitle{You Only Propagate Once: Accelerating Adversarial
  Training via Maximal Principle}.
\newblock \bibinfo{journal}{\emph{arXiv preprint arXiv:1905.00877}}
  (\bibinfo{year}{2019}).
\newblock


\bibitem[Zhang and Wang(2019)]%
        {Zhang2019DefenseAA}
\bibfield{author}{\bibinfo{person}{Haichao Zhang} {and} \bibinfo{person}{Jianyu
  Wang}.} \bibinfo{year}{2019}\natexlab{}.
\newblock \showarticletitle{Defense Against Adversarial Attacks Using Feature
  Scattering-based Adversarial Training}. In
  \bibinfo{booktitle}{\emph{NeurIPS}}.
\newblock


\bibitem[Zhang et~al\mbox{.}(2019a)]%
        {Zhang2019TheoreticallyPT}
\bibfield{author}{\bibinfo{person}{Hongyang Zhang}, \bibinfo{person}{Yaodong
  Yu}, \bibinfo{person}{Jiantao Jiao}, \bibinfo{person}{E. Xing},
  \bibinfo{person}{L. Ghaoui}, {and} \bibinfo{person}{Michael~I. Jordan}.}
  \bibinfo{year}{2019}\natexlab{a}.
\newblock \showarticletitle{Theoretically Principled Trade-off between
  Robustness and Accuracy}. In \bibinfo{booktitle}{\emph{ICML}}.
\newblock


\bibitem[Zhang et~al\mbox{.}(2022)]%
        {zhang2022revisiting}
\bibfield{author}{\bibinfo{person}{Yihua Zhang}, \bibinfo{person}{Guanhua
  Zhang}, \bibinfo{person}{Prashant Khanduri}, \bibinfo{person}{Mingyi Hong},
  \bibinfo{person}{Shiyu Chang}, {and} \bibinfo{person}{Sijia Liu}.}
  \bibinfo{year}{2022}\natexlab{}.
\newblock \showarticletitle{Revisiting and advancing fast adversarial training
  through the lens of bi-level optimization}. In
  \bibinfo{booktitle}{\emph{International Conference on Machine Learning}}.
  PMLR, \bibinfo{pages}{26693--26712}.
\newblock


\bibitem[Zhou et~al\mbox{.}(2023)]%
        {Zhou23TKDE}
\bibfield{author}{\bibinfo{person}{Qinghai Zhou}, \bibinfo{person}{Liangyue
  Li}, \bibinfo{person}{Nan Cao}, \bibinfo{person}{Lei Ying}, {and}
  \bibinfo{person}{Hanghang Tong}.} \bibinfo{year}{2023}\natexlab{}.
\newblock \showarticletitle{Adversarial Attacks on Multi-Network Mining:
  Problem Definition and Fast Solutions}.
\newblock \bibinfo{journal}{\emph{IEEE Transactions on Knowledge and Data
  Engineering}} \bibinfo{volume}{35}, \bibinfo{number}{1}
  (\bibinfo{year}{2023}), \bibinfo{pages}{96--107}.
\newblock
\urldef\tempurl%
\url{https://doi.org/10.1109/TKDE.2021.3078634}
\showDOI{\tempurl}


\bibitem[Zhu et~al\mbox{.}(2019)]%
        {zhu2019anisotropic}
\bibfield{author}{\bibinfo{person}{Zhanxing Zhu}, \bibinfo{person}{Jingfeng
  Wu}, \bibinfo{person}{Bing Yu}, \bibinfo{person}{Lei Wu}, {and}
  \bibinfo{person}{Jinwen Ma}.} \bibinfo{year}{2019}\natexlab{}.
\newblock \bibinfo{title}{The Anisotropic Noise in Stochastic Gradient Descent:
  Its Behavior of Escaping from Minima and Regularization Effects}.
\newblock
\newblock
\urldef\tempurl%
\url{https://openreview.net/forum?id=H1M7soActX}
\showURL{%
\tempurl}


\bibitem[Z\"{u}gner et~al\mbox{.}(2018)]%
        {zugner18adversarialgraph}
\bibfield{author}{\bibinfo{person}{Daniel Z\"{u}gner}, \bibinfo{person}{Amir
  Akbarnejad}, {and} \bibinfo{person}{Stephan G\"{u}nnemann}.}
  \bibinfo{year}{2018}\natexlab{}.
\newblock \showarticletitle{Adversarial Attacks on Neural Networks for Graph
  Data}. In \bibinfo{booktitle}{\emph{Proceedings of the 24th ACM SIGKDD
  International Conference on Knowledge Discovery \& Data Mining}} (London,
  United Kingdom) \emph{(\bibinfo{series}{KDD '18})}.
  \bibinfo{publisher}{Association for Computing Machinery},
  \bibinfo{address}{New York, NY, USA}, \bibinfo{pages}{2847–2856}.
\newblock
\showISBNx{9781450355520}
\urldef\tempurl%
\url{https://doi.org/10.1145/3219819.3220078}
\showDOI{\tempurl}


\end{thebibliography}

\newpage
\appendix

\section{Appendix}
\label{appendix_section}

\subsection{Related Work}
\label{related_work}

We summarize the related works in (A) Adversarial Training; (B) SDE Modeling of Stochastic Algorithms; and (C) Generalization and Hyperparameters in this section.

\subsubsection{Adversarial Training}

Deep learning models have been widely applied in many different domains, e.g. vision \cite{He2016DeepResNet,Huang2017DenseNet,dosovitskiy2020vit}, text \cite{Devlin2019BERT,Raffel2019ExploringTL,brown2020language}, graph \cite{Jure2017GNN,kipf2016gcn,DingZhou21WWW}, recommender system \cite{google16deep&wide,Wei22Recommender}, healthcare \cite{SuoICHI19,Xun2020CorrelationNF}, sports \cite{Mnih2013PlayingAW,Chen22CIKMSports,Chen2023ProfessionalBP}, while they are observed to be susceptible to human imperceptible adversarial attacks \cite{Szegedy2014IntriguingPO,Goodfellow2015ExplainingAH,nguyen2015adversarial,zugner18adversarialgraph,Huai20attack,Sinha2022UnderstandingAE,Zhou23TKDE}. We refer readers to a comprehensive overview of adversarial attacks and defenses and references therein \cite{Chakraborty2018AdversarialAA}. An incomplete list of recent advances would include \cite{tramer2018ensemble, Zhang2019TheoreticallyPT, pmlr-v97-wang19i, Qin2019AdversarialRT, wang2018a, Mao2019MetricLF, Carmon2019UnlabeledDI, hendrycks2019pretraining, wu22adversarial,Wu2022AdversarialWP, Shafahi2019AdversarialTF, zhang2019you, Wong2020Fast}. This study focuses on PGD-AT \cite{madry2018towards}, the most commonly used adversarial training strategy, and we view PGD-AT as a minimax optimization problem \cite{lin20minimaxa,lin20minimaxb}, where the inner maximization optimizes an adversary while the outer minimization robustifies the model parameters.

\subsubsection{SDE Modeling of Stochastic Algorithms}

Studying training dynamics is a very important perspective to probe the inner mechanism of deep learning training \cite{Li17SME,Mandt17SGDApproximateBayesian,Wei23ICML}. \cite{Li17SME,Mandt17SGDApproximateBayesian} are the first works that approximate discrete-time stochastic gradient descent by continuous-time SDE. \cite{Krichene17Mirror,An2018StochasticME,Gitman19Momentum,Cao2020ApproximationAC} extended SDE modeling to accelerated mirror descent, asynchronous SGD, momentum SGD, and generative adversarial networks, respectively. \cite{liu2021noise} studied the SDE approximation of SGD with a moderately large learning rate, while approximation in \cite{Mandt17SGDApproximateBayesian} works best with infinitesimal step size. \cite{Chaudhari2017DeepRP,xie2021positive} designed an entropy regularization and a noise injection method, respectively, motivated by the SDE characterization of SGD. \cite{Gu2021AT_SDE} attempted to model adversarial training dynamics via SDE, while did not recognize the connection between dynamics and generalization error.

\subsubsection{Generalization and Stochastic Noise}

One of the goals of this paper is to theoretically and empirically study how stochastic noise impacts generalization in adversarial training. The research is mainly divided into two lines, the impact of hyperparameters on noise and directly injection of external noise. 

Existing works on hyperparameters are mainly on non-adversarial training, e.g., many recent works empirically report the influence of hyperparameters in SGD, largely on $b$ and $\alpha$, and provide practical tuning guidelines. \cite{Keskar16Large-Batch} empirically showed that the large-batch training leads to sharp local minima which has poor generalization, while small-batches lead to flat minima which makes SGD generalize well. \cite{GoyalDGNWKTJH17LargeMinibatch,jastrzebski2018three} proposed the Linear Scaling Rule for adjusting $\alpha$ as a function of $b$ to maintain the generalization ability of SGD. \cite{Smith18Bayesian,Smith18DontDecay} suggested that increasing $b$ during the training can achieve similar result of decaying the learning rate $\alpha$. 

Our generalization analysis relies on PAC-Bayesian inequalities \cite{Williamson97PAC-Bayes,McAllester98PAC-Bayes,Guedj2019APO}. \cite{London2017APA,He19ControlBatch,Sun21KDDHyperparameter,Sun22KDDHyper,Sun2023TKDD} proved a PAC-Bayesian bound for vanilla SGD, SGD with momentum, asynchronous SGD, all in a benign environment. 

The first and only systematic study on hyperparameters of adversarial training is \cite{pang2021bag}, to our best knowledge. The authors carefully evaluated a wide range of training tricks, including early stopping, learning rate schedule, activation function, model architecture, optimizer and many others. However, their findings do not provide theoretical insights why certain tricks work or fail. Our study aims to bridge this gap and motivate our novel training algorithm through theoretical findings.

\subsection{Proof of Theorem \ref{pgd_at_generalization_theorem}}

In this section, we give the proof of Theorem \ref{pgd_at_generalization_theorem}. We only keep primary proof procedures and omit most of the algebraic transformations.

\subsubsection{Pseudocode of PGD-AT}

Constrained optimization is typically transformed into unconstrained optimization in real-world deployment. After adding a regularization term $R(\delta)$ in \eqref{adversarial_training_task_revised}, the constrained inner optimization $\delta_k=\Pi_{\Delta}(\delta_{k-1}+\frac{\alpha_{I}}{b}\sum_{j=1}^b\nabla_x l_{\theta_{t-1}}(\hat{x}_j+\delta_{k-1},y_{i_j}))$ (constrained by perturbation budget set $\Delta$) is transformed into an unconstrained optimization, with $\lambda$ a hyperparameter. As perturbation budget set $\Delta$ is typically in the form $\{\delta\in\mathbb{R}^d:\lVert\delta\rVert_p\le \epsilon\}$, i.e., the $L_p$ norm of perturbation is smaller than a constant $\epsilon$, objective function \eqref{adversarial_training_task_revised} is simply a Lagrange relaxation of \eqref{adversarial_training_task} with $R(\delta_i)=\lVert\delta\rVert_p-\epsilon$.

\begin{equation}
\label{adversarial_training_task_revised}
    \begin{gathered}
    \min_{\theta\in\mathbb{R}^{d_\theta}}\max_{\delta_i,i=1,\dots,N}\frac{1}{N}\sum_{i=1}^N J(\theta,x_i,\delta_i), \\
    \text{where} \quad J(\theta,x_i,\delta_i) = l(\theta,x_i+\delta_i,y_i)-\lambda R(\delta_i)
    \end{gathered}
\end{equation}

With the above modification, the following modified PGD-AT is also widely used (i.e., Algorithm \ref{PGD_AT_algorithm_revised}).

\begin{algorithm2e}
\SetAlgoVlined
\KwIn{Loss function $J(\theta,x,\delta)=l(\theta,x+\delta,y)-\lambda R(\delta)$, initialization $\theta_0$, total training steps $T$, PGD steps $K$, inner/outer learning rates $\alpha_{I}$/$\alpha_{O}$, batch size $b$;}
\SetAlgoLined
\For{$t\in\{1,2,...,T\}$}
{
    Sample a mini-batch of random examples $\zeta=\{(x_{i_j},y_{i_j})\}_{j=1}^{b}$\;
    Set $\delta_0=0,\hat{x}_j=x_{i_j}$\;
    \For{$k\in\{1,...,K\}$}
    {
    $\delta_k=\delta_{k-1}+\frac{\alpha_{I}}{b}\sum_{j=1}^b\nabla_\delta J(\theta_{t-1},\hat{x}_j,\delta_{k-1})$\;
    }
    $\theta_t=\theta_{t-1}-\alpha_{O}\hat{g}_{t-1}$, \\where $\hat{g}_{t-1}=\frac{1}{b}\sum_{j=1}^b\nabla_\theta J(\theta_{t-1},\hat{x}_j,\delta_K)$\;
}
return $\theta_T$
\caption{Modified PGD-AT}
\label{PGD_AT_algorithm_revised}
\end{algorithm2e}

\subsubsection{Proof of Theorem \ref{pgd_at_generalization_theorem}}

The representation of PGD-AT in SDE form is followed from \cite{Gu2021AT_SDE}. Recall PGD-AT is a min-max game \cite{huang2021efficient,zhang2022revisiting,wu2023decentralized} as in Algorithm \ref{PGD_AT_algorithm_revised}. Without loss of generality, we assume $\alpha_O=\alpha_I=\alpha$. The inner loop starts with a random initialization $\delta_0$, we have 
\begin{equation}
    \begin{gathered}
    \delta_1=\delta_0+\frac{\alpha}{b}\sum_{j=1}^b\nabla_\delta J(\theta_t,\hat{x}_j,0)=\frac{\alpha}{b}\sum_{j=1}^b\nabla_x l(\theta_t,\hat{x}_j)\\
    \delta_2=\delta_1+\frac{\alpha}{b}\sum_{j=1}^b\Big(\nabla_x l(\theta_t,\hat{x}_j+\delta_1)-\lambda \nabla_\delta R(\delta_1)\Big)
\end{gathered}
\end{equation}

Based on Taylor's expansion at $\delta=0$, we could get:

\begin{equation}
    \begin{gathered}
    \delta_2=\frac{2\alpha}{b}\sum_{j=1}^b\nabla_x l(\theta_t,\hat{x}_j) + O(\alpha^2)
\end{gathered}
\end{equation}

We could continue this calculation, we will see that for any $K$,

\begin{equation}
    \begin{gathered}
    \delta_K=\frac{K\alpha}{b}\sum_{j=1}^b\nabla_x l(\theta_t,\hat{x}_j) + O(\alpha^2)
\end{gathered}
\end{equation}

We only keep the $O(\alpha)$ term and higher order term is negligible. The outer loop updating dynamic will subsequently become,

\begin{equation}
    \begin{gathered}
        \theta_{t+1}=\theta_{t}-\frac{\alpha}{b}\sum_{j=1}^b\nabla_\theta J(\theta_{t},\hat{x}_j,\delta_K)\\
        =\theta_{t}-\frac{\alpha}{b}\sum_{j=1}^b\nabla_\theta l(\theta_t,\hat{x}_j+\frac{K\alpha}{b}\sum_{j=1}^b\nabla_x l(\theta_t,\hat{x}_j) + O(\alpha^2))\\
        =\theta_{t}-\frac{\alpha}{b}\sum_{j=1}^b\nabla_\theta l(\theta_t,\hat{x}_j)\\-\frac{K\alpha^2}{b^2}\sum_{i,j=1}^b \nabla_{x\theta}l(\theta_t,\hat{x}_j)\nabla_{x}l(\theta_t,\hat{x}_i)+O(\alpha^3)
    \end{gathered}
\end{equation}
where $\nabla_{x\theta}l$ is the Jacobian matrix, where $\frac{\partial^2 l}{\partial x_j \partial \theta_i}$ is calculated.

We compute the first and second order moments of the one-step difference $D=\theta_1-\theta_0$ in order to find a continuous-time SDE for PGD-AT, i.e., calculate $\mathbb{E}[D]$ and $\mathbb{E}[DD^T]$, where the expectation is taken with respect to the randomness of mini-batch sampling.

By some algebraic transformations, we are able to show the following results after omitting higher order term of $O(\alpha^3)$ \cite{Gu2021AT_SDE},

\begin{equation}
    \begin{gathered}
        \mathbb{E}[D]=-K\alpha^2\mathbb{E}[\nabla_{x\theta}l(\theta_0,x)]\mathbb{E}[\nabla_x l(\theta_0,x)]\\-\alpha\mathbb{E}[\nabla_\theta l(\theta_0,x)]
        -\frac{K\alpha^2}{b}\Big(\mathbb{E}[\nabla_{x\theta}l(\theta_0,x)\nabla_x l(\theta_0,x)]\\-\mathbb{E}[\nabla_{x\theta}l(\theta_0,x)]\mathbb{E}[\nabla_x l(\theta_0,x)]\Big)\\
        \mathbb{E}[DD^T]=\alpha^2\mathbb{E}[\nabla_\theta l(\theta_0,x)]\mathbb{E}[\nabla_\theta l(\theta_0,x)]^T\\
        +\frac{\alpha^2}{b}\text{Var}_x(\nabla_\theta l(\theta_0,x))
    \end{gathered}
\end{equation}

Let us notate,
\begin{equation}
    \begin{gathered}
        G(\theta) \triangleq g(\theta)+\frac{K\alpha}{2b}\Big(\mathbb{E}[\lVert \nabla_x l(\theta,x)\rVert^2]-\lVert D(\theta)\rVert^2 \Big)
    \end{gathered}
\end{equation}
for the sake of convenience. Here $g(\theta)\triangleq\mathbb{E}[l(\theta,x)]$, and $D(\theta)\triangleq\mathbb{E}[\nabla_x l(\theta,x)]$.

With $\mathbb{E}[D]$ and $\mathbb{E}[DD^T]$, we are able to show the following SDE could approximate the continuous-time dynamic of PGD-AT,

\begin{equation}
    \begin{gathered}
        d\theta=\Big(s_0+\alpha s_1 \Big)dt+\sigma dW_t
    \end{gathered}
\end{equation}
where $dW_t=\mathcal{N}(0,Idt)$ is a Wiener process. And $s_0$, $s_1$, and $\sigma$ are $-\nabla_\theta G(\theta)$, $-\frac{K}{2}\nabla_\theta(\lVert D(\theta)\rVert^2)-\frac{1}{4}\nabla_\theta (\lVert\nabla_\theta G(\theta)\rVert^2)$, and $\sqrt{\frac{\alpha}{b}}(\text{Var}_x(\nabla_\theta l(\theta,x)))^{\frac{1}{2}}$, respectively.

As we assume the risk function is locally quadratic, and gradient noise is Gaussian. Suppose Hessian matrix of risk function be $A$, and covariance matrix of Gaussian noise be $H=BB^T$. Consider the following second-order Taylor approximation of the loss function, $l(\theta,x)=\frac{1}{2}(\theta-x)^TA(\theta-x)-\text{Tr}(A)$. Gradient noise is equivalent to assuming $x$ is sampled from $\mathcal{N}(0,H)$ (without loss of generality, we could assume the data is centered at mean 0).

We could thus computing the following,

\begin{equation}
    \begin{gathered}
        g(\theta)\triangleq\mathbb{E}_x[l(\theta,x)]=\frac{1}{2}\theta^TA\theta\\
        \nabla_x l(\theta,x) = -\nabla_\theta l(\theta,x) = A(x-\theta),\qquad \nabla_{x\theta}l(\theta,x)=A
    \end{gathered}
\end{equation}

Thus, we consequently have, the drift term is $s_0+\alpha s_1=-(A+(K+\frac{1}{2})\alpha A^2)\theta$, and diffusion term $\sqrt{\frac{\alpha}{b}}(\text{Var}_x(\nabla_\theta l(\theta,x)))^{\frac{1}{2}}=\sqrt{\frac{\alpha}{b}}AB$, i.e., the continuous-time SDE of PGD-AT is,

\begin{equation}
    \begin{gathered}
    d\theta = fdt + \sigma dW_t, \quad \text{where} \quad dW_t=\mathcal{N}(0,Idt) \quad \text{is Wiener process},\\
    f = -(A+(K+\frac{1}{2})\alpha A^2)\theta \qquad \text{and} \quad \sigma = \sqrt{\frac{\alpha}{b}}AB
    \end{gathered}
\end{equation}

We know this is an Ornstein-Uhlenbeck (OU) process and it has a Gaussian distribution as its stationary distribution \cite{Mandt17SGDApproximateBayesian}. Suppose the stationary distribution of this stochastic process has covariance $\Sigma$. Using the same technique in \cite{Mandt17SGDApproximateBayesian,He19ControlBatch}, it is straightforward to verify that $\Sigma$ is explicit and the norm of $\Sigma$ is positively correlated with $\frac{\alpha}{b}$ and norm of $B$ (see e.g. Section 3.2 in \cite{Mandt17SGDApproximateBayesian}). Specifically, its explicit form is $\Sigma=\frac{\alpha}{2b}E$, where $E=A^2H\hat{A}^{-1}$ and $\hat{A}\triangleq A+(K+\frac{1}{2})\alpha A^2$. The proof is that, the above is an OU process and thus its analytic solution is $$\theta_t=\exp(-\hat{A}t)\theta_0+\sqrt{\frac{\alpha}{b}}\int_0^t\exp[-\hat{A}(t-s)]ABdW_s$$. By algebraic transformations, we could verify $\hat{A}\Sigma+\Sigma\hat{A}=\frac{\alpha}{b}A^2H$, where $H=BB^T$. When $\Sigma$ is symmetric (as it is covariance matrix), we get $\Sigma=\frac{\alpha}{2b}E$.

With Lemma \ref{PAC_Bayes_theorem}, we are ready to prove the last statement in Theorem \ref{pgd_at_generalization_theorem}. The posterior covariance is Gaussian with covariance $\Sigma$. We assume a plain Gaussian prior distribution $\mathcal{N}(\theta_0,\lambda_0I_d)$. The density of prior and posterior distributions:
\begin{equation}
    \begin{gathered}
    f_P=\frac{1}{\sqrt{2\pi\det(\lambda_0I_d)}}\exp\Big\{-\frac{1}{2}(\theta-\theta_0)^T(\lambda_0I_d)^{-1}(\theta-\theta_0)\Big\}\\
    f_Q=\frac{1}{\sqrt{2\pi\det(\Sigma)}}\exp\Big\{-\frac{1}{2}\theta^T\Sigma^{-1}\theta\Big\}
    \end{gathered}
\end{equation}

We observe the upper bound $\mathcal{G}$ in Lemma \ref{PAC_Bayes_theorem}, the only term that correlates with $\Sigma$ is $\text{KL}(Q||P)$. Therefore, we calculate their $\text{KL}(Q||P)$ as follows:
\begin{equation}
    \begin{gathered}
    \text{KL}(Q||P)=\int\Big(\frac{1}{2}\log\frac{\lvert\lambda_0I_d\rvert}{\lvert\Sigma\rvert}-\frac{1}{2}\theta^T\Sigma^{-1}\theta\\
    +\frac{1}{2}(\theta-\theta_0)^T(\lambda_0I_d)^{-1}(\theta-\theta_0)\Big)f_Q(\theta)d\theta\\
    =\frac{1}{2}\Big\{\text{tr}\big((\lambda_0I_d)^{-1}\Sigma\big)+\theta_0^T(\lambda_0I_d)^{-1}\theta_0-d+\log\frac{\lvert\lambda_0I_d\rvert}{\lvert\Sigma\rvert}\Big\}\\
    =\frac{1}{2\lambda_0}\theta_0^T\theta_0-\frac{d}{2}+\frac{d}{2}\log\lambda_0+\frac{1}{2\lambda_0}\text{tr}(\Sigma)-\frac{1}{2}\log\lvert\Sigma\rvert
    \end{gathered}
\end{equation}

The second statement indicates increasing diffusion would scale $\Sigma$ up to $c\Sigma$, where $c>1$, the problem is now how does $\text{KL}(Q||P)$ change with $c\Sigma$. The only terms that will change are $\frac{1}{2\lambda_0}\text{tr}(\Sigma)-\frac{1}{2}\log\lvert\Sigma\rvert$. Note that $\frac{1}{2\lambda_0}\text{tr}(\Sigma)$ will become $\frac{c}{2\lambda_0}\text{tr}(\Sigma)$, and $-\frac{1}{2}\log\lvert\Sigma\rvert$ will become $-\frac{1}{2}\log\lvert\Sigma\rvert-\frac{d}{2}\log c$. As $d$ is number of parameters and is potentially extremely large, $\frac{c}{2\lambda_0}\text{tr}(\Sigma)-\frac{1}{2}\log\lvert\Sigma\rvert-\frac{d}{2}\log c$ will be smaller than $\frac{1}{2\lambda_0}\text{tr}(\Sigma)-\frac{1}{2}\log\lvert\Sigma\rvert$. Therefore, $\text{KL}(Q||P)$ will decrease. And consequently $\mathcal{G}$ will decrease. More specifically, we plug in the explicit form of $\Sigma$, and can calculate the derivative $$\frac{\partial \text{KL(Q||P)}}{\partial r}=\frac{r}{4\lambda_0}\text{tr}(E)-\frac{d}{2}\log(\frac{r}{2})-\frac{\log|E|}{2}$$, where $r=\frac{\alpha}{b}$. We could easily verify $\frac{\partial \text{KL(Q||P)}}{\partial r}<0$ when $d>\frac{r\cdot\text{tr}(E)}{2\lambda_0}$. Thus, for sufficiently large $d$, larger $\frac{\alpha}{b}$ results in smaller $\mathcal{G}$. Similarly, we could show for norm of $B$. We complete our proof of the last statement.

\end{document}